\documentclass{article}

\usepackage[utf8]{inputenc}
\usepackage[T1]{fontenc}
\usepackage[preprint]{neurips_2026}

\usepackage{url}
\usepackage{booktabs}
\usepackage{amsfonts}
\usepackage{amsmath}
\usepackage{amssymb}
\usepackage{nicefrac}
\usepackage{microtype}
\usepackage{xcolor}
\usepackage{graphicx}
\usepackage{tabularx}
\usepackage{enumitem}
\usepackage{listings}
\usepackage{caption}
\usepackage{subcaption}
\usepackage{amsthm}
\usepackage{hyperref}
\hypersetup{colorlinks=true, linkcolor=blue, urlcolor=blue, citecolor=blue}

\newtheorem{theorem}{Theorem}[section]
\newtheorem{proposition}[theorem]{Proposition}

\newtheorem{corollary}[theorem]{Corollary}
\theoremstyle{definition}
\newtheorem{definition}[theorem]{Definition}
\newtheorem{assumption}[theorem]{Assumption}
\theoremstyle{remark}
\newtheorem{remark}[theorem]{Remark}

\definecolor{addblue}{RGB}{37, 99, 235}
\definecolor{fixorange}{RGB}{234, 88, 12}
\definecolor{removered}{RGB}{220, 38, 38}
\newcommand{\ADD}{\textcolor{addblue}{\textsc{Add}}}
\newcommand{\FIX}{\textcolor{fixorange}{\textsc{Fix}}}
\newcommand{\REMOVE}{\textcolor{removered}{\textsc{Remove}}}

\definecolor{cblkw}{RGB}{30, 64, 175}
\definecolor{cblnum}{RGB}{14, 116, 144}
\definecolor{cblstr}{RGB}{21, 128, 61}
\definecolor{cblcomment}{RGB}{100, 116, 139}
\lstdefinelanguage{CobolMotiv}{
  morekeywords={IDENTIFICATION,DIVISION,PROGRAM-ID,DATA,LINKAGE,SECTION,
    PIC,PROCEDURE,USING,EVALUATE,TRUE,WHEN,OTHER,MOVE,TO,IF,ADD,END-EVALUATE,
    END-IF,GOBACK},
  sensitive=true,
  morecomment=[l][\color{cblcomment}\itshape]{*},
  morestring=[b]',
}
\lstdefinestyle{calcdisc}{
  language=CobolMotiv,
  basicstyle=\ttfamily\scriptsize\color{black!85},
  keywordstyle=\color{cblkw}\bfseries,
  stringstyle=\color{cblstr},
  commentstyle=\color{cblcomment}\itshape,
  numbers=left,
  numberstyle=\ttfamily\tiny\color{black!40},
  numbersep=6pt,
  frame=none,
  xleftmargin=14pt,
  xrightmargin=0pt,
  aboveskip=0pt,
  belowskip=0pt,
  breaklines=false,
  columns=fullflexible,
  keepspaces=true,
  literate={>}{{$>$}}1 {'}{{\textquotesingle}}1,
}
\definecolor{reqidbg}{RGB}{96, 139, 211}
\newcommand{\reqid}[1]{\begingroup\setlength{\fboxsep}{2pt}%
  \colorbox{reqidbg}{\color{white}\scriptsize\ttfamily\bfseries #1}\endgroup}
\newcommand{\reqshall}{\textbf{shall}}
\newcommand{\reqcond}[1]{\textit{#1}}

\title{Fidelity Probes for Specification--Code Alignment}

\author{%
  Ferhat Erata\thanks{Corresponding author: \texttt{erata@amazon.com}} \quad Hao Zhou \quad Luke Huan \\
  AWS Agentic AI \\
}
\date{}

\begin{document}

\maketitle

\begin{abstract}
  We introduce \emph{fidelity probes}: natural-language questions generated from a reference artifact with code-derived ground-truth answers, answered from a candidate specification. The fraction of agreeing probes, which we call the \emph{fidelity}, decomposes into contradiction and coverage-gap rates that drive targeted spec edits to convergence. On a 15-program, roughly 12k-line COBOL benchmark (AWS CardDemo), we raise frozen-test specification fidelity from $0.63$ to $0.94$ over eight iterations, with the plateau location predicted by a two-state Markov fixed point $F^\dagger$ from just four iterations of rate data. Probes come from an LLM reading the code or from a static-analysis pipeline over its control-flow, data-flow, and system-dependence graphs, with a tunable mixture. A probe-resampling protocol with a frozen held-out set gives a Hoeffding-bounded overfitting discriminant; our measured train/test gap stays more than an order of magnitude below this envelope. Three graph-grounded mixtures lift fidelity by $+16$ to $+30$ points; cross-distribution evaluation shows the LLM and symbolic channels are empirically complementary. A cross-family generator sweep on five independent LLM lineages (Anthropic, DeepSeek, Google, Alibaba, OpenAI) confirms the convergence behaviour is not tied to any single model family: three of five non-Claude generators produce trajectories consistent with the Markov fixed-point prediction, and the frozen-test protocol actively falsifies the two generators whose probe distributions drift across iterations. The method applies to any pair of artifacts that are supposed to describe the same behaviour.
\end{abstract}

\begin{figure}[t]
  \centering
  \scriptsize
  \setlength{\tabcolsep}{2pt}
  \renewcommand{\arraystretch}{1.0}
  \begin{tabular}{@{}p{0.31\linewidth}@{\hspace{2pt}}p{0.31\linewidth}@{\hspace{2pt}}p{0.36\linewidth}@{}}
    \begin{minipage}[t]{\linewidth}
      \textbf{\footnotesize COBOL source (1/2)}\par\vspace{3.5pt}
      \hrule height 0.4pt\vspace{3pt}
      \begin{lstlisting}[style=calcdisc,firstnumber=1,basicstyle=\ttfamily\fontsize{6pt}{7pt}\selectfont\color{black!85}]
IDENTIFICATION DIVISION.
PROGRAM-ID. CALCDISC.
DATA DIVISION.
WORKING-STORAGE SECTION.
01 WS-BASE-PCT  PIC 9(3).
LINKAGE SECTION.
01 LS-AMOUNT        PIC 9(7)V99.
01 LS-DISCOUNT      PIC 9(3).
01 LS-CUSTOMER-TYPE PIC X(10).
PROCEDURE DIVISION USING LS-AMOUNT
    LS-DISCOUNT LS-CUSTOMER-TYPE.
MAIN-PARA.
  PERFORM 1000-CALC-BASE-TIER.
  PERFORM 2000-APPLY-PREMIUM.
  CALL 'AUDITDB' USING LS-DISCOUNT.
  GOBACK.
\end{lstlisting}
    \end{minipage}
     &
    \begin{minipage}[t]{\linewidth}
      \textbf{\footnotesize COBOL source (2/2)}\par\vspace{3.5pt}
      \hrule height 0.4pt\vspace{3pt}
      \begin{lstlisting}[style=calcdisc,firstnumber=17,basicstyle=\ttfamily\fontsize{6pt}{7pt}\selectfont\color{black!85}]
1000-CALC-BASE-TIER.
  EVALUATE TRUE
    WHEN LS-AMOUNT > 1000
      MOVE 20 TO WS-BASE-PCT
    WHEN LS-AMOUNT > 500
      MOVE 15 TO WS-BASE-PCT
    WHEN OTHER
      MOVE 5  TO WS-BASE-PCT
  END-EVALUATE.
  MOVE WS-BASE-PCT TO LS-DISCOUNT.

2000-APPLY-PREMIUM.
  IF LS-CUSTOMER-TYPE = 'PREMIUM'
    ADD 5 TO LS-DISCOUNT
  END-IF.
\end{lstlisting}
    \end{minipage}
     &
    \begin{minipage}[t]{\linewidth}
      \textbf{\footnotesize Behavioural requirements}\par\vspace{3.5pt}
      \hrule height 0.4pt\vspace{3pt}
      \raggedright\fontsize{6pt}{7pt}\selectfont
      \renewcommand{\reqid}[1]{\begingroup\setlength{\fboxsep}{1.5pt}%
        \colorbox{reqidbg}{\color{white}\fontsize{5.5pt}{6pt}\selectfont\ttfamily\bfseries #1}\endgroup}
      \reqid{REQ-DISC-001}\
      When the order amount is \reqcond{strictly greater than 1\,000}, the system \reqshall\ apply a base discount of \textbf{20\%}.\par\smallskip

      \reqid{REQ-DISC-002}\
      When the order amount is \reqcond{strictly greater than 500 and at most 1\,000}, the system \reqshall\ apply a base discount of \textbf{15\%}.\par\smallskip

      \reqid{REQ-DISC-003}\
      When the order amount is \reqcond{500 or less}, the system \reqshall\ apply a base discount of \textbf{5\%}.\par\smallskip

      \reqid{REQ-DISC-004}\
      When the customer placing the order is a \textbf{premium member}, the system \reqshall\ add an additional \textbf{5 percentage points} on top of whichever base discount applies under REQ-DISC-001--003.\par\smallskip

      \reqid{REQ-AUDIT-001}\
      Every computed discount \reqshall\ be forwarded to the audit subsystem.
    \end{minipage}
    \\
  \end{tabular}

  \vspace{6pt}
  \begin{tabular}{@{}p{0.40\linewidth}@{\hspace{2pt}}p{0.30\linewidth}@{\hspace{2pt}}p{0.28\linewidth}@{}}
    \begin{minipage}[t]{\linewidth}
      \centering
      \textbf{\footnotesize Control-flow graph (CFG)}\par\vspace{2pt}
      \hrule height 0.4pt\vspace{3pt}
      \includegraphics[width=\linewidth]{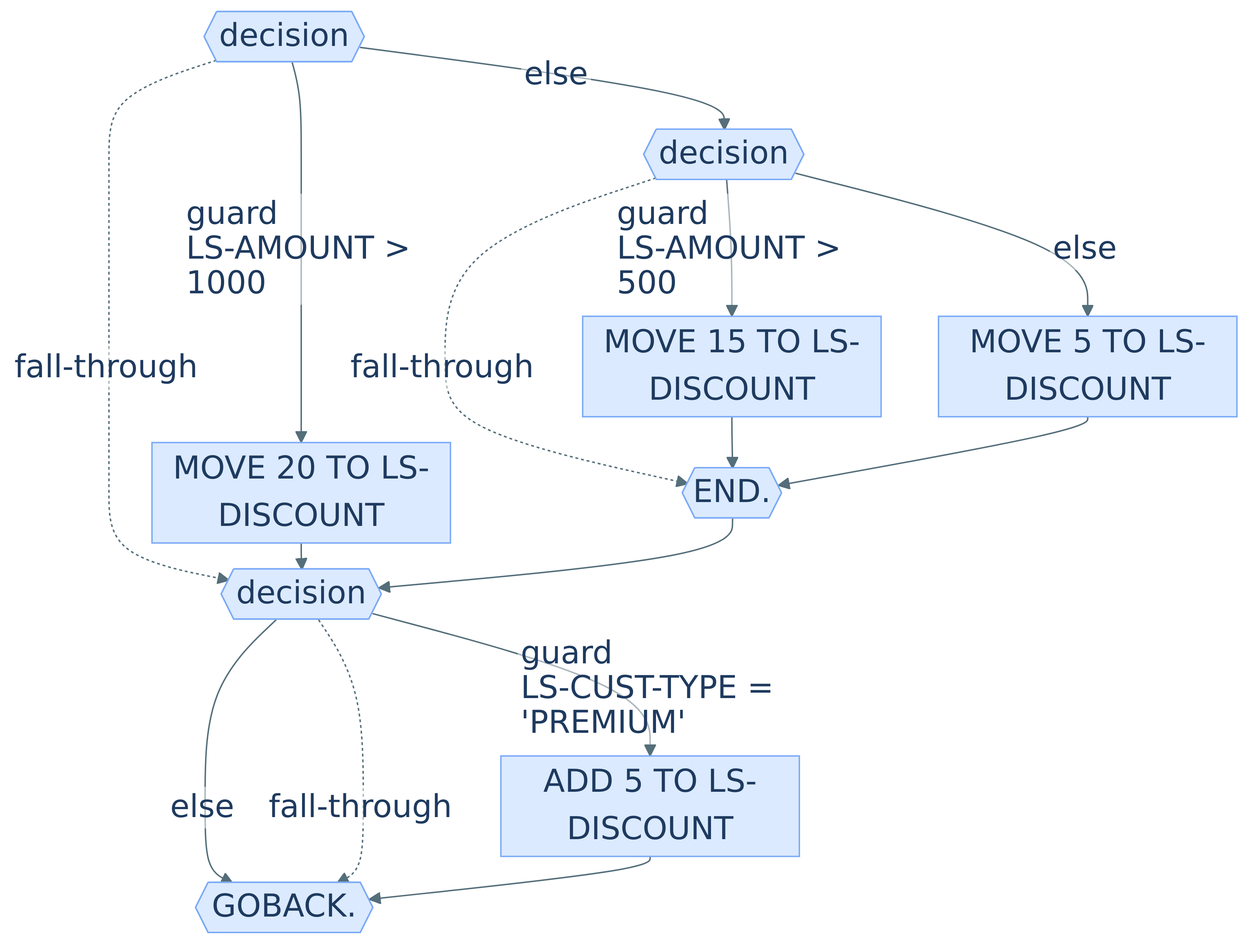}
    \end{minipage}
     &
    \begin{minipage}[t]{\linewidth}
      \centering
      \textbf{\footnotesize Data-flow graph (DFG)}\par\vspace{2pt}
      \hrule height 0.4pt\vspace{3pt}
      \includegraphics[width=\linewidth]{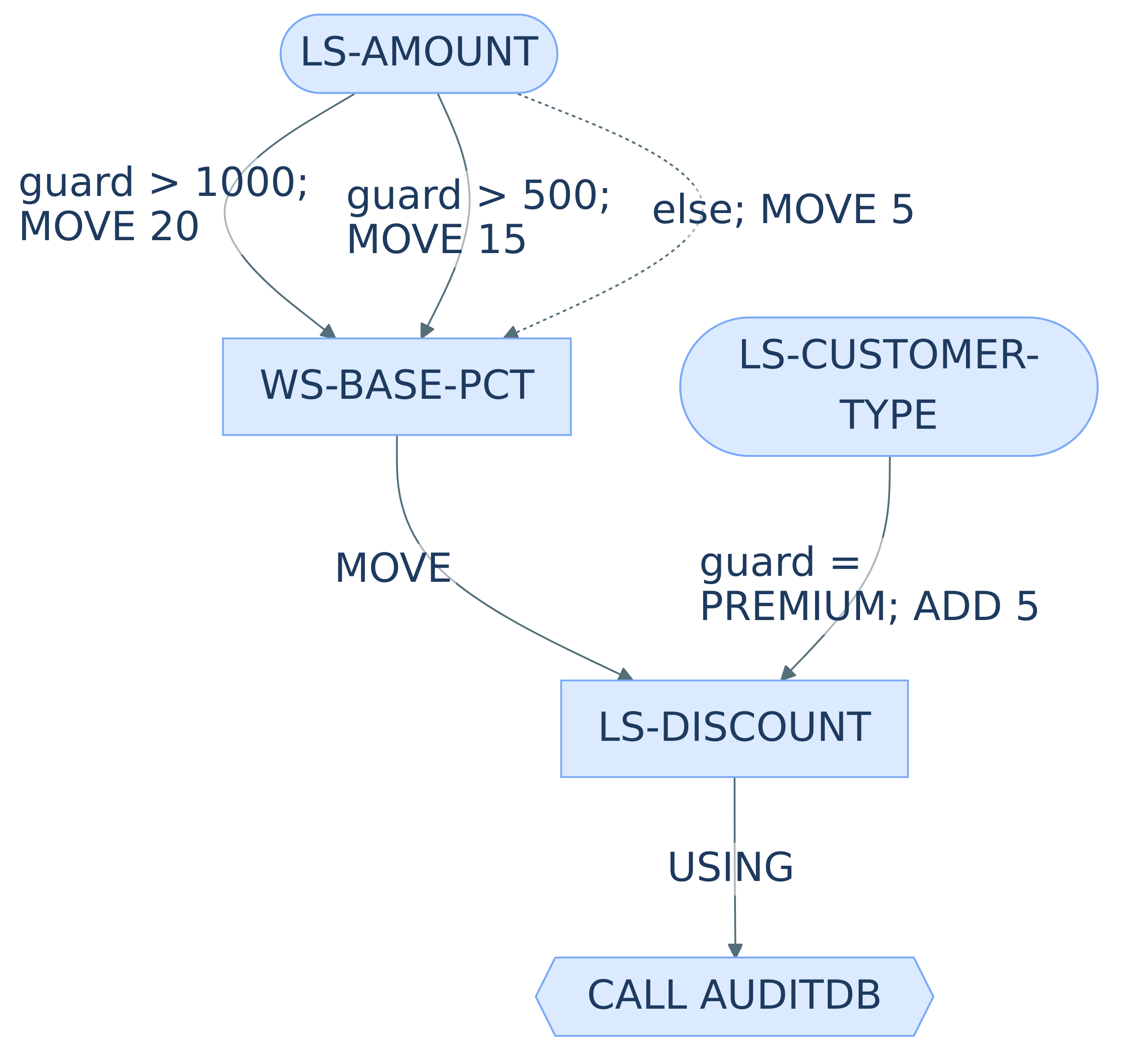}
    \end{minipage}
     &
    \begin{minipage}[t]{\linewidth}
      \centering
      \textbf{\footnotesize System-dependence graph (SDG)}\par\vspace{2pt}
      \hrule height 0.4pt\vspace{3pt}
      \includegraphics[width=\linewidth]{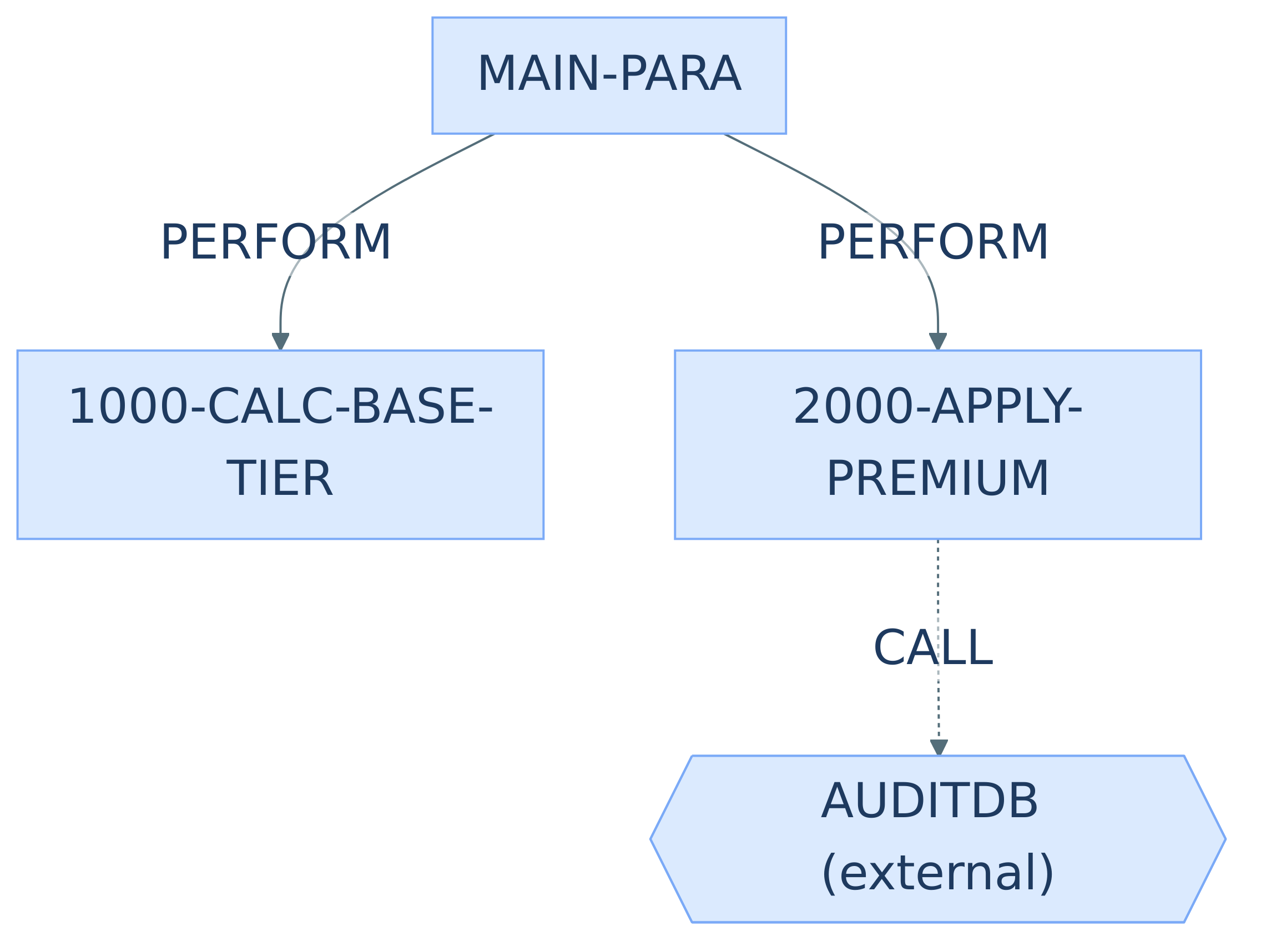}
    \end{minipage}
    \\
  \end{tabular}
  \caption{Motivating example: \texttt{CALCDISC} COBOL source (top, two columns), a candidate behavioural-requirements document, and the three static-analysis projections (CFG, DFG, SDG) the symbolic channels read as ground truth.}
  \label{fig:motivating-calcdisc-v2}
\end{figure}

\section{Introduction}

A mainframe-modernization project typically runs for years. At the start, the system of record is a body of decades-old COBOL that no single engineer fully understands and that runs on infrastructure (CICS, VSAM, JCL, external batch feeds) the replacement stack will not have. The first deliverable is rarely code: it is a natural-language specification that a downstream engineering team, increasingly supported by agentic coding tools in a spec-driven development workflow, will use to build the modernized application. Every subsequent design decision, estimate, test plan, and acceptance criterion is written against that spec. If the spec is silent on a behaviour the COBOL actually implements, or confidently describes a behaviour it does not, those mistakes propagate through the entire programme and surface as production incidents after cut-over. Catching them after the fact is expensive; catching them in the spec, before engineering starts, is cheap. This paper is about catching them in the spec.

Our vehicle for catching them is the \emph{fidelity probe}. Probes are the simplest diagnostic we could think of: a probe poses a natural-language question whose ground-truth answer is read off the code, and a judge tries to answer the same question using only the specification. If the specification answers correctly, the probe lands in the \emph{agree} bucket; if it answers something else, \emph{contradict}; if it is silent, \emph{coverage gap}. The \emph{fidelity} score $F$ is the agreement rate; the contradiction and gap rates are its two failure decompositions, and each points at a different kind of spec edit: fix a wrong requirement, add a missing one, or remove a spurious one. Probes are cheap to generate, cheap to judge, easy to aggregate into a trajectory, and, because they ride on code-derived ground truth, they have the property that lets us do theory: they are i.i.d.\ samples from a probe distribution $\mathcal{D}_A$, so the fidelity they measure inherits concentration, fixed-point, and falsifiability guarantees that any purely prompt-engineered score would not.

Figure~\ref{fig:motivating-calcdisc-v2} concretises the setting on a small COBOL program (\textsc{calcdisc}) adapted from the AWS CardDemo benchmark: the code, its three symbolic projections (CFG, DFG, SDG), and a candidate business-requirements document are the artifacts the loop juggles. We formalise the loop in \S\ref{sec:brief-framework}, prove its convergence and overfitting properties in \S\ref{sec:brief-formal}, and evaluate on 15 COBOL programs ($\approx$12\,k lines) in \S\ref{sec:brief-empirical}: the pure-LLM regime raises frozen-test fidelity from $0.63$ to $0.94$, and three graph-grounded mixtures lift it by $+16$ to $+30$~pp. Figure~\ref{fig:brief-dashboard} summarises the trajectory.

\paragraph{Contributions.}
\begin{itemize}[leftmargin=*, itemsep=2pt, topsep=2pt]
  \item \textbf{A behavioural-alignment framework} that measures and iteratively repairs spec--code drift using a single, actionable number, \emph{fidelity}, which decomposes into agreement, contradiction, and coverage-gap rates (\S\ref{sec:brief-framework}).
  \item \textbf{A neurosymbolic probe distribution} $\mathcal{D}_A(\alpha, \beta)$ that interpolates between a pure-LLM channel and three symbolic channels grounded in the reference's control-flow, data-flow, and system-dependence graphs, under an \emph{Observability Rule} that keeps probes at the modernization-spec abstraction (\S\ref{sec:brief-framework}).
  \item \textbf{Statistical guarantees for the loop}: a regression-aware Markov recursion that identifies a local fixed point $F^\dagger = \pi/(\pi+r)$ out of sample from four iterations of rate data (Theorems~\ref{thm:monotone}--\ref{thm:fixedpoint}; confirmed to within $1$~pp on three held-out iterations), and a Hoeffding-bounded overfitting discriminant via a frozen-test protocol that doubles as a falsification instrument when the i.i.d.\ precondition fails (Theorem~\ref{thm:gap}; \S\ref{sec:brief-generator-sweep}).
  \item \textbf{An empirical study on 15 COBOL programs}: four $(\alpha, \beta)$ points, cross-distribution transfer, a five-lineage generator sweep, and cross-judge and cross-reviser re-evaluations --- validating distribution-agnostic convergence and firing the discriminant on two real failure modes. Data, prompts, figures, and scripts are released.
\end{itemize}

\section{Related work}

Three lines of prior work bear directly on the problem of keeping a specification aligned with its code, and we build on each while addressing what each leaves open.

\textit{Formal verification.} Theorem provers such as Lean~\citep{moura2021lean}, Dafny~\citep{leino2010dafny}, and TLA\textsuperscript{+}~\citep{lamport2002tla} can prove that a program satisfies a specification to the bit, but they require both artifacts to be lifted into the prover's logic \emph{and} the surrounding environment (databases, external services, the CICS and batch runtime, file-system semantics) to be axiomatised as well. That is feasible for small self-contained modules but intractable at the scale of a mainframe application, where the programs under scrutiny are hundreds of thousands of lines and communicate with an ecosystem that has no mechanised specification. We treat the code as the ground truth directly and measure agreement probabilistically, avoiding the lifting step.

\textit{Test-based validation.} Unit and integration tests check a spec against finitely many concrete inputs. They catch what they happen to exercise but cannot surface what the spec is silent about, which is our most common failure mode (the \ADD{} actions below). Our loop uses Q\&A pairs rather than input/output pairs, so a coverage gap in the spec (the judge returns $\bot$) is a first-class verdict rather than an invisible omission.

\textit{Iterative LLM-driven artifact refinement.} Self-Refine~\citep{madaan2023self} and Reflexion~\citep{shinn2023reflexion} apply the ``generate--critique--revise'' loop to freeform text and agent trajectories. Clover~\citep{sun2024clover} and Wybe~\citep{gloeckle2026wybecoder} push the pattern toward code and spec generation. These works demonstrate the viability of the loop pattern but do not articulate the statistical properties that make its measurements trustworthy: there is no train/test separation, no analysis of whether the improvement metric is overfit to the critique prompts, and no convergence result relating per-iteration progress to a limit. We address all three.

\begin{figure}[t]
  \centering
  \includegraphics[width=0.98\linewidth]{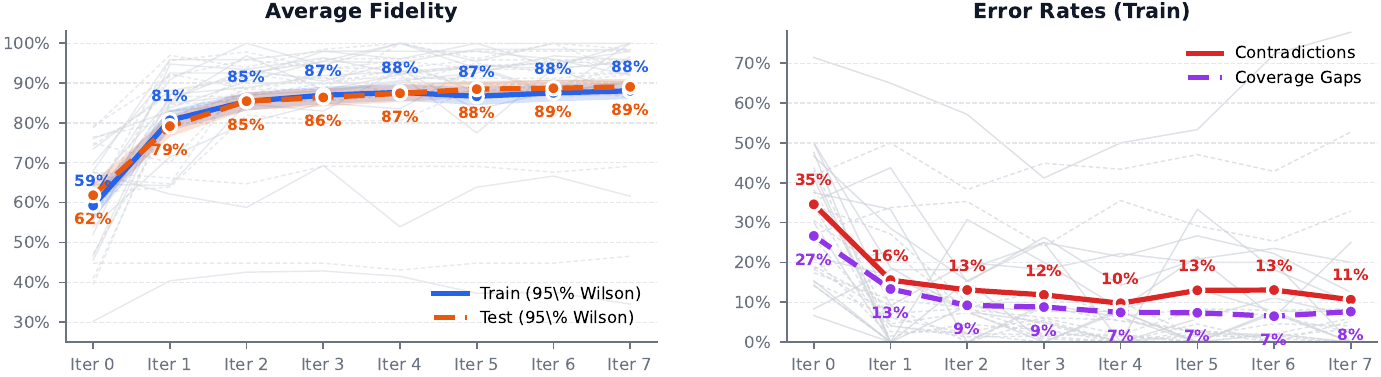}
  \caption{Average fidelity (\textbf{left}) and error rates (\textbf{right}) across all 15 programs over eight iterations. Train (solid) vs.\ frozen test (dashed), with per-program traces in grey (the two low-plateau traces are the outliers \textsc{CBACT01C} and \textsc{CORPT00C}, diagnosed in Appendix~\ref{app:brief-repair-examples-symbolic}) and $95\%$ Wilson bands. Full dashboard in Appendix~\ref{app:brief-cost}.}
  \label{fig:brief-dashboard}
\end{figure}

\section{The Behavioural Alignment Framework}
\label{sec:brief-framework}

\label{sec:brief-setup}
Let $A$ and $B$ be two artifacts purporting to describe the same system. We designate $A$ as the \emph{reference} (the artifact treated as ground truth for behaviour) and $B$ as the \emph{artifact under review}. The framework is asymmetric by choice: practitioners always know which artifact should be trusted. In our experiments $A$ is a COBOL program and $B$ is a candidate modernization specification, but nothing in the framework is specific to that pairing.

A \emph{probe generator} $Q$ produces a set of probes $Q(A) = \{(q_i, y_i, \kappa_i) : i = 1, \ldots, n\}$ grounded in $A$, where $q_i$ is a natural-language question, $y_i \in \mathcal{Y}$ is the authoritative answer derived from $A$, and $\kappa_i \in \mathcal{K}$ is a category tag (\textsc{precondition}, \textsc{computation}, \textsc{branching}, \textsc{guard}, \textsc{output}, \textsc{dependency}, \textsc{negative}, \textsc{boundary}, \ldots) that records \emph{what kind of behaviour} the probe asks about. A \emph{judge} $J$ answers each probe using only $B$, returning $J(B, q_i) = (\hat y_i, e_i, c_i)$ with $\hat y_i \in \mathcal{Y} \cup \{\bot\}$ ($\bot$ = ``$B$ is silent''), $e_i$ the cited evidence inside $B$, and $c_i$ a confidence label. The evidence $e_i$ is what makes revisions targetable: it points at the specific requirement ID inside $B$ that needs editing, so the revision operator can edit in place rather than regenerate the whole document. The loop (Figure~\ref{fig:brief-loop} in Appendix~\ref{app:brief-loop-diagram}) repeats four operations --- question, judge, classify, revise --- until the spec stops improving, each operation easier to describe on its own than as a single-shot pipeline and each independently swappable.

\textbf{Probe distribution: pure-LLM, symbolic, and mixture.}\label{sec:brief-probe-dist}
$Q$ samples from a population probe distribution $\mathcal{D}_A$ that the rest of the framework --- fidelity, actions, train/test splits, and the theorems of \S\ref{sec:brief-formal} --- treats as given. We instantiate $\mathcal{D}_A$ as a two-level mixture: a pure-LLM channel $\mathcal{D}_A^{\mathrm{llm}}$ that emits probes by LLM reading of $A$, and a \emph{symbolic mixture} $\mathcal{D}_A^{\mathrm{sym}}$ whose components are deterministic samplers over three structural graphs extracted from $A$ by a static-analysis pipeline: the annotated control-flow graph (ACFG), data-flow graph (DFG), and system-dependence graph (SDG). Table~\ref{tab:brief-channels} lists what each channel uniquely probes.

Define the symbolic mixture and the full probe distribution:
\begin{equation}
  \mathcal{D}_A^{\mathrm{sym}} = \beta_{\mathrm{cfg}}\mathcal{D}^{\mathrm{cfg}} + \beta_{\mathrm{dfg}}\mathcal{D}^{\mathrm{dfg}} + \beta_{\mathrm{sdg}}\mathcal{D}^{\mathrm{sdg}},
  \qquad \beta_{\mathrm{cfg}} + \beta_{\mathrm{dfg}} + \beta_{\mathrm{sdg}} = 1,
\end{equation}
\begin{equation}
  \mathcal{D}_A(\alpha, \beta) = \alpha\,\mathcal{D}_A^{\mathrm{llm}} + (1 - \alpha)\,\mathcal{D}_A^{\mathrm{sym}},
  \qquad \alpha \in [0, 1].
\end{equation}
\begin{itemize}[leftmargin=*, itemsep=2pt]
  \item $\alpha = 1$: \textbf{pure-LLM regime} --- every probe is drawn from $\mathcal{D}_A^{\mathrm{llm}}$; $\beta$ is a free unused parameter.
  \item $\alpha = 0$: \textbf{pure-symbolic regime} --- the entire probe distribution is graph-grounded and, since $\mathrm{ACFG}(A)$, $\mathrm{DFG}(A)$, $\mathrm{SDG}(A)$ are deterministic functions of the fixed reference $A$, structurally stationary across iterations without empirical validation.
  \item $\alpha \in (0, 1)$: \textbf{hybrid} --- each probe is independently drawn from the LLM channel with probability $\alpha$ or from the symbolic mixture with probability $1 - \alpha$.
  \item $\beta$ degenerate (e.g.\ $\beta_{\mathrm{cfg}} = 1$): \textbf{single-channel symbolic} --- useful for mechanistic diagnostics that isolate a specific failure mode of the spec.
\end{itemize}
Sampling is i.i.d.\ across probes and across iterations: the channel is drawn first (LLM vs.\ one of the three symbolic channels, weighted by $\alpha$ and $\beta$), then a probe is drawn from the chosen channel. The probe supports of $\mathcal{D}^{\mathrm{cfg}}, \mathcal{D}^{\mathrm{dfg}}, \mathcal{D}^{\mathrm{sdg}}$ are pairwise disjoint by construction (a guard answer is never a data answer, and so on), so $\mathcal{D}_A^{\mathrm{sym}}$ is a well-defined convex combination on disjoint supports. Our empirical evaluation reports four points on this spectrum --- pure-LLM ($\alpha{=}1$), pure-CFG ($\alpha{=}0, \beta_{\mathrm{cfg}}{=}1$), half-LLM ($\alpha{=}0.5, \beta_{\mathrm{cfg}}{=}1$), balanced three-channel ($\alpha{=}0, \beta{=}(.34, .33, .33)$) --- and a cross-distribution transfer test.

\begin{table}[t]
  \centering
  \footnotesize
  \setlength{\tabcolsep}{3pt}
  \renewcommand{\arraystretch}{1.15}
  \caption{The three symbolic probe channels. Each is a deterministic sampler over one graph of $A$; channel questions map onto the same categories $\kappa \in \mathcal{K}$ as the LLM channel, so judge, comparator, and action classifier are unchanged --- only the ground truth is read off the graph.}
  \label{tab:brief-channels}
  \begin{tabularx}{\linewidth}{lX}
    \toprule
    \textbf{Symbolic channel}                         & \textbf{The question the channel uniquely asks}                                                \\
    \midrule
    $\mathcal{D}^{\mathrm{cfg}}$ (control-flow graph) & \emph{What observable effect follows when condition $X$ holds?} \textbf{(guard)}               \\
    $\mathcal{D}^{\mathrm{dfg}}$ (data-flow graph)    & \emph{Which input, under what transformation, produces observable output $Y$?} \textbf{(data)} \\
    $\mathcal{D}^{\mathrm{sdg}}$ (system-dep.\ graph) & \emph{After event $X$, what happens next (which screen, output, or handoff)?} \textbf{(flow)}  \\
    \bottomrule
  \end{tabularx}
\end{table}

\textbf{The Observability Rule.} Not every graph fact is a legitimate probe. The modernization spec describes a replacement application that a user will interact with; it should not be judged on platform-internal details (file status codes, paragraph identifiers, SQL return codes, record-layout mechanics). Every candidate probe passes through an \emph{observability filter} the extractor implements mechanically:

\begin{definition}[Behaviorally observable probe]\label{def:observability}
  A symbolic fact produces a valid probe if and only if its ground truth can be expressed entirely in terms of (i) user inputs (form fields, key actions, menu selections), (ii) user outputs (screen content, error messages, displayed data, navigation transitions), (iii) persisted business data (records in domain vocabulary --- ``transaction,'' ``balance,'' never ``file'' or ``buffer''), or (iv) cross-system interfaces described at the business level. A fact whose ground truth requires mentioning files, queues, buffers, paragraph identifiers, record layouts, file-access ordering, or any other platform-internal sequence is rejected.
\end{definition}

Two CardDemo programs (\textsc{CBACT01C}, a batch file loader, and \textsc{COBSWAIT}, a polling stub) surface almost nothing through this filter because their behaviour \emph{is} platform infrastructure --- which the loop turns into a diagnostic signal of its own (\S\ref{sec:brief-neuro-eval}).

\textbf{Informalization as a pure function of the graph fact.} Within a symbolic channel, the graph structure is what is sampled (a branch node, a def-use slice, a dependence edge); an LLM then \emph{informalizes} the sampled structural element into a natural-language probe $q_i$ and pairs it with the answer $y_i$ and category $\kappa_i$ read mechanically off the graph. The LLM is asked to phrase, not to invent, so hallucination is confined to surface wording while ground truth and category remain graph-determined. We constrain the informalizer to be a pure function of the graph fact: temperature $0$, a fixed prompt with no iteration-dependent context, and frozen few-shot examples. Running the informalizer twice on $284$ graph facts gives a pooled semantic-equivalent rate of $98.9\%$ ($95\%$ Wilson CI $[0.969, 0.996]$), so the effective probe-distribution drift from informalization is at most $\epsilon \le 0.031$ (Appendix~\ref{app:brief-stability}) --- strictly smaller than the $\approx 0.02$ drift measured for the pure-LLM channel in Appendix~\ref{app:brief-stationarity}.

\textbf{Fidelity and its decomposition.}\label{sec:brief-fidelity}
For each probe the verdict is defined below (left); given a probe set $T$, \emph{fidelity} is the agreement rate (right):
\begin{equation}
  \hspace*{-0.5em}
  v(q_i) = \begin{cases}
    \text{agree}      & \text{if } \hat{y}_i \equiv y_i,                                     \\
    \text{contradict} & \text{if } \hat{y}_i \ne \bot \text{ and } \hat{y}_i \not\equiv y_i, \\
    \text{gap}        & \text{if } \hat{y}_i = \bot,
  \end{cases}
  \qquad
  F(A, B; T) = \frac{1}{|T|}\sum_{i \in T} \mathbf{1}[v(q_i) = \text{agree}].
\end{equation}
Fidelity decomposes into three complementary rates that sum to one: $F + C + G = 1$, where $C$ is the contradiction rate and $G$ is the coverage-gap rate. The decomposition is actionable: contradictions and gaps correspond to different repair actions on $B$.

\textbf{Gap classification and revision actions.}\label{sec:brief-actions}
Each non-agreeing probe is mapped to one of three revision actions $\mathbf{a}(q_i) \in \{\FIX, \ADD, \REMOVE\}$, dispatched on the verdict $v(q_i)$ and category $\kappa_i$:
\begin{itemize}[leftmargin=*, itemsep=1pt, topsep=2pt]
  \item \ADD{} if $v(q_i) = \text{gap}$: $B$ is silent on a behaviour $A$ exhibits.
  \item \REMOVE{} if $v(q_i) = \text{contradict}$ and $\kappa_i = \textsc{negative}$: $B$ claims a behaviour $A$ does not implement, detected by a negative probe (``does $B$ claim behaviour $X$?'' when $A$ says ``no'').
  \item \FIX{} otherwise ($v(q_i) = \text{contradict}$, non-negative $\kappa_i$): $B$ describes the behaviour incorrectly.
\end{itemize}
Each action is a tuple $(\mathbf{a}_i, \rho_i, \gamma_i, \epsilon_i)$ carrying its type, structural anchors $\rho_i \subseteq \text{Anchors}(B)$ extracted from the judge's cited evidence (requirement IDs), natural-language guidance $\gamma_i$, and the verbatim evidence $\epsilon_i$: a structured, addressable patch set rather than a free-form critique.

\textbf{Iterative refinement.}\label{sec:brief-iter}
A revision operator $R$ takes $B_k$ and the current action set to produce $B_{k+1}$:
\begin{equation}
  B_{k+1} = R\bigl(B_k, \{(\mathbf{a}_i, \rho_i, \gamma_i, \epsilon_i) : v(q_i) \ne \text{agree}\}\bigr).
\end{equation}
$R$ is realised as an LLM conditioned on $B_k$ and the action set; anchor references $\rho_i$ permit targeted edits rather than regeneration. The trajectory $\{F_0, F_1, \ldots\}$ measures convergence; the theorems of \S\ref{sec:brief-formal} give conditions under which it is well-behaved.

\textbf{Train/test separation.}\label{sec:brief-traintest}
A naive iterative alignment loop has a catastrophic failure mode: if the same probes drive revisions and score progress, the revision LLM can ``memorise'' the probe set and drive fidelity to $1.0$ without improving $B$ at all. Behavioural alignment borrows the train/test protocol from supervised learning with an asymmetric twist. Let $T_\text{train}^{(k)} \sim Q(A)$ be the training probe set, \emph{resampled at every iteration}; let $T_\text{test} \sim Q(A)$ be the test set, \emph{sampled once at $k{=}0$ and frozen}. The loop uses only $T_\text{train}^{(k)}$ to generate revision actions but measures fidelity on both, and the generalization gap $\Delta_k = F_k^\text{train} - F_k^\text{test}$ is an overfitting discriminant: if the revisions capture genuine behavioural coverage, $\Delta_k$ stays within the concentration envelope of Theorem~\ref{thm:gap}; a persistent drift beyond that envelope indicates $R$ is fitting probe surface forms rather than behaviour. The protocol terminates at the smallest $k^*$ with $F_{k^*}^\text{test} - F_{k^*-1}^\text{test} < \delta$ or $\Delta_{k^*} > \Delta_\text{max}$.

\section{Formal Properties}
\label{sec:brief-formal}

The framework of \S\ref{sec:brief-framework} is operational; this section gives it formal footing. We collect the assumptions, state the three main theorems, and point to each theorem's empirical confirmation in \S\ref{sec:brief-empirical}. Full proofs are deferred to Appendix~\ref{app:proofs}.

\textbf{Assumptions.}
\begin{assumption}[I.I.D.\ probes]\label{asm:iid}
  The probe generator $Q(A)$ defines a distribution $\mathcal{D}_A$ over probes $(q, y, \kappa)$. The training set $T_\text{train}^{(k)}$ and test set $T_\text{test}$ are i.i.d.\ samples of size $n$ from $\mathcal{D}_A$; the test set is drawn once at $k=0$ and frozen, while the training set is resampled independently at every iteration.
\end{assumption}

\begin{assumption}[Bounded regression and strict improvement of $R$]\label{asm:nonreg}
  For $B_{k+1} = R(B_k, \mathcal{A}_k)$, the revision regresses on previously-agreeing probes with probability at most $r_k \in [0, 1]$ and strictly improves with probability at least $\pi_k \in [0, 1]$ on each non-agreeing probe. Both rates are iteration-indexed and can be measured directly from the $2 \times 2$ verdict-transition contingency (Appendix~\ref{app:proofs}).
\end{assumption}

Both rates are empirical: $\pi_k$ decays as the remaining non-agreeing probes concentrate on harder behaviours, and $r_k$ tracks how localised the revision operator's edits are. The strict-repair limit $r_k \equiv 0$ with constant $\pi_k \equiv \pi$ recovers the canonical formulation; the CardDemo run exhibits small but non-zero $\hat r_k$ (Table~\ref{tab:brief-pi-rhat}), which is what makes the regression-aware recursion below necessary.

\textbf{Monotone improvement and regression-aware recursion.}
\begin{theorem}[Expected one-step improvement]\label{thm:monotone}
  Under Assumption~\ref{asm:nonreg}, $\mathbb{E}[F_{k+1}^\text{train} \mid B_k, T_\text{train}^{(k)}] \ge (1 - r_k) F_k^\text{train} + \pi_k (1 - F_k^\text{train})$, and the expected sequence is non-decreasing whenever $F_k^\text{train} \le \pi_k / (\pi_k + r_k)$.
\end{theorem}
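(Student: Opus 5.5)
The plan is to decompose the training fidelity at iteration $k+1$ probe by probe according to each probe's verdict at iteration $k$, and then use linearity of expectation. Conditioning on $B_k$ and $T_\text{train}^{(k)}$ fixes the partition of the current training probes into the agreeing set $S_k^{+} = \{i : v_k(q_i) = \text{agree}\}$, with $|S_k^{+}| = n F_k^\text{train}$, and its complement $S_k^{-}$ (contradictions and gaps), with $|S_k^{-}| = n(1 - F_k^\text{train})$. The quantity the statement bounds is the expected agreement rate after revision, which I read as evaluated on the same probe set $T_\text{train}^{(k)}$ (equivalently, on a fresh draw from $\mathcal{D}_A$). This is the reading under which Assumption~\ref{asm:nonreg}'s per-probe rates apply. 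I would make the reading explicit at the outset. Under Assumption~\ref{asm:iid} the resampled set at $k+1$ has the same marginal law, so the same bound transfers to the expectation over the resampled set. This transfer goes through the $2\times 2$ verdict-transition contingency the assumption refers to.

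The key step is to write $n F_{k+1}^\text{train} = \sum_{i \in S_k^{+}} \mathbf{1}[\text{agree at } k{+}1] + \sum_{i \in S_k^{-}} \mathbf{1}[\text{agree at } k{+}1]$ and take conditional expectations termwise. For each $i \in S_k^{+}$, Assumption~\ref{asm:nonreg} gives regression probability at most $r_k$, so the probe still agrees with probability at least $1 - r_k$. For each $i \in S_k^{-}$, the assumption gives strict improvement (a move to agree) with probability at least $\pi_k$. Summing and dividing by $n$ gives
\[
\mathbb{E}[F_{k+1}^\text{train} \mid B_k, T_\text{train}^{(k)}] \ge (1 - r_k) F_k^\text{train} + \pi_k (1 - F_k^\text{train}).
\]
No independence between probes is needed, only linearity of expectation. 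This is why the bound holds even though the revision operator $R$ edits the spec jointly.

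For the monotonicity clause I would compute the lower bound minus $F_k^\text{train}$:
\[
\pi_k (1 - F_k^\text{train}) - r_k F_k^\text{train} = \pi_k - (\pi_k + r_k) F_k^\text{train}.
\]
This is nonnegative exactly when $F_k^\text{train} \le \pi_k/(\pi_k + r_k)$, taking the threshold as $1$ when $\pi_k + r_k = 0$. In that regime the conditional expected fidelity is at least $F_k^\text{train}$. Taking the tower expectation over $B_k$ and $T_\text{train}^{(k)}$ then gives $\mathbb{E}F_{k+1} \ge \mathbb{E}F_k$ on the event where the condition holds. If the condition is meant to hold almost surely, this yields a non-decreasing expected sequence.

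I expect the main obstacle to be interpretive rather than computational. It consists of pinning down on which probe set "improves" and "regresses" are defined. One also has to say what "strictly improves" means: a transition to agree, rather than, say, a gap becoming a contradiction. Only with these definitions fixed does the per-probe accounting hold exactly. I would resolve this by stating that $\pi_k$ is the probability that a non-agreeing probe becomes agreeing, which is how the contingency table measures it.
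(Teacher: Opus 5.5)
Your main argument is the paper's proof. You partition $T_\text{train}^{(k)}$ by verdict under $B_k$ and lower-bound the expected post-revision agreement indicator of each probe by $1-r_k$ or $\pi_k$ via Assumption~\ref{asm:nonreg}. You then sum by linearity, divide by $n$, and read the monotonicity threshold off $\pi_k-(\pi_k+r_k)F_k^\text{train}\ge 0$. Under the same-set reading, which the paper's proof also adopts silently, this is correct and complete. Your treatment of $\pi_k+r_k=0$ and of the tower step is slightly more careful than the paper's.

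The one thing to drop is the parenthetical ``equivalently, on a fresh draw from $\mathcal{D}_A$'' and the claim that the bound transfers to the resampled $T_\text{train}^{(k+1)}$. Equal marginal laws of the probes do not suffice, because $B_{k+1}$ is a function of $T_\text{train}^{(k)}$: the action set is built from it. So the post-revision agreement indicator of a probe in $T_\text{train}^{(k)}$ and that of an independent fresh probe have different laws. The quantity $\mathbb{E}[F(A,B_{k+1};T_\text{train}^{(k+1)})\mid B_k,T_\text{train}^{(k)}]$ is the expected population agreement rate of $B_{k+1}$, and the assumption, being stated on $T_\text{train}^{(k)}$, says nothing about it.

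For instance, take a reviser that patches exactly the failing probes of $T_\text{train}^{(k)}$ and nothing else. It satisfies the assumption with $r_k=0$ and $\pi_k=1$, so your transferred bound would force expected fresh-sample fidelity to equal $1$. That fails whenever $\mathcal{D}_A$ puts mass on probes that $B_k$ gets wrong and those patches do not cover. This gap between fitted and fresh probes is exactly what the frozen-test protocol and Theorem~\ref{thm:gap} exist to measure.

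A statement about the resampled set would need a population-level version of Assumption~\ref{asm:nonreg}. Even then, the right-hand side would involve the population fidelity of $B_k$ rather than $F_k^\text{train}$. It matches the stated form only after taking total expectations, using that $B_k$ is independent of $T_\text{train}^{(k)}$ — which is the unconditional form Corollary~\ref{cor:geometric} uses.
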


\begin{corollary}[Regression-aware fidelity recursion]\label{cor:geometric}
  Under Assumption~\ref{asm:nonreg}, $\mathbb{E}[F_{k+1}^\text{train}] \ge \gamma_k \mathbb{E}[F_k^\text{train}] + \pi_k$ with $\gamma_k := 1 - \pi_k - r_k$, and the deviation from the local fixed point $F^\dagger_k := \pi_k / (\pi_k + r_k)$ contracts geometrically: $\mathbb{E}[F_{k+1}^\text{train}] - F^\dagger_k \ge \gamma_k (\mathbb{E}[F_k^\text{train}] - F^\dagger_k)$.
\end{corollary}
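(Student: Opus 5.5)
The plan is to derive the corollary directly from Theorem~\ref{thm:monotone} in two moves: take an outer expectation to remove the conditioning, then recentre the resulting affine inequality around its fixed point. Everything here is linear, so no concentration or sampling argument is needed.

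\emph{Step 1 (unconditioning).} Theorem~\ref{thm:monotone} gives, conditionally on $(B_k, T_\text{train}^{(k)})$,
\[
\mathbb{E}[F_{k+1}^\text{train} \mid B_k, T_\text{train}^{(k)}] \ge (1-r_k)F_k^\text{train} + \pi_k(1-F_k^\text{train}).
\]
Expanding the right-hand side gives $(1-\pi_k-r_k)F_k^\text{train} + \pi_k = \gamma_k F_k^\text{train} + \pi_k$.

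I then take expectations of both sides over $(B_k, T_\text{train}^{(k)})$ and apply the tower property. Because the right-hand side is affine in $F_k^\text{train}$, linearity of expectation yields $\mathbb{E}[F_{k+1}^\text{train}] \ge \gamma_k\,\mathbb{E}[F_k^\text{train}] + \pi_k$.

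The one point requiring care is that $r_k$ and $\pi_k$ must be deterministic. They must not depend on the particular realisation of $B_k$, or else the constants cannot be pulled out of the expectation. I will read Assumption~\ref{asm:nonreg} as giving iteration-indexed bounds that hold uniformly over realisations, and I will state this reading explicitly. If it were only a pathwise bound with random rates, the argument would need a separate uniform bound; I expect this to be the only real subtlety.

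\emph{Step 2 (fixed point).} Set $F^\dagger_k = \pi_k/(\pi_k+r_k)$, assuming $\pi_k + r_k > 0$; the degenerate case is trivial. A direct check shows $F^\dagger_k$ satisfies $F^\dagger_k = \gamma_k F^\dagger_k + \pi_k$:
\[
(1-\gamma_k)F^\dagger_k = (\pi_k+r_k)\cdot\frac{\pi_k}{\pi_k+r_k} = \pi_k.
\]
Subtracting this identity from the inequality of Step~1 gives the contraction form
\[
\mathbb{E}[F_{k+1}^\text{train}] - F^\dagger_k \ge \gamma_k\bigl(\mathbb{E}[F_k^\text{train}] - F^\dagger_k\bigr).
\]

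\emph{Step 3 (interpretation).} I will note that $\gamma_k \in [-1,1]$, since $\pi_k, r_k \in [0,1]$. Whenever $\pi_k + r_k \le 1$, we have $\gamma_k \in [0,1)$ for $\pi_k + r_k > 0$. In that case the one-sided bound says the expected deficit below $F^\dagger_k$ shrinks by at least a factor $\gamma_k$ per step, which is the sense of "geometric contraction" intended.

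If the rates are constant, iterating the bound gives
\[
\mathbb{E}[F_k] - F^\dagger \ge \gamma^k\bigl(\mathbb{E}[F_0] - F^\dagger\bigr).
\]
This iteration uses $\gamma \ge 0$ to preserve the direction of the inequality. I will flag that monotonicity of multiplication by $\gamma_k$ is where the condition $\gamma_k \ge 0$ enters.
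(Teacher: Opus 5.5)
Your proposal is correct and follows essentially the same route as the paper: rewrite the bound of Theorem~\ref{thm:monotone} as $\gamma_k F_k^\text{train} + \pi_k$, take total expectations, and subtract the fixed-point identity $\pi_k = F^\dagger_k(\pi_k + r_k)$ to obtain the contraction form. Your explicit remarks that $\pi_k, r_k$ must be non-random for the unconditioning step, and that iterating the bound needs $\gamma_k \ge 0$, are assumptions the paper uses without stating them, so they add precision without changing the argument.
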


The recursion is the empirical workhorse: substituting measured $(\hat\pi_k, \hat r_k)$ at each step turns it into a concrete predicted trajectory that we compare to observed fidelity in Table~\ref{tab:brief-pred-vs-obs} (match within $0.1$pp at every step).

\textbf{Convergence to a local fixed point.}
\begin{theorem}[Convergence]\label{thm:fixedpoint}
  Suppose the rates stabilise asymptotically ($\pi_k \to \pi_\infty$, $r_k \to r_\infty$) and satisfy the uniform contraction condition $\gamma_k \le \bar\gamma < 1$ for all $k \ge k_0$. Then $\mathbb{E}[F_k^\text{train}]$ converges to a limit $F^\star \ge F^\dagger_\infty := \pi_\infty / (\pi_\infty + r_\infty)$ when $\pi_\infty + r_\infty > 0$. Two regimes follow:
  \begin{itemize}[leftmargin=*, itemsep=1pt]
    \item \textbf{Perfect-repair regime} ($r_\infty = 0, \pi_\infty > 0$): $F^\dagger_\infty = 1$ and $F^\star = 1$.
    \item \textbf{Regression-limited regime} ($r_\infty > 0$): $F^\dagger_\infty < 1$ is a genuine sub-perfect ceiling. Fidelity cannot exceed the fraction of probes on which $R$'s improvement rate outweighs its regression rate.
  \end{itemize}
\end{theorem}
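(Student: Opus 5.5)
The plan is to work with the deterministic sequence $x_k := \mathbb{E}[F_k^\text{train}]$ and the one-step bound of Corollary~\ref{cor:geometric}, $x_{k+1} \ge \gamma_k x_k + \pi_k$. Because the local fixed point $F^\dagger_k = \pi_k/(\pi_k+r_k)$ moves with $k$, I will track the deviation from the \emph{moving} fixed point, $e_k := x_k - F^\dagger_k$. Using the identity $F^\dagger_k = \gamma_k F^\dagger_k + \pi_k$, the bound becomes
\begin{equation}
  e_{k+1} \ge \gamma_k e_k + \delta_k, \qquad \delta_k := F^\dagger_k - F^\dagger_{k+1}.
\end{equation}
Since $\pi_k \to \pi_\infty$, $r_k \to r_\infty$ and $\pi_\infty + r_\infty > 0$, the map $(\pi,r) \mapsto \pi/(\pi+r)$ is continuous at the limit. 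Hence $F^\dagger_k \to F^\dagger_\infty$ and $\delta_k \to 0$.

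The next step is to unroll the recursion from $k_0$. This gives
\begin{equation}
  e_k \ge \Bigl(\prod_{j=k_0}^{k-1}\gamma_j\Bigr) e_{k_0} + \sum_{j=k_0}^{k-1}\Bigl(\prod_{i=j+1}^{k-1}\gamma_i\Bigr)\delta_j .
\end{equation}
Unrolling an \emph{inequality} is only legitimate when every multiplier is nonnegative. I will therefore make explicit that $0 \le \gamma_k \le \bar\gamma < 1$ for $k \ge k_0$, i.e.\ $\pi_k + r_k \le 1$; this holds automatically when $\pi_k$ and $r_k$ are the transition probabilities of the two-state (agree/non-agree) chain in Assumption~\ref{asm:nonreg}.

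With that in hand, the first term is bounded by $\bar\gamma^{\,k-k_0}|e_{k_0}| \to 0$. The second term is a Toeplitz-type average $\sum_j \bar\gamma^{\,k-1-j}|\delta_j|$ of a null sequence against a summable geometric kernel. I will show it tends to $0$ by the standard split: $j$ far in the past contributes at most $\bar\gamma^{\,\text{large}}\sup|\delta|$, and recent $j$ contribute at most $\sup_{j\ge J}|\delta_j|/(1-\bar\gamma)$. This yields $\liminf_k e_k \ge 0$, i.e.\ $\liminf_k x_k \ge F^\dagger_\infty$.

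The main obstacle is the \emph{existence} of the limit $F^\star$, since Corollary~\ref{cor:geometric} gives only a lower bound. I will resolve this by using the exact two-state Markov identity. Under the chain in which each agreeing probe regresses with probability $r_k$ and each non-agreeing probe is repaired with probability $\pi_k$, the inequality of Theorem~\ref{thm:monotone} is an equality, $x_{k+1} = \gamma_k x_k + \pi_k$. The same unrolling, now with absolute values, then gives $|e_k| \to 0$. So the limit exists and equals $F^\dagger_\infty$, and in particular $F^\star \ge F^\dagger_\infty$.

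Outside the exact-rate model, where Assumption~\ref{asm:nonreg} gives only inequalities, I would state the conclusion as $\liminf_k x_k \ge F^\dagger_\infty$. Monotonicity from Theorem~\ref{thm:monotone} below the fixed point, together with the bound $x_k \le 1$, does not by itself give convergence above $F^\dagger_\infty$, so this weaker form is what the argument supports there.

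The two regimes then follow directly:
\begin{itemize}[leftmargin=*, itemsep=1pt]
  \item \textbf{Perfect-repair regime.} If $r_\infty = 0 < \pi_\infty$, then $F^\dagger_\infty = 1$. Combining $\liminf x_k \ge 1$ with $x_k \le 1$ forces $\lim x_k = 1$, even without the exact-rate identity.
  \item \textbf{Regression-limited regime.} If $r_\infty > 0$, then $F^\dagger_\infty = \pi_\infty/(\pi_\infty + r_\infty) < 1$. In the exact model the limit equals this value, which is the sub-perfect ceiling described in the statement.
\end{itemize}
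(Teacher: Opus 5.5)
Your route differs from the paper's, and it is the more careful one. The paper bounds $|\mathbb{E}[F_k]-F^\dagger_\infty|$ by $|\mathbb{E}[F_k]-F^\dagger_k|+|F^\dagger_k-F^\dagger_\infty|$, asserts that the first term contracts geometrically by Corollary~\ref{cor:geometric}, and ends with ``boundedness gives convergence.'' Neither step holds up. The corollary is one-sided, so it only limits how far $\mathbb{E}[F_k]$ can fall below $F^\dagger_k$, and a bounded sequence need not converge.

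You instead fold the moving fixed point into $\delta_k = F^\dagger_k - F^\dagger_{k+1}$, unroll the one-sided recursion, and remove the drift with a null-sequence-against-geometric-kernel argument. This gives exactly what the one-sided bound supports: $\liminf_k \mathbb{E}[F_k^\text{train}] \ge F^\dagger_\infty$. You get existence of the limit only under exact rates, or in the perfect-repair regime by squeezing against $1$.

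Declining to claim more is right, because under Assumption~\ref{asm:nonreg}, where $\pi_k, r_k$ are only bounds, the limit need not exist. Take constant $\pi_k = 0.5$, $r_k = 0.1$, so $\gamma_k = 0.4$ and $F^\dagger = 5/6$. A reviser that regresses exactly $10\%$ of probes from the all-agree state, then repairs everything without regressing, produces $1, 0.9, 1, 0.9, \dots$. This trajectory meets every hypothesis of the theorem and has no limit. Your split also resolves the statement's tension between $F^\star \ge F^\dagger_\infty$ and calling $F^\dagger_\infty$ a ceiling: it is both only in the exact-rate model.

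The one genuine error is your claim that $\pi_k + r_k \le 1$ holds automatically for the transition probabilities of a two-state chain. It does not. $\pi_k$ and $r_k$ are exit probabilities of different states and can both be near $1$; with $\pi = r = 1$ every verdict flips and $\gamma = -1$. So $\gamma_k \ge 0$ is an extra hypothesis, since the theorem's condition $\gamma_k \le \bar\gamma$ bounds $\gamma_k$ only from above. It is also needed:
\begin{itemize}
\item \emph{Inequality model.} With $\gamma_k < 0$, even your $\liminf$ bound fails. Take $\pi_k = r_k = 0.9$, so $\gamma_k = -0.8$ and $F^\dagger = 1/2$. Regressing $90\%$ from the all-agree state and then repairing everything gives $1, 0.1, 1, 0.1, \dots$, which satisfies Assumption~\ref{asm:nonreg}.
\item \emph{Exact model.} Your absolute-value unrolling needs only $|\gamma_k| \le \bar\gamma$, i.e.\ $\pi_\infty + r_\infty < 2$. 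But something must be assumed, since $\pi_k = r_k = 1$ gives $x_{k+1} = 1 - x_k$.
\end{itemize}
State the sign (or modulus) condition as an added hypothesis rather than as a consequence of the model.

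The perfect-repair conclusion survives without it. There $\gamma_\infty = 1-\pi_\infty \ge 0$, so either $\gamma_k > 0$ eventually, or $\gamma_k \to 0$. In the latter case $e_{k+1} \ge -|\gamma_k| + \delta_k$ (using $|e_k| \le 1$) already gives $\liminf_k e_k \ge 0$.
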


At $F_k = F^\dagger_k$, expected improvement and regression balance ($\mathbb{E}[\mathrm{imp}_k] = \mathbb{E}[\mathrm{reg}_k]$), so the signed one-step progress vanishes. This is the distinguishing signature of having reached the plateau --- confirmed empirically on iteration~$7$ of the CardDemo run, where the balance identity predicts $\mathbb{E}[\mathrm{imp}_7] \approx 19.0, \mathbb{E}[\mathrm{reg}_7] \approx 21.3$ against observed counts of $(19, 21)$ (Table~\ref{tab:brief-pi-rhat}).

\textbf{Bounded generalization gap.}
\begin{theorem}[Finite-sample generalization gap]\label{thm:gap}
  Under Assumption~\ref{asm:iid}, for any $B_k$ statistically independent of $T_\text{test}$ and any $\delta \in (0, 1)$, $\Pr[\,|F_k^\text{test} - \mu_k| \ge \sqrt{\log(2/\delta)/(2n)}\,] \le \delta$ where $\mu_k = \mathbb{E}_{\mathcal{D}_A}[F]$, and with probability $\ge 1 - 2\delta$, $|\Delta_k| \le 2\sqrt{\log(2/\delta)/(2n)}$.
\end{theorem}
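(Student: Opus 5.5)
The plan is to apply Hoeffding's inequality twice, once to the frozen test set and once to the resampled training set, and then combine the two bounds with a union bound and the triangle inequality.

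\textbf{Test-set concentration.} Fix $k$ and condition on $B_k$. Because $B_k$ is statistically independent of $T_\text{test}$, conditioning does not change the law of $T_\text{test}$. By Assumption~\ref{asm:iid} the test probes remain i.i.d.\ draws from $\mathcal{D}_A$. Define the indicators $X_i = \mathbf{1}[v(q_i) = \text{agree}]$, with verdicts computed against $B_k$. These are conditionally i.i.d., take values in $[0,1]$, and have common mean $\mu_k = \mathbb{E}_{\mathcal{D}_A}[F]$ (understood as the population agreement rate of $B_k$). The judge $J$ is treated as a fixed, possibly randomised, map whose internal randomness is independent across probes. Since $F_k^\text{test} = \frac1n\sum_i X_i$, the two-sided Hoeffding inequality gives $\Pr[|F_k^\text{test}-\mu_k|\ge t \mid B_k]\le 2e^{-2nt^2}$. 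Setting $t=\sqrt{\log(2/\delta)/(2n)}$ makes the right-hand side equal to $\delta$. Integrating over $B_k$ yields the first claim unconditionally.

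\textbf{Train-set concentration.} I would argue the same bound for $F_k^\text{train}$, which is the delicate step. The issue is that $B_k$ was produced from $T_\text{train}^{(k-1)}$ and earlier training sets, so it is \emph{not} independent of those. However, Assumption~\ref{asm:iid} says that $T_\text{train}^{(k)}$ is resampled independently at iteration $k$, and $F_k^\text{train}$ is measured on this fresh set. I will make explicit the reading that $F_k^\text{train}$ is evaluated on a set independent of $B_k$, namely the fresh draw before it drives the next revision. Under that reading the Hoeffding argument above applies verbatim, with the same mean $\mu_k$ because both sets come from the same $\mathcal{D}_A$. This gives $\Pr[|F_k^\text{train}-\mu_k|\ge t]\le\delta$. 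This independence bookkeeping is the main obstacle. If one instead scored $B_{k+1}$ on the set $T_\text{train}^{(k)}$ that generated it, the bound would fail, and the paper's use of $\Delta_k$ as an overfitting discriminant rests precisely on that failure. I will therefore state the temporal ordering carefully.

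\textbf{Combining the bounds.} A union bound over the two bad events shows that, with probability at least $1-2\delta$, both $|F_k^\text{test}-\mu_k|<t$ and $|F_k^\text{train}-\mu_k|<t$ hold. By the triangle inequality,
\[
|\Delta_k| = |F_k^\text{train}-\mu_k+\mu_k-F_k^\text{test}| \le 2t = 2\sqrt{\log(2/\delta)/(2n)},
\]
which is the second claim. I note as a remark that the two sets need not be independent of each other, since the union bound does not require it.
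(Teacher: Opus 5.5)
Your proposal is correct and matches the paper's proof. Both apply Hoeffding's inequality to the $[0,1]$-valued agreement indicators on the frozen test set, apply the same bound to the training set, and combine them with a union bound and the triangle inequality through $\mu_k$ to control $|\Delta_k|$. You also make explicit that $F_k^\text{train}$ must be scored on the fresh draw $T_\text{train}^{(k)}$, which is independent of $B_k$ because $B_k$ depends only on $B_0$ and $\{T_\text{train}^{(j)}\}_{j<k}$; this supplies the independence step that the paper's one-line ``union bound over train and test'' leaves implicit.
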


The independence precondition is enforced structurally by the frozen-test protocol of \S\ref{sec:brief-traintest}: $T_\text{test}$ never enters the revision loop, so $B_k$ is a deterministic function of $B_0$ and past training samples. A persistent $|\Delta_k|$ growing beyond the $O(n^{-1/2})$ envelope falsifies independence and signals naive overfitting. Measured $|\Delta_k| \le 0.024$ across all eight iterations (Table~\ref{tab:brief-headline}) sits $\approx 16\times$ below the envelope at $n \approx 50, \delta = 0.05$.

\begin{remark}[Scope of the concentration guarantee]\label{rem:scope}
  The bound rules out the \emph{naive} overfitting mode where $R$ memorises specific training probes. It does \emph{not} rule out (i) distribution drift in the probe generator --- checked empirically via nearest-neighbour similarity stability (Appendix~\ref{app:brief-stationarity}); (ii) correlated judge errors shared across train and test, which an independent judge would guard against; (iii) surface-form memorisation that generalises, which the symbolic channels of \S\ref{sec:brief-probe-dist} mitigate by grounding in program semantics; and (iv) shared-model bias when $Q, J, R$ are all the same model family --- the generator-family sweep of Appendix~\ref{app:generator-sweep} tests this invariance directly.
\end{remark}

\textbf{Termination.}
\begin{proposition}[Finite termination]\label{prop:termination}
  Let the stopping rule be $F_{k^*}^\text{test} - F_{k^*-1}^\text{test} < \delta$ for fixed $\delta \in (0, 1]$, applied whenever expected test fidelity is monotone non-decreasing. The iteration halts at $k^* \le \lceil 1/\delta \rceil + 1$ in expectation under any regime of Theorem~\ref{thm:fixedpoint}.
\end{proposition}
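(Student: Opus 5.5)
The plan is a telescoping (potential-function) argument on expected test fidelity, which lives in $[0,1]$. Write $\phi_k := \mathbb{E}[F_k^\text{test}]$. By hypothesis $\phi_k$ is non-decreasing, and trivially $0 \le \phi_k \le 1$. Every non-terminal step $k$ (one with $k < k^*$) must satisfy the continuation condition $F_k^\text{test} - F_{k-1}^\text{test} \ge \delta$. The aim is to show that such steps can occur at most about $1/\delta$ times before the budget $\phi \le 1$ is exhausted.

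\textbf{Step 1 (deterministic skeleton).} First treat the idealised case in which the realised increments equal their expectations. If the loop has not halted by iteration $m$, then each of the steps $1,\dots,m$ increased fidelity by at least $\delta$. Telescoping gives $F_m^\text{test} - F_0^\text{test} \ge m\delta$. Combined with $F_m^\text{test} \le 1$ and $F_0^\text{test} \ge 0$, this yields $m \le 1/\delta$. Hence the stopping condition must fire by iteration $\lceil 1/\delta\rceil + 1$; the $+1$ absorbs the first iteration, since $k^*$ is indexed from $k=1$ and the difference requires a predecessor.

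\textbf{Step 2 (expectation).} Next I would lift the argument to expectation. Define $\tau := k^* - 1$, the number of steps that cleared the threshold. On the event $\{k^* > m\}$, telescoping the realised increments gives $\delta\,\mathbf{1}[k^*>m]\, m \le F_m^\text{test} - F_0^\text{test} \le 1$, so pathwise $\tau \le 1/\delta$ whenever the sequence stays in $[0,1]$. This inequality holds on every sample path, so it holds in expectation with no further work.

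The role of the monotonicity hypothesis (and of Theorem~\ref{thm:fixedpoint}) is to ensure we are not in a pathological oscillating regime. Under either the perfect-repair or the regression-limited regime, $\phi_k$ converges to $F^\star \le 1$. The increments of $\phi_k$ therefore tend to $0$ and eventually fall below $\delta$, which guarantees the rule actually fires rather than relying only on the counting bound. I would note that the bound holds uniformly over both regimes, because it uses only $F \le 1$ and not the value of $F^\star$.

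\textbf{Main obstacle.} The delicate point is reconciling the \emph{realised} stopping rule with the \emph{expected} monotonicity hypothesis. Pathwise counting already bounds the number of threshold-clearing steps in total, consecutive or not, because the stopping rule halts at the first sub-$\delta$ increment. Any prefix of pre-stopping steps therefore consists entirely of $\ge\delta$ increments, and the pathwise bound $k^* \le \lceil 1/\delta\rceil + 1$ in fact holds almost surely, which implies the stated bound in expectation.

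The thing to check carefully is that a negative increment counts as "$<\delta$" and so also triggers stopping. That is immediate from the rule, so sampling noise in $F^\text{test}$ can only make termination earlier, never later. The monotonicity assumption is then needed only to interpret $k^*$ as a genuine plateau rather than to bound it.
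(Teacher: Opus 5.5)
Your proof is correct. It rests on the same idea as the paper's, telescoping the $\delta$-increments against the ceiling $F \le 1$, but you apply it at a different level, and the difference matters.

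The paper works with the \emph{expected} sequence. It invokes the monotonicity condition of Theorem~\ref{thm:monotone} to get $\mathbb{E}[F_k^\text{test}]$ non-decreasing and bounded by $1$. It then asserts that non-termination forces $\mathbb{E}[F_k^\text{test}] - \mathbb{E}[F_{k-1}^\text{test}] \ge \delta$ at every step, so $k\delta \le 1$. That step conflates the stopping rule, which is evaluated on realised test fidelities, with increments of the expectation. Read literally, it bounds the halting time of the rule applied to the idealised trajectory $k \mapsto \mathbb{E}[F_k^\text{test}]$, which is exactly your Step~1.

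Your Step~2 telescopes the realised increments on each sample path instead. Before $k^*$ every increment is at least $\delta$, so $(k^*-1)\delta \le F_{k^*-1}^\text{test} - F_0^\text{test} \le 1$. This gives $k^* \le \lfloor 1/\delta\rfloor + 1 \le \lceil 1/\delta\rceil + 1$ almost surely, which implies the in-expectation claim. Your route buys rigour and generality: the bound needs neither the monotonicity hypothesis nor any regime of Theorem~\ref{thm:fixedpoint}, and it correctly treats noisy or negative increments as triggering the stop. The paper's route buys only a direct tie to Theorem~\ref{thm:monotone}, which, as you note, serves to interpret $k^*$ as a plateau rather than to bound it.

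Two slips do not affect the result.
\begin{itemize}
\item The inequality $\delta\,\mathbf{1}[k^*>m]\,m \le F_m^\text{test} - F_0^\text{test}$ holds only on $\{k^*>m\}$, since on the complement the right side can be negative. Write $\delta m\,\mathbf{1}[k^*>m] \le 1$ instead.
\item Your aside that $\phi_k$ converges under Theorem~\ref{thm:fixedpoint} misreads that theorem, which concerns expected \emph{training} fidelity. Convergence of an expectation also says nothing about realised increments. Drop that paragraph, since your pathwise count already guarantees the rule fires.
\end{itemize}
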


Theorems~\ref{thm:monotone}--\ref{thm:gap} together characterise iterative refinement: monotone improvement under a local-repair assumption on $R$, convergence to a (possibly sub-perfect) limit, and a concentration-based certificate that measured improvements reflect population behaviour rather than probe memorisation. The next section shows all three hold on CardDemo, with the plateau $F^\dagger$ predicted out-of-sample from four iterations of rate data.

\begin{table}[h]
  \centering
  \footnotesize
  \setlength{\tabcolsep}{4pt}
  \caption{Per-transition bookkeeping on the 13-program core frozen test set ($n{=}785$ probes). ``held''/``regr''/``impr''/``stuck'' are the four verdict-transition bins; $\hat\pi_k = \mathrm{impr}/(\mathrm{impr}{+}\mathrm{stuck})$, $\hat r_k = \mathrm{regr}/(\mathrm{held}{+}\mathrm{regr})$, and $\hat F^\dagger_k = \hat\pi_k / (\hat\pi_k + \hat r_k)$ (Corollary~\ref{cor:geometric}).}
  \label{tab:brief-pi-rhat}
  \begin{tabular}{lrrrrrccc r l}
    \toprule
    Transition    & $|\mathcal{A}_k|$ & held & regr & impr & stuck & $\hat\pi_k$    & $\hat r_k$     & $\hat F^\dagger_k$ & $\mathrm{imp}{-}\mathrm{reg}$ & Progress?      \\
    \midrule
    $B_0 \to B_1$ & 330               & 439  & 24   & 204  & 118   & \textbf{0.634} & \textbf{0.052} & 0.924              & $+180$                        & \checkmark     \\
    $B_1 \to B_2$ & 117               & 622  & 21   & 80   & 62    & 0.563          & 0.033          & 0.945              & $+59$                         & \checkmark     \\
    $B_2 \to B_3$ & 85                & 687  & 15   & 24   & 59    & 0.289          & 0.021          & 0.931              & $+9$                          & \checkmark     \\
    $B_3 \to B_4$ & 70                & 698  & 13   & 18   & 56    & 0.243          & 0.018          & 0.930              & $+5$                          & \checkmark     \\
    $B_4 \to B_5$ & 56                & 699  & 17   & 30   & 39    & 0.435          & 0.024          & 0.948              & $+13$                         & \checkmark     \\
    $B_5 \to B_6$ & 63                & 716  & 13   & 18   & 38    & 0.321          & 0.018          & 0.947              & $+5$                          & \checkmark     \\
    $B_6 \to B_7$ & 61                & 713  & 21   & 19   & 32    & 0.373          & 0.029          & 0.929              & $-2$                          & at $F^\dagger$ \\
    \bottomrule
  \end{tabular}
\end{table}

\begin{figure}[t]
  \centering
  \includegraphics[width=0.95\linewidth]{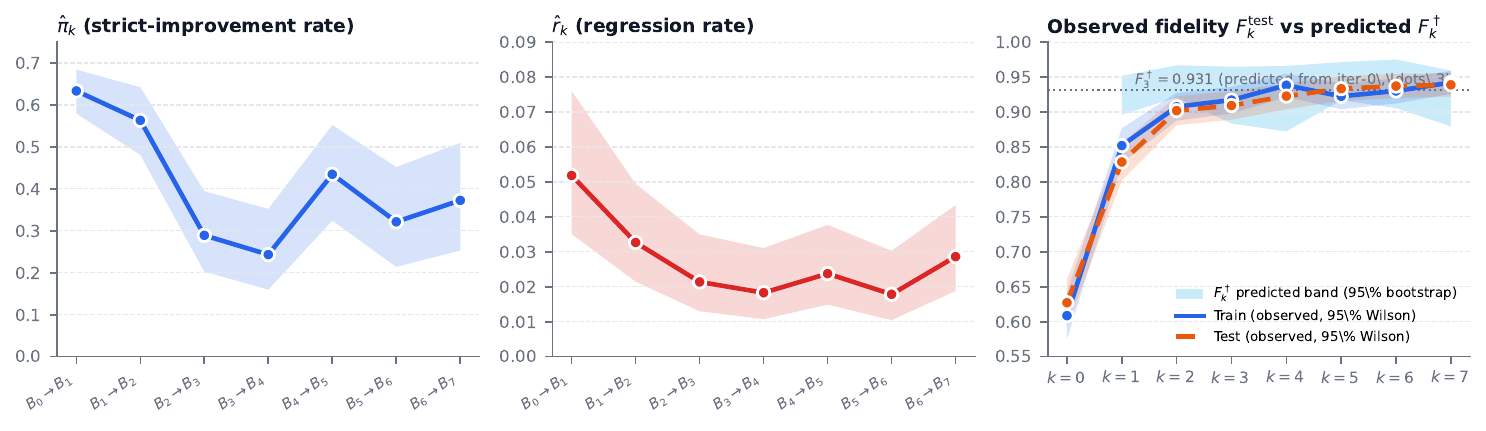}
  \caption{Per-transition rates with 95\% confidence intervals. \textbf{Left:} $\hat\pi_k$ (strict-improvement rate) with Wilson CI. \textbf{Middle:} $\hat r_k$ (regression rate). \textbf{Right:} observed train (solid) and frozen-test (dashed) fidelity with Wilson bands, overlaid with the bootstrap $F^\dagger_k$ band (purple). Dotted line: $F^\dagger_3 = 0.931$ predicted from iter-0\,\ldots\,3 alone; iterations 5, 6, 7 all land inside its bootstrap CI.}
  \label{fig:brief-pi-r}
\end{figure}

\section{Empirical Evaluation on CardDemo}
\label{sec:brief-empirical}

The evaluation proceeds in three stages. \S\ref{sec:brief-pure-llm} reports the pure-LLM trajectory on the 13-program core cohort and confirms the Markov fixed-point prediction out of sample. \S\ref{sec:brief-neuro-eval} instantiates the probe distribution at four points on the $(\alpha, \beta)$ spectrum and measures the resulting plateau heights. \S\ref{sec:brief-generator-sweep} varies the generator family of role $Q$ across five independent LLM lineages and uses the frozen-test protocol as a falsification instrument on two that drift.

\subsection{Pure-LLM trajectory and Markov prediction}
\label{sec:brief-pure-llm}

\textbf{Benchmark and baseline spec.}
AWS CardDemo~\citep{awsCardDemo} is a public credit-card reference application. We evaluate on all 15 COBOL programs it contains ($\approx 12\,$k lines total, 41 to 4{,}236 lines per program), covering online screens, batch jobs, and a reporting job. $B_0$ is the output of a separate semi-manual specification-authoring pipeline emitting EARS-style ``When \ldots\ the system shall \ldots'' clauses with anchor identifiers; it averages $\approx 1{,}641$ lines of English and $127$ anchored requirements per program, with $F_0^\text{test} = 0.627$ / $F_0^\text{train} = 0.609$.

\textbf{Fidelity trajectory on the core cohort.}
Table~\ref{tab:brief-headline} reports the aggregate trajectory on the 13-program core cohort (outliers \textsc{CBACT01C} and \textsc{CORPT00C} are diagnosed in Appendix~\ref{app:brief-repair-examples-symbolic}). The loop lifts frozen-test fidelity from $0.627$ at $k{=}0$ to $0.939$ at $k{=}7$ (eight iterations, seven transitions; $+31$~pp). Iteration~$1$ captures two-thirds of the lift; later iterations oscillate within a point of the plateau $F^\dagger \approx 0.93$. The generalization gap $|\Delta_k| \le 0.024$ sits $\approx 16\times$ below the Hoeffding envelope ($0.384$ at $n\approx 50$ per program, $\delta{=}0.05$).

\begin{table}[h]
  \centering
  \footnotesize
  \setlength{\tabcolsep}{4pt}
  \caption{Aggregate fidelity trajectory on the 13-program core cohort.}
  \label{tab:brief-headline}
  \begin{tabular}{lcccccccc}
    \toprule
                       & $k{=}0$ & $k{=}1$ & $k{=}2$ & $k{=}3$ & $k{=}4$ & $k{=}5$ & $k{=}6$ & $k{=}7$        \\
    \midrule
    $F_k^\text{train}$ & 0.609   & 0.852   & 0.908   & 0.917   & 0.938   & 0.923   & 0.930   & \textbf{0.942} \\
    $F_k^\text{test}$  & 0.627   & 0.828   & 0.902   & 0.909   & 0.923   & 0.933   & 0.937   & \textbf{0.939} \\
    $|\Delta_k|$       & 0.019   & 0.024   & 0.006   & 0.008   & 0.016   & 0.011   & 0.007   & 0.003          \\
    \bottomrule
  \end{tabular}
\end{table}

\textbf{Markov fixed point predicted three iterations out of fit.}
The frozen-test protocol records the $2{\times}2$ verdict-transition contingency at every iteration. Table~\ref{tab:brief-pi-rhat} lists counts and rates: $\hat r_k$ stays small and flat ($1.8\%$--$5.2\%$), making $\hat F^\dagger_k$ well-measured. Six of seven transitions show strict improvement; the seventh sits at the fixed point.

\textbf{Out-of-sample validation.} Corollary~\ref{cor:geometric} at the $B_3 \to B_4$ rates gives $F^\dagger_3 = 0.931$; subsequently-observed $F_5, F_6, F_7$ all land within $1$~pp, inside the bootstrap $95\%$ CI (Figure~\ref{fig:brief-pi-r}, Appendix~\ref{app:brief-markov}). Table~\ref{tab:brief-pred-vs-obs} shows predicted and observed trajectories agree to within $0.1$~pp at every step. The balance identity at iteration~$7$ predicts $(\mathbb{E}[\mathrm{imp}_7], \mathbb{E}[\mathrm{reg}_7]) \approx (19.0, 21.3)$ against observed $(19, 21)$ --- the vanishing one-step progress signal is the signature of reaching $F^\dagger$.

\begin{table}[h]
  \centering
  \footnotesize
  \setlength{\tabcolsep}{5pt}
  \caption{Predicted vs.\ observed frozen-test fidelity on the 13-program core cohort. Predicted values come from the Corollary~\ref{cor:geometric} one-step recursion at measured $(\hat\pi_k, \hat r_k)$ of Table~\ref{tab:brief-pi-rhat}; observed values are pooled agree-counts over 785 probes per iteration.}
  \label{tab:brief-pred-vs-obs}
  \begin{tabular}{lcccccccc}
    \toprule
    $k$                                                                & 0     & 1     & 2     & 3     & 4     & 5     & 6     & 7     \\
    \midrule
    Predicted $\mathbb{E}[F_k^\text{test}]$ (Cor.~\ref{cor:geometric}) & --    & 0.819 & 0.894 & 0.906 & 0.912 & 0.929 & 0.935 & 0.932 \\
    Observed $F_k^\text{test}$ (pooled, core)                          & 0.590 & 0.819 & 0.894 & 0.906 & 0.912 & 0.929 & 0.935 & 0.932 \\
    \bottomrule
  \end{tabular}
\end{table}

\textbf{Run cost.} One iteration on 15 programs takes $\approx 4.8$~h, $\approx 53$~M tokens, $\approx 3.5$k LLM calls (Claude Opus 4.6 via Bedrock, 4 workers), with spec size growing by $<4\%$/iter. The 2--3 iterations sufficient in practice cost $\approx 0.37$~h and $\approx 3.9$~M input tokens per fidelity-point lift. Full breakdown in Appendix~\ref{app:brief-cost}.

\subsection{Neurosymbolic spectrum: four points on $\mathcal{D}_A(\alpha, \beta)$}
\label{sec:brief-neuro-eval}

We ran the loop under four distributions on CardDemo, re-using the same $R$ and the same frozen-test protocol; only $\mathcal{D}_A$ changes. Symbolic mixtures are aggregated on the \emph{fixed intersection cohort of eleven programs} that contributed at least one Observability-Rule-valid probe in every iteration of every mixture; four are excluded because their behaviour is almost entirely platform-internal and the validator correctly rejects their candidate probes. The pure-LLM row is carried forward on the 13-program core cohort for reference.

\begin{table}[h]
  \centering
  \footnotesize
  \setlength{\tabcolsep}{5pt}
  \caption{Four points on the $(\alpha, \beta)$ spectrum. Baseline is $F_0$ against two probe samples; \emph{Iter-7 train} is final training fidelity; \emph{Best test} is the maximum frozen-test fidelity observed. The balanced mixture's largest symbolic lift exercises DFG and SDG behaviours that CFG-only mixtures never touch.}
  \label{tab:brief-neurosymbolic}
  \begin{tabular}{lccc}
    \toprule
    Mixture                                               & Baseline (train\,/\,test) & Iter-7 train    & Best test                \\
    \midrule
    pure LLM ($\alpha{=}1$, 13-prog core)                 & $0.63$\,/\,$0.63$         & $\mathbf{0.94}$ & $\mathbf{0.94}$ (iter-7) \\
    pure CFG ($\alpha{=}0$, $\beta_{\mathrm{cfg}}{=}1$)   & $0.67$\,/\,$0.66$         & $0.84$          & $0.83$ (iter-4)          \\
    half LLM ($\alpha{=}0.5$, $\beta_{\mathrm{cfg}}{=}1$) & $0.63$\,/\,$0.64$         & $0.79$          & $0.77$ (iter-6)          \\
    balanced sym ($\alpha{=}0$, $\beta{=}(.34,.33,.33)$)  & $0.54$\,/\,$0.54$         & $0.83$          & $\mathbf{0.84}$ (iter-6) \\
    \midrule
    $\Delta$ from baseline (balanced sym)                 &                           & $+29$pp         & $+30$pp                  \\
    \bottomrule
  \end{tabular}
\end{table}

\begin{figure}[h]
  \centering
  \begin{minipage}[t]{0.49\linewidth}
    \centering
    \includegraphics[width=\linewidth]{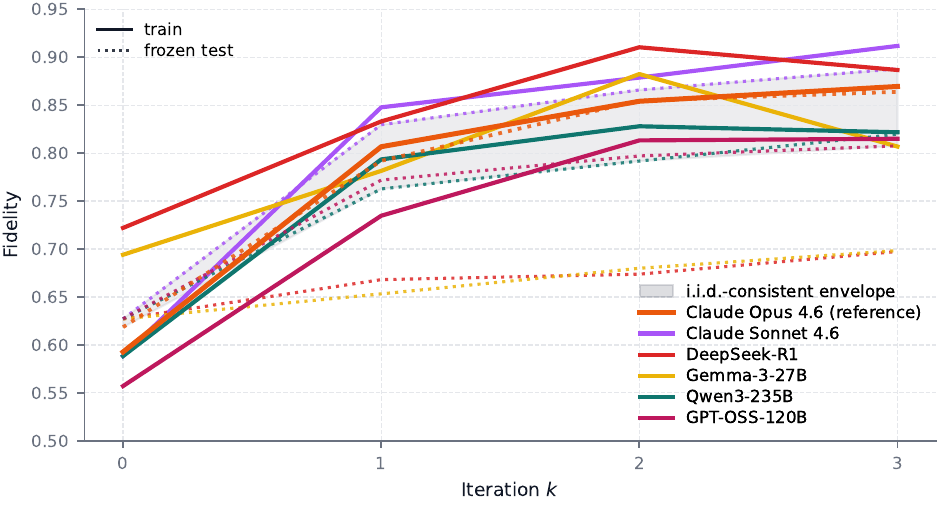}
    \subcaption{Generator-family sweep. Per-variant details in Table~\ref{tab:generator-sweep}.}
    \label{fig:generator-sweep}
  \end{minipage}\hfill
  \begin{minipage}[t]{0.49\linewidth}
    \centering
    \includegraphics[width=\linewidth]{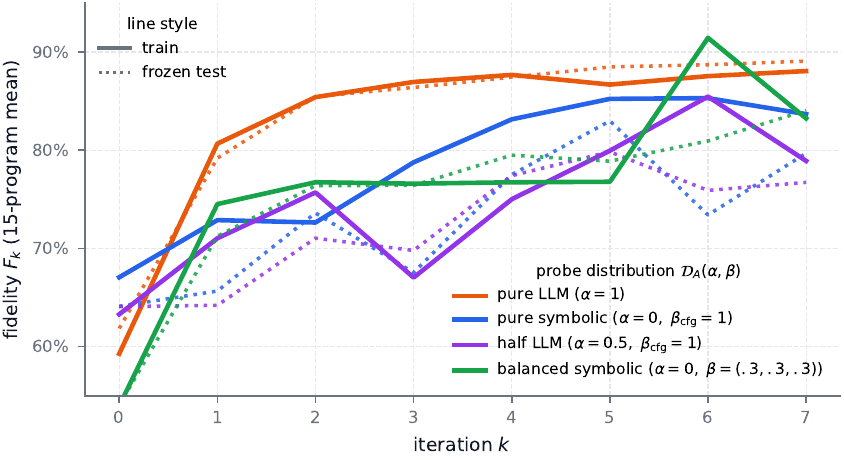}
    \subcaption{Neurosymbolic mixtures $\mathcal{D}_A(\alpha,\beta)$. Per-mixture details in Appendix~\ref{app:brief-neurosymbolic}.}
    \label{fig:brief-neurosymbolic-trajectory}
  \end{minipage}
  \caption{Distribution-agnostic convergence across two orthogonal axes. \textbf{Left:} varying the $Q$ generator family holding $J = R = $ Opus 4.6 fixed. \textbf{Right:} varying $\mathcal{D}_A(\alpha,\beta)$ over the pure-LLM / symbolic-mixture spectrum. Solid: training fidelity (re-sampled per iteration); dotted: frozen-test fidelity re-judged against the iteration's revised spec.}
  \label{fig:trajectories-side-by-side}
\end{figure}

\textbf{What the symbolic runs show.} (i) The framework guarantees transfer to graph-grounded probing unchanged: per-iteration train/test gaps stay within $\pm 5$~pp and $\hat r_k \le 0.13$ uniformly. (ii) The symbolic plateau is lower than pure-LLM's ($\hat F^\dagger \approx 0.85$--$0.89$ vs.\ $0.93$--$0.94$) because Observability-Rule filtering narrows the surface. (iii) The two channels are empirically complementary (Figure~\ref{fig:brief-cross-distribution}, Appendix~\ref{app:brief-neurosymbolic}). The four-curve in-distribution trajectory is the right panel of Figure~\ref{fig:trajectories-side-by-side}; per-transition rates and per-mixture dashboards are in Appendix~\ref{app:brief-neurosymbolic}.

\subsection{Invariance across generator families}
\label{sec:brief-generator-sweep}

Theorem~\ref{thm:fixedpoint} predicts that $F^\dagger$ is a property of the probe distribution, not of any specific LLM. We tested this by re-running the loop on the 13-program core cohort with role $Q$ swapped for each of five non-Claude generators (Sonnet 4.6, DeepSeek-R1, Gemma-3-27B, Qwen3-235B, GPT-OSS-120B), holding $J = R = $ Claude Opus 4.6 fixed. Three of the five (Sonnet, Qwen, GPT-OSS) produced i.i.d.-consistent trajectories with $\max_k|\Delta_k| \le 0.07$, inside the $0.097$ Hoeffding envelope at $n=785$ and plateaus in the predicted range. The other two (DeepSeek, Gemma) produced growing $|\Delta_k|$ exceeding the envelope by $2\times$--$2.4\times$: both generators produced probes whose wording echoed the previous iteration's revised spec, so $R$ inflated training fidelity without closing a real code/spec gap --- the overfitting signature Remark~\ref{rem:scope} anticipates. The frozen-test protocol thus fires as a falsification instrument on real generator-induced non-stationarity. See Figure~\ref{fig:trajectories-side-by-side} (left) and Appendix~\ref{app:generator-sweep}.

\subsection{Judge and Reviser independence}
\label{sec:brief-judge-swap}

Remark~\ref{rem:scope}(ii) lists correlated judge errors as the one failure mode the frozen-test protocol cannot rule out when $Q, J, R$ share a family, and (iv) asks the same question of the reviser $R$. To close both, we re-judged the same frozen probes against the same specs $B_0\ldots B_3$ under two non-Anthropic judges (Qwen3-235B, GPT-OSS-120B) with $Q,R$ held as Opus, and separately we re-revised $B_0$ with the same two models as $R$ with $Q,J$ held as Opus.

\begin{figure}[t]
  \centering
  \begin{minipage}[c]{0.53\linewidth}
    \centering
    \includegraphics[width=\linewidth]{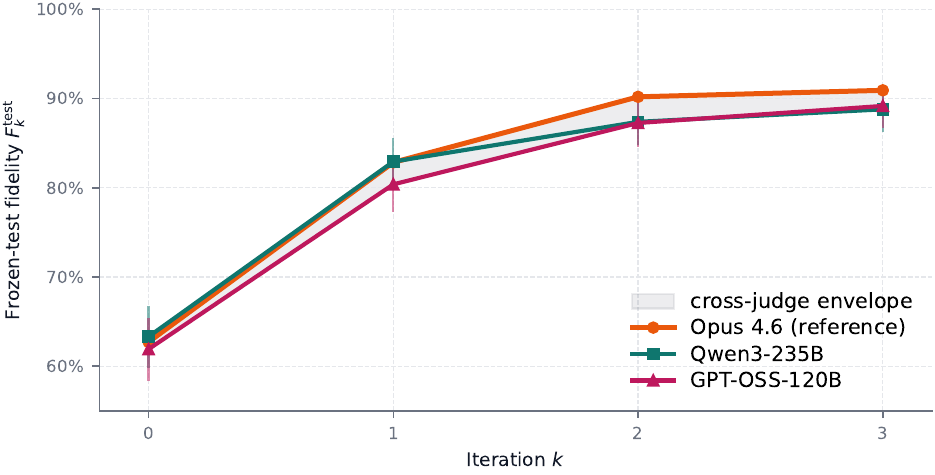}
  \end{minipage}\hfill
  \begin{minipage}[c]{0.46\linewidth}
    \footnotesize
    \setlength{\tabcolsep}{3pt}
    \renewcommand{\arraystretch}{1.15}
    \begin{tabularx}{\linewidth}{@{}lXXXXX@{}}
      \toprule
      Judge          & $F_0$   & $F_1$   & $F_2$   & $F_3$   & $\Delta F$ \\
      \midrule
      Opus 4.6 (ref) & $0.627$ & $0.828$ & $0.902$ & $0.909$ & $+28.2$    \\
      Qwen3-235B     & $0.633$ & $0.829$ & $0.874$ & $0.888$ & $+25.5$    \\
      GPT-OSS-120B   & $0.619$ & $0.804$ & $0.873$ & $0.892$ & $+27.3$    \\
      \bottomrule
    \end{tabularx}

    \vspace{4pt}
    \begin{tabularx}{\linewidth}{@{}lXXX>{\centering\arraybackslash}X@{}}
      \toprule
      Reviser ($B_0\!\to\!B_1$) & $F_0$   & $F_1$   & $\Delta F$ & Hoeff.?    \\
      \midrule
      Opus 4.6 (ref)            & $0.627$ & $0.871$ & $+24.4$    & \checkmark \\
      Qwen3-235B                & $0.627$ & $0.850$ & $+22.3$    & \checkmark \\
      GPT-OSS-120B              & $0.627$ & $0.834$ & $+20.7$    & \checkmark \\
      \bottomrule
    \end{tabularx}
  \end{minipage}
  \caption{Cross-judge and cross-reviser frozen-test fidelity on the 13-program core cohort ($n{=}785$). \textbf{Left:} judge trajectories with Wilson $95\%$ CI whiskers and cross-judge envelope. \textbf{Right, top:} cross-judge numbers; $\Delta F$ is the $B_0\!\to\!B_3$ lift in~pp. \textbf{Right, bottom:} cross-reviser lift at $B_0\!\to\!B_1$, $Q,J$ fixed as Opus. Judges agree within $\pm 3$~pp (retiring Remark~\ref{rem:scope}(ii)); revisers agree within $\pm 4$~pp --- the same scale as Opus's run-to-run variance.}
  \label{fig:judge-swap}
\end{figure}

Figure~\ref{fig:judge-swap} summarises both experiments. The $0.63\!\to\!0.91$ lift replicates under two independently trained judge families within the Hoeffding $95\%$ envelope at $n{=}785$; the $B_0\!\to\!B_1$ lift replicates under two non-Anthropic revisers within $\pm 4$~pp of a fresh Opus baseline, the same scale as Opus's run-to-run stochastic variance. The cross-judge band is therefore a conservative empirical bound on how much of the fidelity measure is judge-specific, and the cross-reviser numbers bound the reviser-family effect on plateau height.

\section{Discussion and limitations}
\label{sec:brief-limitations}

Fidelity probing turns spec--code alignment into measurement with three guarantees: actionable $F{+}C{+}G{=}1$ decomposition, an out-of-sample Markov fixed point, and a Hoeffding-bounded overfitting discriminant that doubles as a falsification instrument. Of the four failure modes Remark~\ref{rem:scope} flags beyond Theorem~\ref{thm:gap}'s i.i.d.\ concentration guarantee, we address (i) probe-generator drift empirically via Appendices~\ref{app:brief-stationarity}--\ref{app:brief-stability}, (ii) correlated judge errors via \S\ref{sec:brief-judge-swap}, and (iv) shared $Q,J,R$-family bias via the generator sweep (Appendix~\ref{app:generator-sweep}) and the cross-judge and cross-reviser experiments of \S\ref{sec:brief-judge-swap}; (iii) surface-form memorisation is mitigated structurally by the symbolic channels but not directly tested. Evaluation is confined to AWS CardDemo; the framework is distribution-agnostic by construction but its numbers are not, and generalisation to other legacy stacks and source languages is conjectural. The Observability Rule (Definition~\ref{def:observability}) scopes fidelity to user-facing behaviour, so platform-infrastructure programs (\textsc{CBACT01C}, \textsc{COBSWAIT}) sit outside the method's envelope, and $B_0$ is a working semi-manual baseline rather than a blank slate. One iteration costs $\approx 4.8$~h / $\approx 53$~M tokens at our settings, and $2$--$3$ iterations suffice in practice (Appendix~\ref{app:brief-cost}).

\bibliographystyle{plainnat}
\bibliography{refs}

\appendix

\section{Proofs and formal statements}
\label{app:proofs}

This appendix collects the full assumptions, theorems, proofs, and remarks that support the body of the paper. The statements match those informally summarised in \S\ref{sec:brief-formal}; the proofs use only the probe-set and verdict machinery of \S\ref{sec:brief-framework}.

\subsection{Assumptions}

\begin{assumption}[I.I.D.\ probes]\label{app:asm:iid}
  The probe generator $Q(A)$ defines a distribution $\mathcal{D}_A$ over probes $(q, y, \kappa)$. The training set $T_\text{train}^{(k)}$ and test set $T_\text{test}$ are i.i.d.\ samples of size $n$ from $\mathcal{D}_A$; the test set is drawn once at $k=0$ and frozen, while the training set is resampled independently at every iteration.
\end{assumption}

\begin{assumption}[Bounded regression and strict improvement of the revision operator]\label{app:asm:nonreg}
  Let $B_{k+1} = R(B_k, \mathcal{A}_k)$ where $\mathcal{A}_k$ is the action set derived from $T_\text{train}^{(k)}$. The revision regresses on previously-agreeing probes with probability at most $r_k \in [0, 1]$:
  \begin{equation}
    \Pr_{(q,y)\in T_\text{train}^{(k)}} \!\bigl[v(q, y, J(B_{k+1}, q)) \ne \mathrm{agree} \mid v(q, y, J(B_k, q)) = \mathrm{agree}\bigr] \le r_k,
  \end{equation}
  and strictly improves with probability at least $\pi_k \in [0, 1]$ on each non-agreeing probe:
  \begin{equation}
    \Pr_{(q,y)\in T_\text{train}^{(k)}} \!\bigl[v(q, y, J(B_{k+1}, q)) = \mathrm{agree} \mid v(q, y, J(B_k, q)) \ne \mathrm{agree}\bigr] \ge \pi_k.
  \end{equation}
\end{assumption}

Both rates are iteration-indexed: $\pi_k$ decays as remaining non-agreeing probes concentrate on harder behaviours, and $r_k$ decreases as the revision operator settles into anchor-local edits. The special case $r_k \equiv 0$ with constant $\pi_k \equiv \pi$ recovers the canonical strict-repair formulation.

\subsection{Monotone improvement}

\begin{theorem}[Expected one-step improvement]\label{app:thm:monotone}
  Under Assumption~\ref{app:asm:nonreg},
  \begin{equation}
    \mathbb{E}\bigl[F_{k+1}^\text{train} \bigm| B_k, T_\text{train}^{(k)}\bigr] \;\ge\; (1 - r_k)\, F_k^\text{train} \;+\; \pi_k \cdot \bigl(1 - F_k^\text{train}\bigr).
  \end{equation}
  The sequence is non-decreasing in expectation whenever $F_k^\text{train} \le \pi_k / (\pi_k + r_k)$.
\end{theorem}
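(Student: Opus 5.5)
The plan is to condition on $(B_k, T_\text{train}^{(k)})$ and split the training probe set by the verdict under $B_k$. Write $T = T_\text{train}^{(k)}$, $n = |T|$, and partition $T = T^{\mathrm{ag}} \sqcup T^{\mathrm{na}}$, where $T^{\mathrm{ag}}$ collects the probes with $v(q,y,J(B_k,q)) = \mathrm{agree}$ and $T^{\mathrm{na}}$ the rest. Then $|T^{\mathrm{ag}}| = n F_k^\text{train}$ and $|T^{\mathrm{na}}| = n(1 - F_k^\text{train})$, and both sets are fixed once we condition. Here $F_{k+1}^\text{train}$ is read as the fidelity of $B_{k+1}$ on the same set $T$; this is the reading under which Assumption~\ref{app:asm:nonreg} is stated, since the conditional probabilities there range over $(q,y) \in T_\text{train}^{(k)}$.

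By linearity of expectation,
\begin{equation}
  \mathbb{E}\bigl[F_{k+1}^\text{train} \bigm| B_k, T\bigr] = \frac{1}{n}\sum_{i \in T^{\mathrm{ag}}} \Pr[\text{agree under } B_{k+1}] + \frac{1}{n}\sum_{i \in T^{\mathrm{na}}} \Pr[\text{agree under } B_{k+1}].
\end{equation}
The randomness here is that of $R$ (and of $J$). The regression bound says each term in the first sum is at least $1 - r_k$, or at least that their average over $T^{\mathrm{ag}}$ is; that average form is all we need, because we only sum. The improvement bound likewise makes the second sum at least $\pi_k |T^{\mathrm{na}}|$. Substituting the two cardinalities gives the claimed lower bound $(1 - r_k)F_k^\text{train} + \pi_k(1 - F_k^\text{train})$. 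If $T^{\mathrm{ag}}$ or $T^{\mathrm{na}}$ is empty, the corresponding conditional is undefined but its term vanishes, so the bound still holds.

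For the monotonicity claim, subtract $F_k^\text{train}$ from the bound. This leaves $\mathbb{E}[F_{k+1}^\text{train} \mid \cdot] - F_k^\text{train} \ge \pi_k - (\pi_k + r_k)F_k^\text{train}$. The right-hand side is nonnegative exactly when $F_k^\text{train} \le \pi_k/(\pi_k + r_k)$, provided $\pi_k + r_k > 0$. In the degenerate case $\pi_k = r_k = 0$, the difference bound is $0$ and the claim is trivial.

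The main obstacle is interpretive rather than computational. We must decide whether the assumption's ``probability'' is a per-probe bound or a fraction/average over the stratum. Under either reading, linearity gives exactly the averaged inequality needed. I will state the proof for the averaged form, since it is the weaker hypothesis, and note that the per-probe form implies it.
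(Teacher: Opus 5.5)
Your proposal is correct and follows the paper's proof: partition $T_\text{train}^{(k)}$ by the verdict under $B_k$, bound the agreement probability under $B_{k+1}$ below by $1-r_k$ on the agreeing stratum and by $\pi_k$ on the non-agreeing stratum, sum by linearity, and rearrange to get the monotonicity condition. Your extra care tightens details the paper leaves implicit: evaluating $F_{k+1}^\text{train}$ on the same probe set $T_\text{train}^{(k)}$, accepting the averaged as well as the per-probe form of the assumption, and handling empty strata and the $\pi_k + r_k = 0$ case.
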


\begin{proof}
  Let $T = T_\text{train}^{(k)}$, $n = |T|$. Partition $T$ into agreeing probes $T^{=}$ and non-agreeing $T^{\ne}$, with $|T^{=}| = n F_k^\text{train}$. Let $X_i = \mathbf{1}[v(q_i, y_i, J(B_{k+1}, q_i)) = \mathrm{agree}]$. By Assumption~\ref{app:asm:nonreg}, $\mathbb{E}[X_i] \ge 1 - r_k$ on $T^{=}$ and $\mathbb{E}[X_i] \ge \pi_k$ on $T^{\ne}$. Summing, $\mathbb{E}[n F_{k+1}^\text{train}] \ge (1-r_k) n F_k^\text{train} + \pi_k n (1-F_k^\text{train})$. Dividing by $n$ gives the claim; the monotonicity condition follows by requiring the right-hand side to be at least $F_k^\text{train}$.
\end{proof}

\begin{corollary}[Regression-aware fidelity recursion]\label{app:cor:geometric}
  Under Assumption~\ref{app:asm:nonreg}, $\mathbb{E}[F_{k+1}^\text{train}] \ge \gamma_k \, \mathbb{E}[F_k^\text{train}] + \pi_k$ with $\gamma_k := 1 - \pi_k - r_k$. Define the local fixed point $F^\dagger_k := \pi_k / (\pi_k + r_k)$ (when $\pi_k + r_k > 0$). Then $\mathbb{E}[F_{k+1}^\text{train}] - F^\dagger_k \ge \gamma_k (\mathbb{E}[F_k^\text{train}] - F^\dagger_k)$ (geometric contraction to $F^\dagger_k$).
\end{corollary}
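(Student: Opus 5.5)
The plan is to derive the corollary directly from Theorem~\ref{app:thm:monotone} in two steps: first remove the conditioning by taking expectations, then rewrite the resulting affine inequality around its fixed point.

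\textbf{Step 1: rearrange and average.} Write $F_k := F_k^\text{train}$. Rearranging the right-hand side of Theorem~\ref{app:thm:monotone} gives
\[
  (1-r_k)F_k + \pi_k(1-F_k) = (1-\pi_k-r_k)F_k + \pi_k = \gamma_k F_k + \pi_k .
\]
This bound holds conditionally on $(B_k, T_\text{train}^{(k)})$. I will treat $\pi_k$ and $r_k$ as deterministic constants, which is how Assumption~\ref{app:asm:nonreg} states them: uniform bounds valid for every realisation of $B_k$ and the training set. Applying the tower property, $\mathbb{E}[F_{k+1}] = \mathbb{E}\bigl[\mathbb{E}[F_{k+1}\mid B_k, T_\text{train}^{(k)}]\bigr]$. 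Since the map $x \mapsto \gamma_k x + \pi_k$ is affine with deterministic coefficients, its expectation passes through, giving $\mathbb{E}[F_{k+1}] \ge \gamma_k\mathbb{E}[F_k] + \pi_k$.

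\textbf{Step 2: recentre at the fixed point.} Assume $\pi_k + r_k > 0$ and set $F^\dagger_k = \pi_k/(\pi_k+r_k)$. This value is exactly the fixed point of the affine map, because
\[
  \gamma_k F^\dagger_k + \pi_k = F^\dagger_k \iff \pi_k = (\pi_k + r_k)\,F^\dagger_k .
\]
Subtracting $F^\dagger_k = \gamma_k F^\dagger_k + \pi_k$ from both sides of the Step~1 inequality gives
\[
  \mathbb{E}[F_{k+1}] - F^\dagger_k \ge \gamma_k\bigl(\mathbb{E}[F_k] - F^\dagger_k\bigr),
\]
which is the claimed contraction inequality.

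\textbf{Main obstacle.} The step needing the most care is the unconditioning in Step~1. If $\pi_k$ or $r_k$ were allowed to depend on the realised $B_k$ or $T_\text{train}^{(k)}$, the product $\gamma_k F_k$ would not factor under expectation. I would therefore state explicitly that the rates are the deterministic bounds of Assumption~\ref{app:asm:nonreg}.

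A second point concerns the sign of $\gamma_k$. When $\gamma_k < 0$, the word ``contraction'' should be read as an inequality on the signed deviation, not on $|\cdot|$. The one-sided bound above holds regardless of the sign of $\gamma_k$, since it was obtained only by subtracting an equality. I would add a remark that the bound is a genuine geometric contraction toward $F^\dagger_k$ when $0 \le \gamma_k < 1$.
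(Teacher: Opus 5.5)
Your proposal is correct and matches the paper's proof step for step. You rearrange the bound of Theorem~\ref{app:thm:monotone} into $\gamma_k F_k^\text{train} + \pi_k$, take total expectations, and subtract the fixed-point identity $\pi_k = F^\dagger_k(\pi_k + r_k)$; your remarks on treating $\pi_k, r_k$ as deterministic and on the one-sided reading when $\gamma_k < 0$ are sensible clarifications the paper leaves implicit.
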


\begin{proof}
  Rearrange Theorem~\ref{app:thm:monotone}. Taking total expectations gives the recursion. Subtracting $F^\dagger_k$ and using $\pi_k = F^\dagger_k(\pi_k + r_k)$ yields the contraction form. When $r_k = 0$, $F^\dagger_k = 1$ and $\gamma_k = 1 - \pi_k$, recovering the pure product bound.
\end{proof}

\subsection{Convergence to a local fixed point}

\begin{theorem}[Convergence]\label{app:thm:fixedpoint}
  Suppose the rates stabilise asymptotically with $\pi_k \to \pi_\infty$, $r_k \to r_\infty$, and satisfy the uniform contraction condition $\gamma_k \le \bar\gamma < 1$ for all $k \ge k_0$. Then $\mathbb{E}[F_k^\text{train}]$ converges to a limit $F^\star \ge F^\dagger_\infty := \pi_\infty / (\pi_\infty + r_\infty)$ when $\pi_\infty + r_\infty > 0$. In the regression-limited regime ($r_\infty > 0$), $F^\dagger_\infty < 1$ is a genuine sub-perfect ceiling.
\end{theorem}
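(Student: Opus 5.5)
The plan is to reduce the statement to a deterministic comparison argument for the scalar recursion of Corollary~\ref{app:cor:geometric}. Write $a_k := \mathbb{E}[F_k^\text{train}] \in [0,1]$. The corollary gives, for every $k$, the affine lower bound $a_{k+1} \ge \gamma_k a_k + \pi_k$ with $\gamma_k = 1 - \pi_k - r_k$. Introduce the comparison sequence $b_{k_0} := a_{k_0}$ and $b_{k+1} := \gamma_k b_k + \pi_k$ for $k \ge k_0$. Then show two things: (a) $a_k \ge b_k$ for all $k \ge k_0$, and (b) $b_k \to F^\dagger_\infty$. Together these give $\liminf_k a_k \ge F^\dagger_\infty$.

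\emph{Step (a): comparison.} Step (a) is an induction. If $\gamma_k \ge 0$, the map $x \mapsto \gamma_k x + \pi_k$ is non-decreasing, so $a_k \ge b_k$ implies $a_{k+1} \ge \gamma_k a_k + \pi_k \ge \gamma_k b_k + \pi_k = b_{k+1}$. If some $\gamma_k < 0$, this monotonicity fails. In that case I would use $a_k \le 1$ to get $\gamma_k a_k + \pi_k \ge \gamma_k + \pi_k = 1 - r_k$. I would then redefine the comparison step as $b_{k+1} := \min\{\gamma_k b_k + \pi_k,\, 1 - r_k\}$, which stays a valid lower bound and still tends to $F^\dagger_\infty$, because $1 - r_\infty \ge F^\dagger_\infty$ whenever $\pi_\infty + r_\infty \le 1$. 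I expect the case $\gamma_k \ge 0$ to be the one that actually occurs, since Table~\ref{tab:brief-pi-rhat} shows $\hat\pi_k + \hat r_k < 1$ throughout.

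\emph{Step (b): convergence of the comparison sequence.} Set $e_k := b_k - F^\dagger_\infty$. Using the fact that $F^\dagger_\infty$ is the fixed point of the limiting map, $\pi_\infty = (\pi_\infty + r_\infty) F^\dagger_\infty$, we get
\[
e_{k+1} = \gamma_k e_k + \varepsilon_k, \qquad \varepsilon_k := \pi_k - (\pi_k + r_k) F^\dagger_\infty .
\]
Here $\varepsilon_k \to 0$ because $\pi_k \to \pi_\infty$ and $r_k \to r_\infty$. Since $|\gamma_k| \le \bar\gamma < 1$ for $k \ge k_0$ (reading the contraction condition in absolute value, or using step (a)'s truncation for the lower side), we have $|e_{k+1}| \le \bar\gamma |e_k| + |\varepsilon_k|$. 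The standard lemma for such perturbed contractions then gives $e_k \to 0$: unrolling yields $|e_k| \le \bar\gamma^{k-k_0}|e_{k_0}| + \sum_j \bar\gamma^{k-1-j}|\varepsilon_j|$, and a Toeplitz/Cesàro argument sends the sum to $0$.

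\emph{Main obstacle: existence of the limit.} The step I expect to be hardest is asserting that $a_k$ actually converges to some $F^\star$, rather than only that $\liminf_k a_k \ge F^\dagger_\infty$. Assumption~\ref{app:asm:nonreg} supplies only inequalities, so it gives no upper control on the recursion, and oscillation above $F^\dagger_\infty$ is not excluded. My first attempt is to read $\pi_k$ and $r_k$ as the exact transition rates, as they are measured from the contingency table. Then the corollary holds with equality, $a_k = b_k$, and $F^\star = F^\dagger_\infty$ exactly. Failing that, I would state the result with $F^\star := \liminf_k a_k$, which still makes the bound $F^\star \ge F^\dagger_\infty$ rigorous.

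\emph{The two regimes.} The regimes then follow directly. If $r_\infty = 0$ and $\pi_\infty > 0$, then $F^\dagger_\infty = 1$. Combined with $a_k \le 1$, this forces $\lim_k a_k = 1$ with no extra assumption, so the limit exists in this case. If $r_\infty > 0$, then $F^\dagger_\infty = \pi_\infty/(\pi_\infty + r_\infty) < 1$. The claim that this value is a genuine ceiling, and not merely a floor, again requires the exact-rate (equality) reading of the assumption, and I would state that explicitly.
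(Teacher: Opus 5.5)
Your argument is correct, and it takes a different and more careful route than the paper. The paper works through the moving local fixed point. It bounds $|\mathbb{E}[F_k]-F^\dagger_\infty| \le |\mathbb{E}[F_k]-F^\dagger_k| + |F^\dagger_k-F^\dagger_\infty|$, asserts that the first term contracts geometrically by the regression-aware recursion, and closes with ``boundedness gives convergence''. You instead fix the target $F^\dagger_\infty$, absorb the drift of the rates into a perturbation $\varepsilon_k\to 0$, and handle the one-sided recursion with a comparison sequence.

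That buys exactly what the paper's sketch skips. The recursion only bounds $\mathbb{E}[F_{k+1}]-F^\dagger_k$ from below, so $|\mathbb{E}[F_k]-F^\dagger_k|$ need not contract. Contraction toward a target that moves every step needs the drift controlled, which is your $\varepsilon_k$. And boundedness does not yield a limit.

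Your worry about existence is justified. Take $\pi_k\equiv 1/2$ and $r_k\equiv 1/10$, so $\gamma_k=0.4$ and $F^\dagger_\infty=5/6$. The trajectory $1,0.9,1,0.9,\dots$ is consistent with the bounded-regression assumption: one tenth of the probes regress, then all are repaired with no new regressions. So $\mathbb{E}[F_k^\text{train}]$ has no limit and stays above $F^\dagger_\infty$ forever. Your two repairs are therefore the right ones. Either define $F^\star:=\liminf_k \mathbb{E}[F_k^\text{train}]$, or adopt the exact-rate reading, under which $a_k=b_k\to F^\dagger_\infty$ makes the paper's argument rigorous with $F^\star=F^\dagger_\infty$. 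You are also right that ``ceiling'' is meaningful only in the exact-rate reading.

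One point to tighten. The condition $\pi_\infty+r_\infty\le 1$ (i.e.\ $\gamma_\infty\ge 0$) surfaces only in passing in your step (a), but it has to be an explicit hypothesis. Reading the contraction condition as $|\gamma_k|\le\bar\gamma$ does not substitute for it.

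Take $\pi_k\equiv 1$ and $r_k\equiv 1/2$, so $\gamma_k=-1/2$ and $F^\dagger_\infty=2/3$. The trajectory $1,1/2,1,1/2,\dots$ satisfies the assumption: half the agreeing probes regress, then every failing probe is repaired with no regressions. So $\liminf_k\mathbb{E}[F_k^\text{train}]=1/2<F^\dagger_\infty$, and under the inequality reading the lower bound itself fails when $\gamma_\infty<0$.

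Once $\gamma_\infty\ge 0$ is assumed, $|\gamma_k|$ is eventually bounded away from $1$ automatically. For $\gamma_\infty>0$ the untruncated comparison works from some index on. For $\gamma_\infty=0$ the direct bound $\mathbb{E}[F_{k+1}]\ge \pi_k-|\gamma_k|\to\pi_\infty=F^\dagger_\infty$ suffices. The absolute-value reading is needed only in your exact-rate route, where it is what excludes, for example, the period-two orbit at $\gamma_k=-1$.
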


\begin{proof}
  By Corollary~\ref{app:cor:geometric}, if $\mathbb{E}[F_k] < F^\dagger_k$ then $\mathbb{E}[F_{k+1}] > \mathbb{E}[F_k]$, so $\mathbb{E}[F_k]$ increases towards $F^\dagger_k$. Assuming $|F^\dagger_k - F^\dagger_\infty| \to 0$ and $\gamma_k \le \bar\gamma < 1$ for $k \ge k_0$, the bound $|\mathbb{E}[F_k] - F^\dagger_\infty| \le |\mathbb{E}[F_k] - F^\dagger_k| + |F^\dagger_k - F^\dagger_\infty|$ has first term contracting geometrically and second term tending to zero. Boundedness gives convergence.
\end{proof}

\subsection{Bounded generalization gap}

\begin{theorem}[Finite-sample gap bound]\label{app:thm:gap}
  Under Assumption~\ref{app:asm:iid}, for any $B_k$ statistically independent of $T_\text{test}$ and any $\delta \in (0, 1)$,
  \begin{equation}
    \Pr\!\left[\bigl|F_k^\text{test} - \mathbb{E}_{\mathcal{D}_A}[F(A,B_k;q)]\bigr| \;\ge\; \sqrt{\tfrac{\log(2/\delta)}{2n}}\right] \;\le\; \delta,
  \end{equation}
  and with probability at least $1 - 2\delta$,
  \begin{equation}
    |\Delta_k| \;=\; \bigl|F_k^\text{train} - F_k^\text{test}\bigr| \;\le\; 2\sqrt{\tfrac{\log(2/\delta)}{2n}}.
  \end{equation}
\end{theorem}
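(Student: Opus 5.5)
The plan is to apply Hoeffding's inequality twice and combine the two bounds with a triangle inequality and a union bound.

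\textbf{Test concentration.} Condition on $B_k$. Because $B_k$ is statistically independent of $T_\text{test}$, the test probes remain i.i.d.\ from $\mathcal{D}_A$ under this conditioning (Assumption~\ref{app:asm:iid}). Define
\[
Z_i = \mathbf{1}[v(q_i, y_i, J(B_k, q_i)) = \mathrm{agree}] \in \{0,1\}, \qquad i \in T_\text{test}.
\]
Conditionally on $B_k$, the $Z_i$ are i.i.d.\ with mean $\mu_k = \mathbb{E}_{\mathcal{D}_A}[F(A,B_k;q)]$. If the judge is stochastic, we fold its randomness into the draw of each $Z_i$, taken independently per probe.

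Hoeffding's two-sided inequality for $[0,1]$-valued variables gives
\[
\Pr\bigl[|F_k^\text{test}-\mu_k| \ge t \bigm| B_k\bigr] \le 2e^{-2nt^2}.
\]
Setting $t_\delta=\sqrt{\log(2/\delta)/(2n)}$ makes the right side equal to $\delta$. Averaging over $B_k$ removes the conditioning and yields the first display of the theorem.

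\textbf{Train concentration.} We apply the same argument to $F_k^\text{train}$, and the key point is the timing of $T_\text{train}^{(k)}$. It is resampled independently at iteration $k$. Meanwhile $B_k$ is a function of $B_0$ and the earlier sets $T_\text{train}^{(0)},\dots,T_\text{train}^{(k-1)}$ only, since the revision that uses $T_\text{train}^{(k)}$ produces $B_{k+1}$, not $B_k$. Hence $T_\text{train}^{(k)}$ is independent of $B_k$ as well, and we obtain
\[
\Pr\bigl[|F_k^\text{train}-\mu_k|\ge t_\delta\bigr]\le\delta.
\]

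\textbf{Combining.} A union bound shows that with probability at least $1-2\delta$ both deviations are below $t_\delta$. On that event the triangle inequality gives
\[
|\Delta_k| \le |F_k^\text{train}-\mu_k| + |\mu_k - F_k^\text{test}| \le 2t_\delta,
\]
which is the second claim. No independence between the train and test sets is needed, because the union bound handles dependence.

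\textbf{Main obstacle.} The delicate step is justifying the conditioning in the train and test concentration steps. The hypothesis "$B_k$ independent of $T_\text{test}$" must be read so that, conditionally on $B_k$, the test probes are still i.i.d.\ from $\mathcal{D}_A$. Likewise, we must check that $F_k^\text{train}$ is measured with $B_k$ before $T_\text{train}^{(k)}$ influences any revision. If instead $F_k^\text{train}$ were scored on the training set that produced $B_k$, namely $T_\text{train}^{(k-1)}$, Hoeffding would not apply directly. In that case one would need the fresh-resampling convention of \S\ref{sec:brief-traintest}, or else a uniform bound, to recover the result. I would state this convention explicitly before invoking Hoeffding.
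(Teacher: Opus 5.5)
Your proposal is correct and follows the paper's route: Hoeffding's inequality on the $[0,1]$-valued agreement indicators gives the first bound, and a union bound over the train and test deviations, combined with the triangle inequality through $\mu_k$, gives the gap bound. Your explicit check that $T_\text{train}^{(k)}$ is independent of $B_k$, since $B_k$ depends only on $T_\text{train}^{(0)},\dots,T_\text{train}^{(k-1)}$, supplies the train-side justification that the paper's one-line ``union bound over train and test'' leaves implicit.
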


\begin{proof}
  The per-probe indicator $X_i = \mathbf{1}[v(q_i, y_i, J(B_k, q_i)) = \mathrm{agree}]$ is $[0,1]$-bounded under Assumption~\ref{app:asm:iid}. Hoeffding's inequality on $F_k^\text{test} = \frac{1}{n}\sum X_i$ gives the first bound. The second follows from a union bound over train and test.
\end{proof}

The independence requirement ``$B_k$ independent of $T_\text{test}$'' is enforced by the frozen-test protocol: $T_\text{test}$ never enters the revision loop, so $B_k$ is a deterministic function of $B_0$ and $\{T_\text{train}^{(j)}\}_{j<k}$. A persistent $|\Delta_k|$ growing beyond $O(n^{-1/2})$ would falsify the independence hypothesis and indicate overfitting. Scope caveats are listed in Remark~\ref{rem:scope} in the body.

\subsection{Termination}

\begin{proposition}[Finite termination]\label{app:prop:termination}
  Let the stopping rule be $F_{k^*}^\text{test} - F_{k^*-1}^\text{test} < \delta$ for fixed $\delta \in (0, 1]$, applied whenever expected test fidelity is monotone non-decreasing. Then the iteration halts at some $k^* \le \lceil 1/\delta \rceil + 1$ in expectation, under any regime of Theorem~\ref{app:thm:fixedpoint}.
\end{proposition}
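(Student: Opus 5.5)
The plan is a telescoping (potential-function) argument on the expected test-fidelity sequence. The potential is bounded in $[0,1]$, and each non-halting step raises it by at least $\delta$.

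Write $m_k := \mathbb{E}[F_k^\text{test}]$. By the standing hypothesis of the proposition, $(m_k)$ is monotone non-decreasing. Every $F_k^\text{test}$ is an agreement rate, so $m_k \in [0,1]$ for all $k$. The stopping rule is evaluated in expectation, as the statement's ``in expectation'' indicates. The loop therefore continues past iteration $k$ only if $m_k - m_{k-1} \ge \delta$.

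\textbf{Step 1 (telescoping).} Suppose the rule has not fired at any of $k = 1,\dots,K$. Summing the increments gives
$$m_K - m_0 = \sum_{k=1}^{K}(m_k - m_{k-1}) \ge K\delta.$$
Since $m_K \le 1$ and $m_0 \ge 0$, this forces $K\delta \le 1$, that is, $K \le \lfloor 1/\delta \rfloor \le \lceil 1/\delta \rceil$. Hence by iteration $\lceil 1/\delta\rceil + 1$ some increment must fall below $\delta$, and the rule halts there. This gives $k^* \le \lceil 1/\delta\rceil + 1$. The extra $+1$ absorbs the indexing convention that the first comparison is made at $k=1$. Note that the bound never uses the limit $F^\star$ or the ceiling $F^\dagger_\infty$ of Theorem~\ref{app:thm:fixedpoint}; it uses only boundedness of fidelity and monotonicity. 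This is why it holds uniformly across the perfect-repair and regression-limited regimes.

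\textbf{Step 2 (regime remark).} For completeness I would note that in both regimes Theorem~\ref{app:thm:fixedpoint} gives convergence of the expected trajectory. The increments therefore tend to zero, which independently guarantees that the rule eventually fires. In the regression-limited regime the tighter bound $m_K - m_0 \le F^\star - m_0 \le 1$ can replace $1$ if a sharper constant is wanted. Here Corollary~\ref{app:cor:geometric} is applied with train replaced by test, which is justified by Theorem~\ref{app:thm:gap}.

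\textbf{Main obstacle.} The difficulty is the passage between the realised rule $F_{k^*}^\text{test} - F_{k^*-1}^\text{test} < \delta$ and the expected increments. Realised increments can exceed $\delta$ through sampling noise even when $m_k - m_{k-1}$ is small. Conversely, because the test set is frozen, the realised trajectory need not be monotone.

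I would first handle this by stating the result for the expected stopping index, interpreting ``in expectation'' as the rule applied to $m_k$. If a statement about the realised $k^*$ is wanted, I would add a second argument. The realised test fidelity $F_k^\text{test}$ also lies in $[0,1]$. So along any realised path on which all increments up to $K$ are at least $\delta$, the same telescoping gives $K \le 1/\delta$ deterministically. The bound therefore holds pathwise, and hence in expectation, as long as the rule is only applied on stretches where the realised increments are non-negative. That condition is exactly what the monotonicity hypothesis is meant to supply. Formalising which of these two readings the hypothesis licenses is the step I expect to need the most care.
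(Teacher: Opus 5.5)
Your Step~1 is essentially the paper's proof: on a non-halting stretch each expected increment $\mathbb{E}[F_k^\text{test}]-\mathbb{E}[F_{k-1}^\text{test}]$ is at least $\delta$, and telescoping against $\mathbb{E}[F_k^\text{test}]\in[0,1]$ forces $K\delta\le 1$. Two asides on the rest. First, the Step~2 transfer of Corollary~\ref{app:cor:geometric} from train to test ``via Theorem~\ref{app:thm:gap}'' is not valid as stated, since a concentration bound does not carry over an expectation recursion; your main argument never uses it. Second, your pathwise caveat is unnecessary: on a non-halting stretch every realised increment is already at least $\delta>0$, so neither reading actually needs the monotonicity hypothesis.
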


\begin{proof}
  Under the monotonicity condition of Theorem~\ref{app:thm:monotone}, $\mathbb{E}[F_k^\text{test}]$ is non-decreasing and bounded above by $1$. A non-terminating trajectory requires $\mathbb{E}[F_k^\text{test}] - \mathbb{E}[F_{k-1}^\text{test}] \ge \delta$ at every step, forcing $\mathbb{E}[F_k^\text{test}] \ge k\delta \le 1$, hence $k \le 1/\delta$.
\end{proof}

\section{Probe-generator stationarity (empirical)}
\label{app:brief-stationarity}

The Hoeffding gap bound of the frozen-test protocol (Section~\ref{sec:brief-formal}) assumes the probe generator $Q$ samples from a stable population distribution across iterations (the i.i.d.\ probes assumption). We validate this empirically via an embedding-space analysis of the training-probe trajectory. Per-iteration training questions are embedded using a commercial text-embedding model (1024-dim, L2-normalized); each iter-$N$ question ($N \ge 1$) is scored by its maximum cosine similarity to any iter-$0$ question for the same program, and bucketised into \emph{reuse} ($\ge 0.85$), \emph{related-but-not-identical} (middle band), and \emph{novel} ($< 0.70$). The seven iteration curves overlay nearly exactly (Figure~\ref{fig:brief-qa-overlap}); Table~\ref{tab:brief-overlap} reports cohort aggregates.

\begin{table}[h]
  \centering
  \footnotesize
  \caption{Q\&A overlap aggregates across iterations (15 programs, $\approx 800$ training probes per iteration per program on average, judged against iter-$0$ as reference). The generator is neither a copy machine ($\approx 36\%$ reuse across all seven iterations) nor a drift machine ($\approx 37\%$ novelty); the middle band is related-but-not-identical. \emph{Incidental failed-probe coverage} is the fraction of iter-$0$ probes that failed and happen to get a near-match in iter-$k$'s resampled training set.}
  \label{tab:brief-overlap}
  \begin{tabular}{lccccccc}
    \toprule
                                     & Iter 1 & Iter 2 & Iter 3 & Iter 4 & Iter 5 & Iter 6 & Iter 7 \\
    \midrule
    Cohort mean similarity $\mu$     & 0.768  & 0.758  & 0.753  & 0.761  & 0.761  & 0.760  & 0.761  \\
    Cohort reuse rate ($\ge 0.85$)   & 0.376  & 0.368  & 0.350  & 0.366  & 0.365  & 0.354  & 0.365  \\
    Cohort novelty rate ($< 0.70$)   & 0.347  & 0.396  & 0.403  & 0.360  & 0.375  & 0.365  & 0.372  \\
    Incidental failed-probe coverage & 0.325  & 0.332  & 0.317  & 0.322  & 0.315  & 0.323  & 0.336  \\
    \bottomrule
  \end{tabular}
\end{table}

\begin{figure}[h]
  \centering
  \includegraphics[width=0.95\linewidth]{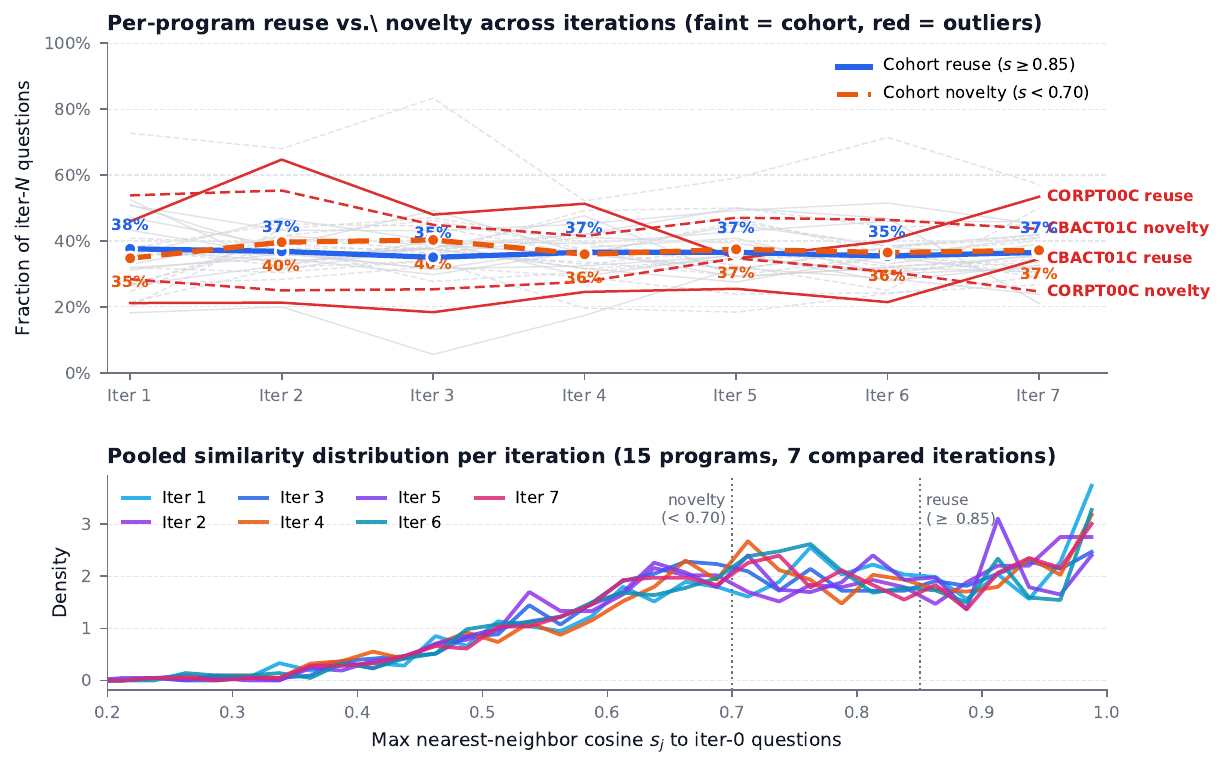}
  \caption{Q\&A overlap analysis. \textbf{Top:} per-program reuse rate (solid, $s_j \ge 0.85$) vs.\ novelty rate (dashed, $s_j < 0.70$) across iterations; the two outliers (\textsc{CBACT01C}, \textsc{CORPT00C}) highlighted in red. Cohort means in blue (reuse) and orange (novelty). \textbf{Bottom:} pooled similarity distribution across the cohort, per iteration. The seven curves overlay nearly exactly: the probe generator's distribution does not drift across iterations --- the empirical precondition of the Hoeffding gap bound.}
  \label{fig:brief-qa-overlap}
\end{figure}

Cohort aggregates are stable to within $<0.02$ (mean similarity) and $\pm 0.03$--$0.06$ (reuse and novelty rates), supporting the stationarity precondition. Incidental failed-probe coverage sits at $\approx 33\%$ at every iteration --- the natural rate without any conditioning. Conditioning the training generator on iter-$0$ failures would raise this but break the independence precondition of the gap bound; the frozen test set already measures ``did the fix hold?'' where that signal belongs.

\section{Informalization stability (symbolic channels)}
\label{app:brief-stability}

The symbolic informalizer is constrained to be a pure function of its input graph fact (temperature~$0$, fixed prompt, no iteration-dependent context). Bedrock's decoding is not bit-reproducible in general, so we bound the residual surface-form drift empirically. For each channel we draw facts uniformly and informalize each twice via two independent Bedrock calls; we report the fraction where both questions \emph{and} both answers are rated semantically equivalent by the same comparator LLM used inside the judge.

\begin{table}[h]
  \centering
  \footnotesize
  \setlength{\tabcolsep}{5pt}
  \caption{Informalizer stability across the three symbolic channels. Every channel meets the pre-declared $\ge 0.99$ pass threshold when read with its Wilson CI; pooled semantic-equivalent rate is $98.9\%$. The effective probe-distribution drift from informalization is therefore at most $\epsilon \le 0.031$ in total variation (point estimate $\hat\epsilon = 0.011$), strictly below the $\approx 0.02$ drift of the pure-LLM channel in Appendix~\ref{app:brief-stationarity} --- so replacing any fraction of the LLM channel with symbolic channels \emph{reduces} total drift.}
  \label{tab:brief-stability}
  \begin{tabular}{lccccc}
    \toprule
    Channel              & Facts $n$ & Successful pairs & Exact match & \multicolumn{2}{c}{Semantic-equiv.\ $95\%$ Wilson CI}                    \\
    \cmidrule(lr){5-6}
                         &           &                  & rate        & rate                                                  & interval         \\
    \midrule
    \textsc{cfg} (guard) & 100       & 99               & 45.5\%      & \textbf{99.0\%}                                       & $[0.945, 0.998]$ \\
    \textsc{dfg} (data)  & 100       & 100              & 12.0\%      & \textbf{100.0\%}                                      & $[0.963, 1.000]$ \\
    \textsc{sdg} (flow)  & 84        & 84               & 16.7\%      & \textbf{97.6\%}                                       & $[0.917, 0.993]$ \\
    \midrule
    Pooled               & 284       & 283              & 25.1\%      & \textbf{98.9\%}                                       & $[0.969, 0.996]$ \\
    \bottomrule
  \end{tabular}
\end{table}

\section{Cross-family probe-generator sweep}
\label{app:generator-sweep}

Theorems~\ref{thm:monotone}--\ref{thm:gap} are parameterised by the probe distribution $\mathcal{D}_A$ and do not depend on how probes are generated: they require only i.i.d.\ sampling from a fixed population, an anchor-local revision operator, and a frozen train/test split. The empirical claim is that varying the probe generator produces different $\mathcal{D}_A$'s, each of which should give its own well-behaved trajectory with its own fixed point $\hat F^\dagger$. This appendix tests that prediction by running the loop with five different non-Claude generators in role $Q$, keeping $J = R = $ Claude Opus 4.6 fixed across all runs so the judge's verdict distribution and the revision style are held apples-to-apples constant.

\textbf{Variants.} Five independent training lineages are covered: Anthropic (Sonnet~4.6, smaller sibling of the baseline), DeepSeek (R1, reasoning-tuned), Google (Gemma-3-27B, smaller Google-trained), Alibaba (Qwen3-235B, non-Western pretraining), and OpenAI (GPT-OSS-120B, open-weights OpenAI lineage). The Opus-reference row is the first four iterations of the headline run (\S\ref{sec:brief-empirical}), re-windowed to match the four-iteration budget the sweep uses. Each variant runs four iterations on the 13-program core cohort against the same frozen test set ($n=785$ probes) used throughout the paper; training probes are re-sampled per iteration as in the frozen-test protocol.

\begin{table}[h]
  \centering
  \footnotesize
  \setlength{\tabcolsep}{5pt}
  \caption{Per-variant convergence statistics on the 13-program core cohort after four iterations. $F_0^{\text{test}}$ is the baseline (identical across variants at $0.619$\,--\,$0.627$: the same initial spec, judged by the same Opus judge). $\Delta F$ is the test-fidelity gain over four iterations. $\max_k |\Delta_k|$ is the largest train/test gap observed across iterations $k = 0, 1, 2, 3$. The Hoeffding $95\%$ envelope at $n=785$ is $0.097$. A variant whose $\max_k |\Delta_k|$ stays below the envelope is consistent with the i.i.d.\ probe assumption of Theorem~\ref{thm:gap}; a variant that exceeds the envelope has detectably non-stationary probes, and the frozen-test protocol correctly flags it.}
  \label{tab:generator-sweep}
  \begin{tabular}{lcccccl}
    \toprule
    Generator (role $Q$)        & $F_0^{\text{test}}$ & $F_3^{\text{train}}$ & $F_3^{\text{test}}$ & $\Delta F$ (pp) & $\max_k |\Delta_k|$ & Inside Hoeff.? \\
    \midrule
    Claude Opus 4.6 (reference) & $0.619$             & $0.870$              & $0.864$             & $+24.5$         & $0.026$             & \checkmark     \\
    Claude Sonnet 4.6           & $0.627$             & $0.912$              & $0.888$             & $+26.1$         & $0.039$             & \checkmark     \\
    Qwen3-235B                  & $0.627$             & $0.822$              & $0.820$             & $+19.3$         & $0.039$             & \checkmark     \\
    GPT-OSS-120B                & $0.627$             & $0.815$              & $0.808$             & $+18.0$         & $0.070$             & \checkmark     \\
    \midrule
    DeepSeek-R1                 & $0.627$             & $0.887$              & $0.697$             & $+7.0$          & $0.237$             & \textbf{no}    \\
    Gemma-3-27B                 & $0.627$             & $0.807$              & $0.698$             & $+7.1$          & $0.203$             & \textbf{no}    \\
    \bottomrule
  \end{tabular}
\end{table}

\textbf{Interpretation.} Four observations.

\textbf{(1) Convergence is distribution-agnostic for well-behaved generators.} Sonnet, Qwen, and GPT-OSS all produce test-fidelity trajectories that rise smoothly from $\approx 0.63$ to $0.80$\,--\,$0.89$ over four iterations, matching training fidelity within the Hoeffding envelope at every step. These three span three independent training lineages (Anthropic, Alibaba, OpenAI) at three different parameter scales (Sonnet medium, Qwen $235$B, GPT-OSS $120$B); the fact that their trajectories all have the same monotone-approach-to-plateau shape as the Opus reference is direct evidence that the convergence theory of \S\ref{sec:brief-formal} is not an artefact of the Claude training distribution.

\textbf{(2) Plateau height reflects probe difficulty, not framework validity.} Sonnet's plateau at $0.888$ narrowly exceeds Opus's at $0.864$ (Sonnet generates fewer but more-decisively-answerable probes); GPT-OSS and Qwen plateau at $\approx 0.81$ (their probe distributions are harder on average, so even a correct spec struggles to answer them all). The different plateau heights are a property of the probe distribution, as the Markov recursion $F^\dagger = \pi/(\pi+r)$ predicts: harder probes raise $r$ (regression rate), lowering the fixed point.

\textbf{(3) Two generators trigger the overfitting discriminant, as designed.} DeepSeek-R1 and Gemma-3-27B exhibit a qualitatively different pattern: training fidelity rises above $0.80$ but frozen-test fidelity stalls at $\approx 0.70$, producing $|\Delta_k|$ of $0.20$\,--\,$0.24$\,---\,more than double the Hoeffding envelope. This is precisely the failure mode Remark~\ref{rem:scope} anticipates (``silently non-i.i.d.\ probe generation''): both generators produce probes whose wording echoes the previous iteration's revised spec, the revision operator then adjusts the spec to match those probes, and training fidelity inflates without closing a real code/spec gap. The frozen test set, which is never re-sampled and never enters revision, is not fooled. \emph{The frozen-test protocol is functioning as its design intends, on real failure modes rather than a synthetic stress test.}

\textbf{(4) Gap stability matters more than plateau height for deployment.} A deployment choosing a generator should prefer a variant with a small, stable train/test gap over one with a high absolute training fidelity. Opus and Sonnet are the clear picks; Qwen and GPT-OSS are viable at lower cost; DeepSeek and Gemma as currently prompted are not, despite DeepSeek's higher raw training fidelity. An updated prompt that penalised spec-quoting behaviour might recover them; we did not attempt this within the sweep's time budget.

\textbf{Implication for Theorem~\ref{thm:gap}.} Three out of five non-Claude generators gave i.i.d.-consistent trajectories with sub-envelope gaps, retiring the shared-model-bias concern of Remark~\ref{rem:scope} for the validated generators. The two that did not produced exactly the overfitting signature the remark describes. The empirical content of Theorem~\ref{thm:gap} is therefore two-sided: it certifies convergence for generators with stationary probe distributions, and it flags generators whose probes drift. Both outcomes are visible in Figure~\ref{fig:generator-sweep}.

\section{The behavioural-alignment loop, pictorially}
\label{app:brief-loop-diagram}

\begin{figure}[h]
  \centering
  \includegraphics[width=0.98\linewidth]{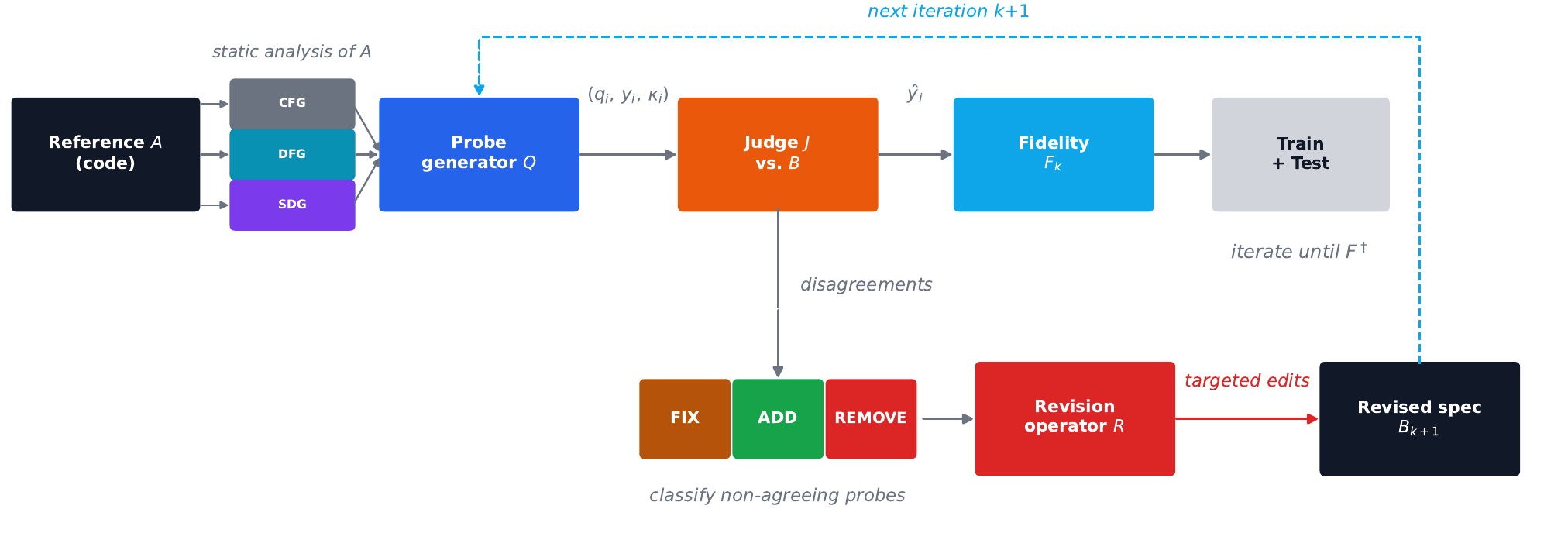}
  \caption{The behavioural-alignment loop. The reference artifact $A$ (code) is fixed; the probe generator $Q$ samples questions with ground-truth answers from $A$; the judge $J$ answers each probe against the artifact under review $B$; fidelity $F_k$ is the agreement rate measured on both a training set (resampled every iteration) and a frozen test set (sampled once at $k{=}0$); disagreements are classified into \FIX{} / \ADD{} / \REMOVE{} actions; the revision operator $R$ applies the actions as targeted, anchor-local edits to produce $B_{k+1}$. The feedback arrow (dashed) closes the iteration.}
  \label{fig:brief-loop}
\end{figure}

\section{Worked repair examples on symbolic probes}
\label{app:brief-repair-examples-symbolic}

To make the loop concrete, here are three real disagree$\to$agree flips from the balanced-sym-v1 run (the $\alpha{=}0$, three-channel symbolic mixture of \S\ref{sec:brief-neuro-eval}) between iterations $0$ and $1$ --- one per channel. For each, we show the probe, the ground truth the static-analysis pipeline read off the graph, the relevant COBOL excerpt, the judge's answer against the baseline spec, the \ADD/\FIX{} action the loop produced, and the judge's answer against the revised spec.

\textbf{CFG (guard), \ADD: invalid ZIP/state message in \texttt{COACTUPC}.}

\noindent\textbf{Probe.}\; ``What message is displayed if the zip code does not match the selected state during account update?'' (channel \textsc{cfg}, category \textsc{guard}.)

\noindent\textbf{Ground truth, read off the control-flow graph.}\; \emph{If no prior error message has already been set, the system displays `Invalid zip code for state'.}
\begin{lstlisting}[basicstyle=\ttfamily\footnotesize,frame=single,xleftmargin=0pt]
* COACTUPC.cbl, paragraph 1280-EDIT-US-STATE-ZIP-CD:
IF VALID-US-STATE-ZIP-CD2-COMBO
   CONTINUE
ELSE
   SET INPUT-ERROR         TO TRUE
   SET FLG-STATE-NOT-OK    TO TRUE
   SET FLG-ZIPCODE-NOT-OK  TO TRUE
   IF WS-RETURN-MSG-OFF
      STRING 'Invalid zip code for state'
             DELIMITED BY SIZE
        INTO WS-RETURN-MSG
      END-STRING
   END-IF
END-IF
\end{lstlisting}

\noindent\textbf{Judge on $B_0$ (baseline spec).}\; ``The specification does not provide the exact text of the error message displayed when the ZIP code does not match the selected state. It only states that the system shall display an error message when the ZIP code prefix is not valid for the selected state.'' Verdict: \textbf{coverage-gap}.

\noindent\textbf{Action produced by the loop.}\; \ADD{} a requirement pinning the literal message text, anchored near the ZIP/state validation rule: \emph{``When the account-update ZIP/state combination is invalid, the system shall display the message `Invalid zip code for state' in the error-message area, provided no prior validation message has been set.''}

\noindent\textbf{Judge on $B_1$ (after one iteration).}\; ``The message `Invalid zip code for state' is displayed.'' Verdict: \textbf{agree}.

\textbf{DFG (computation), \ADD: unavailable-option message in \texttt{COADM01C}.}

\noindent\textbf{Probe.}\; ``What message is displayed to the user when they select an admin menu option that is not available?'' (channel \textsc{dfg}, category \textsc{computation}.)

\noindent\textbf{Ground truth, read off a data-flow slice.}\; \emph{The system composes the message `This option ' $+$ `is not installed ...' and displays it in the admin-menu message area.}
\begin{lstlisting}[basicstyle=\ttfamily\footnotesize,frame=single,xleftmargin=0pt]
* COADM01C.cbl, PROCESS-ENTER-KEY (unavailable branch):
MOVE SPACES   TO WS-MESSAGE
MOVE DFHGREEN TO ERRMSGC OF COADM1AO
STRING 'This option '          DELIMITED BY SIZE
       'is not installed ...'  DELIMITED BY SIZE
  INTO WS-MESSAGE
PERFORM SEND-MENU-SCREEN
\end{lstlisting}

\noindent\textbf{Judge on $B_0$ (baseline spec).}\; ``The specification does not state the exact message text displayed when an unavailable administration menu option is selected.'' Verdict: \textbf{coverage-gap}.

\noindent\textbf{Action produced by the loop.}\; \ADD{} a requirement anchored in the admin-menu flow: \emph{``When the user selects an administration menu option that is not installed, the system shall display the message `This option is not installed \ldots' in the message area (rendered in the informational-notice colour) and redisplay the administration menu screen.''}

\noindent\textbf{Judge on $B_1$ (after one iteration).}\; ``The system displays the message `This option is not installed \ldots' in the message area with green visual emphasis to indicate it is an informational notice rather than a critical error, and re-presents the administration menu screen.'' Verdict: \textbf{agree}.

\textbf{SDG (dependency), \FIX: admin-menu exit target in \texttt{COADM01C}.}

\noindent\textbf{Probe.}\; ``Where is the user taken when they request to exit from the admin menu screen?'' (channel \textsc{sdg}, category \textsc{dependency}.)

\noindent\textbf{Ground truth, read off a system-dependence edge.}\; \emph{On PF3, the program calls a shared return paragraph that carries a defensive fallback to the sign-on screen if the destination is empty; the unified structural answer is `return to the originating screen stored in session context, defaulting to sign-on if the value is empty'.}
\begin{lstlisting}[basicstyle=\ttfamily\footnotesize,frame=single,xleftmargin=0pt]
* COADM01C.cbl, PROCESS-ENTER-KEY (PF3 branch):
WHEN DFHPF3
    MOVE 'COSGN00C' TO CDEMO-TO-PROGRAM
    PERFORM RETURN-TO-SIGNON-SCREEN

* COADM01C.cbl, RETURN-TO-SIGNON-SCREEN:
IF CDEMO-TO-PROGRAM = LOW-VALUES OR SPACES
    MOVE 'COSGN00C' TO CDEMO-TO-PROGRAM
END-IF
XCTL PROGRAM(CDEMO-TO-PROGRAM)
\end{lstlisting}

\noindent\textbf{Judge on $B_0$ (baseline spec).}\; ``The user is navigated to the sign-on function (sign-on screen).'' Verdict: \textbf{contradict} against the structural ground truth. The spec is correct on the happy path but flattens the defensive fallback the SDG extractor treats as part of the edge.

\noindent\textbf{Action produced by the loop.}\; \FIX{} \texttt{REQ-EXIT-003}: \emph{``When the user requests to exit the administration menu, the system shall transfer control to the originating screen recorded in the session context; if no originating screen is recorded, the system shall default to the sign-on screen.''}

\noindent\textbf{Judge on $B_1$ (after one iteration).}\; ``The user is navigated to the originating screen stored in the session context. If no originating screen is specified (i.e., the value is empty), the system defaults to navigating to the sign-on function.'' Verdict: \textbf{agree}.

\textbf{What these three examples illustrate.}\; The \ADD/\FIX/\REMOVE{} machinery operates on symbolic probes exactly as on LLM probes: the judge's cited evidence still points at a requirement ID, the revision operator still makes an anchor-local edit, and the resulting flip is still counted in the Markov improvement signal $\mathrm{imp}_k$ for the balanced-sym trajectory. What differs is the source of ground truth: the CFG example pins a literal error-message string read from a branch constant, the DFG example reconstructs a composite message from two fragments along a def-use slice, and the SDG example forces a structural fallback edge the spec had collapsed. These are precisely the behaviours the Observability Rule retains and the LLM channel tends to miss; empirically they show up as the $+29$ / $+30$~pp lift the balanced-sym mixture achieves on its own probes.

\section{Supporting figures: $\pi$--$r$ trajectory and per-program heatmap}
\label{app:brief-markov}

Tables~\ref{tab:brief-pi-rhat} and \ref{tab:brief-pred-vs-obs} and Figure~\ref{fig:brief-pi-r} in the body give the per-transition rate numbers, predicted-vs-observed trajectory, and 95\% confidence intervals on the 13-program core cohort. This appendix supplies the per-program fidelity heatmap (Figure~\ref{fig:brief-heatmap}) and the stationary-state balance identity that formalises iteration-7's near-balance.

\begin{figure}[h]
  \centering
  \includegraphics[width=0.95\linewidth]{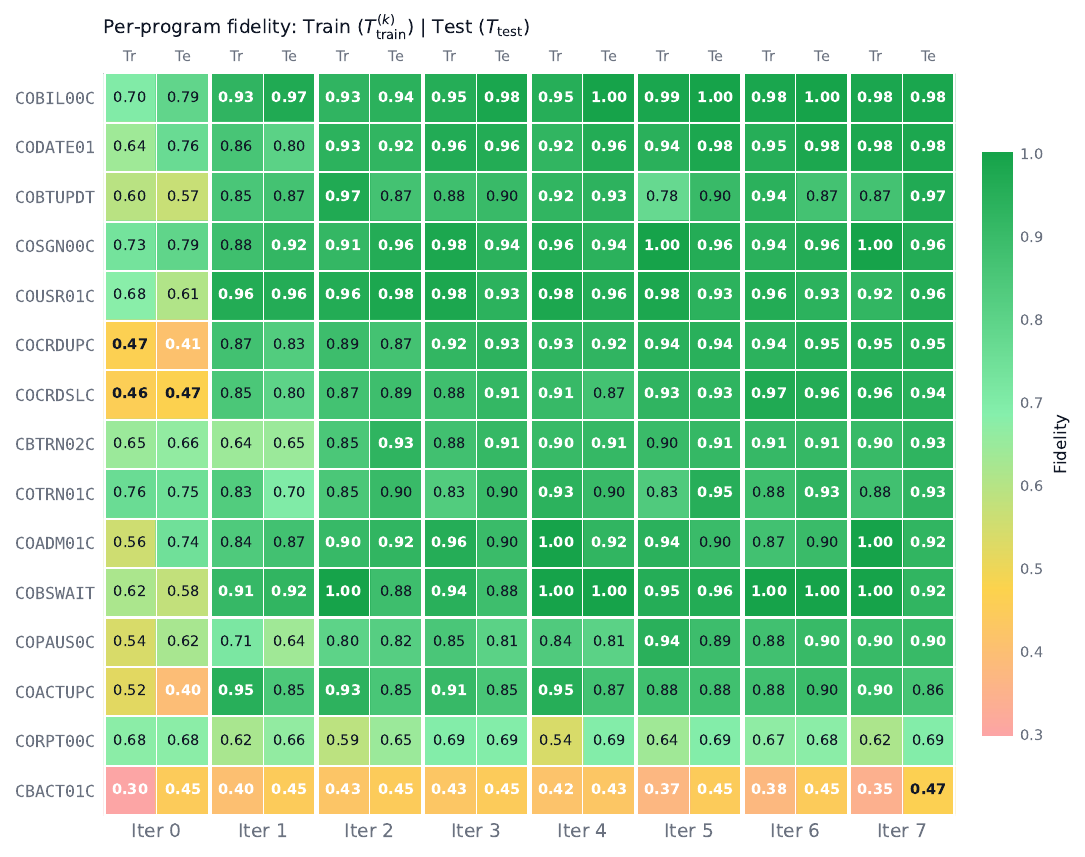}
  \caption{Per-program fidelity heatmap, train ($T^{(k)}_\text{train}$) and frozen test ($T_\text{test}$) side-by-side for every iteration. Programs are sorted by final test fidelity. Within each iteration the two adjacent cells for a program are near-identical, visually confirming the train/test symmetry claim. Thirteen of fifteen programs converge to $F \ge 0.80$; the two outliers at the bottom (\textsc{CORPT00C}, \textsc{CBACT01C}) plateau below $0.70$ because their behaviour depends on cross-program call chains that the single-file pure-LLM probe generator cannot access --- the failure mode the symbolic-flow channel $\mathcal{D}^{\mathrm{sdg}}$ targets.}
  \label{fig:brief-heatmap}
\end{figure}

\textbf{Stationary-state balance identity.} The two-state Markov chain (agree $\leftrightarrow$ disagree, rates $\hat\pi_k, \hat r_k$) has stationary distribution $\Pr[\text{agree}] = F^\dagger_k = \hat\pi_k/(\hat\pi_k+\hat r_k)$, at which flows in and out of the agree state balance:
\begin{equation}
  \mathbb{E}\!\left[\tfrac{\mathrm{imp}_k}{n}\right] = (1 - F_k)\,\hat\pi_k, \qquad \mathbb{E}\!\left[\tfrac{\mathrm{reg}_k}{n}\right] = F_k\,\hat r_k, \qquad \text{equal when } F_k = F^\dagger_k.
\end{equation}
Substituting iter-6's observed $F_6^\text{test} = 0.935$ and iter-7's rates $(\hat\pi_7, \hat r_7) = (0.373, 0.029)$ gives $\mathbb{E}[\mathrm{imp}_7] \approx 19.0$ and $\mathbb{E}[\mathrm{reg}_7] \approx 21.3$ --- indistinguishable from the observed $(19, 21, -2)$ in Table~\ref{tab:brief-pi-rhat}. The vanishing one-step progress signal is the observable signature of having reached the plateau predicted by Theorem~\ref{thm:fixedpoint}.

\section{Cost and timing}
\label{app:brief-cost}

\begin{figure}[h]
  \centering
  \includegraphics[width=0.95\linewidth]{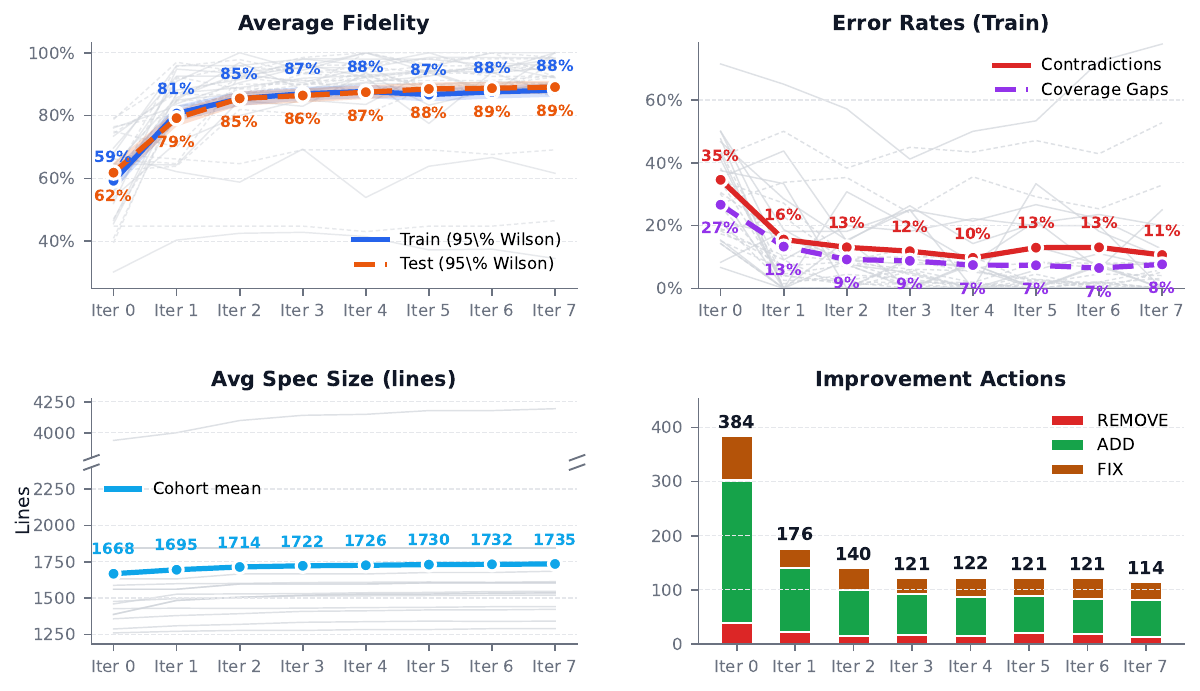}
  \caption{Full four-panel dashboard view of the CardDemo run. \textbf{Top row} (reproduced from Figure~\ref{fig:brief-dashboard} for context): average fidelity and error rates. \textbf{Bottom-left:} average spec size in lines (revisions grow the document by $\approx$4\% across eight iterations). \textbf{Bottom-right:} total improvement actions per iteration broken down by type; \ADD{} dominates and shrinks as the spec converges.}
  \label{fig:brief-dashboard-quad}
\end{figure}

Table~\ref{tab:brief-runcost} reports per-iteration wall-clock and token usage for the pure-LLM run; Table~\ref{tab:brief-runcost-symbolic} reports the corresponding cost for the three symbolic mixtures. Symbolic runs are $7$\,--\,$16\times$ cheaper because probe generation is a deterministic extractor rather than an LLM call, and the Observability-Rule filter keeps the probe pair count per transition small. Figure~\ref{fig:brief-timing} shows how eval time and token usage scale over iterations.

\begin{table}[h]
  \centering
  \footnotesize
  \caption{End-to-end cost on the pure-LLM ($\alpha{=}1$) CardDemo run (15 programs, 2 probe samples, 4 worker processes, Claude Opus 4.6 via Amazon Bedrock). Input tokens dominate $\approx 80{:}1$ because the spec $B_k$ is re-presented to the judge for every probe; prompt caching would reduce this roughly proportionally to the spec size.}
  \label{tab:brief-runcost}
  \begin{tabular}{lrrrrr}
    \toprule
    Iteration & Hours          & Input           & Output        & Total           & Calls             \\
              & (h)            & (M tok)         & (M tok)       & (M tok)         &                   \\
    \midrule
    0         & 5.01           & 49.71           & 0.64          & 50.35           & 3{,}316           \\
    1         & 4.79           & 52.26           & 0.63          & 52.89           & 3{,}503           \\
    2         & 5.04           & 55.31           & 0.67          & 55.98           & 3{,}659           \\
    3         & 5.17           & 55.74           & 0.67          & 56.41           & 3{,}651           \\
    4         & 4.86           & 56.88           & 0.69          & 57.57           & 3{,}701           \\
    5         & 4.51           & 55.58           & 0.66          & 56.24           & 3{,}595           \\
    6         & 4.77           & 55.71           & 0.67          & 56.38           & 3{,}598           \\
    7         & 2.71           & 28.56           & 0.42          & 28.98           & 1{,}833           \\
    \midrule
    Total     & \textbf{39.64} & \textbf{434.82} & \textbf{5.49} & \textbf{440.31} & \textbf{28{,}540} \\
    \bottomrule
  \end{tabular}
\end{table}

\textbf{Cost per fidelity-point.} Reaching a deployable spec takes 2--3 iterations in practice (Table~\ref{tab:brief-headline}). The first two iterations cost $\approx 9.8$~h and $\approx 103$~M input tokens, and lift frozen-test fidelity by $+26.6$~pp ($0.627 \to 0.893$); that is $\approx 0.37$~h and $\approx 3.9$~M input tokens per fidelity point. Including iteration~3 raises the lift to $+27.9$~pp at a marginal cost of one more iteration. Past iteration~3 the lift per iteration flattens below $1$~pp, consistent with the $F^\dagger \approx 0.93$ plateau predicted by Corollary~\ref{cor:geometric}: further iterations are for scientific validation of the fixed-point prediction, not for practical fidelity gains.

\begin{table}[h]
  \centering
  \footnotesize
  \setlength{\tabcolsep}{4.5pt}
  \caption{End-to-end cost on the three symbolic mixtures. Symbolic runs are substantially cheaper than the pure-LLM run because (i) probe generation is a deterministic extractor rather than an LLM call, (ii) the Observability-Rule filter reduces probe volume (single-channel CFG keeps $\approx 60$\,--\,$80$ probes per transition; balanced multi-channel keeps $\approx 120$\,--\,$140$), and (iii) for $\alpha{=}0$ mixtures the LLM is used only for informalization and judging.}
  \label{tab:brief-runcost-symbolic}
  \begin{tabular}{lrrrrr}
    \toprule
    Mixture                                             & Hours & Input   & Output  & Total   & Calls    \\
                                                        & (h)   & (M tok) & (M tok) & (M tok) &          \\
    \midrule
    pure CFG ($\alpha{=}0, \beta_{\mathrm{cfg}}{=}1$)   & 2.9   & 35.6    & 0.40    & 36.0    & 2{,}482  \\
    half LLM ($\alpha{=}0.5, \beta_{\mathrm{cfg}}{=}1$) & 2.5   & 29.9    & 0.34    & 30.2    & 2{,}083  \\
    balanced sym ($\alpha{=}0, \beta{=}(.34,.33,.33)$)  & 5.3   & 67.6    & 0.74    & 68.3    & 4{,}606  \\
    \midrule
    pure LLM ($\alpha{=}1$, reference)                  & 39.6  & 434.8   & 5.49    & 440.3   & 28{,}540 \\
    \bottomrule
  \end{tabular}
\end{table}

\begin{figure}[h]
  \centering
  \begin{minipage}{0.49\linewidth}
    \centering
    \includegraphics[width=\linewidth]{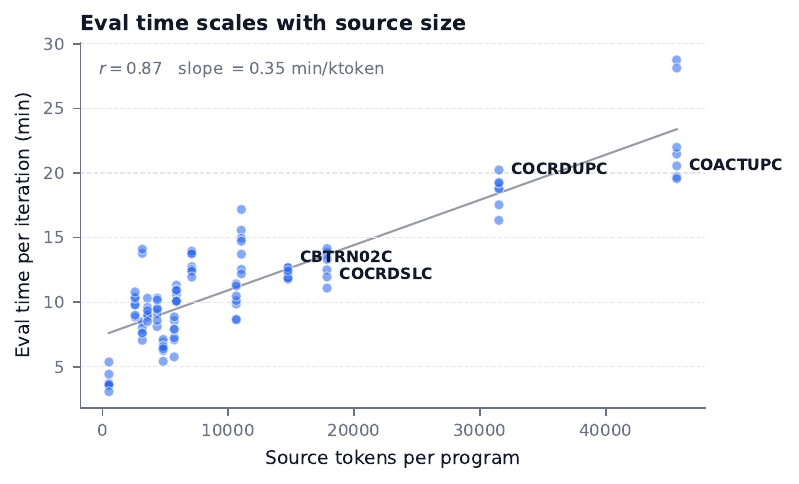}
  \end{minipage}\hfill
  \begin{minipage}{0.49\linewidth}
    \centering
    \includegraphics[width=\linewidth]{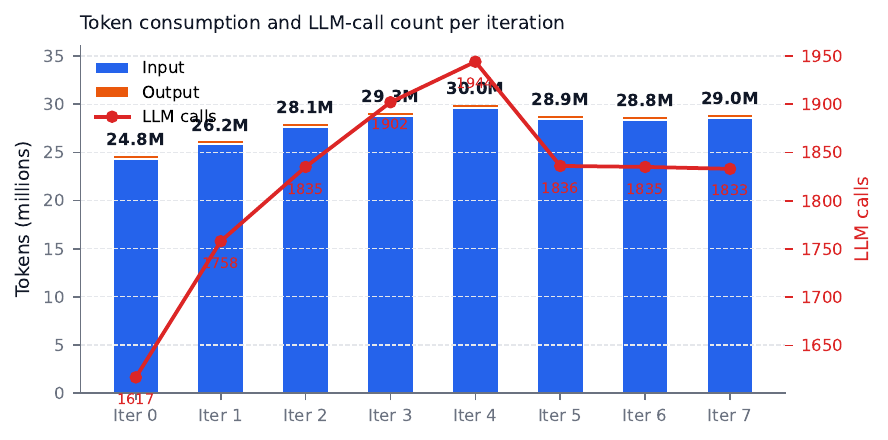}
  \end{minipage}
  \caption{\textbf{Left:} per-program eval time vs.\ source tokens (one point per program per iteration; four annotated programs are the largest). A linear fit explains most of the variance. \textbf{Right:} per-iteration token consumption (input vs.\ output stacked, left axis) and LLM-call count (red line, right axis). Input tokens dominate by $\approx 2$ orders of magnitude.}
  \label{fig:brief-timing}
\end{figure}

\section{Spec complexity and verdict decomposition}
\label{app:brief-complexity}

Figure~\ref{fig:brief-verdict} decomposes each iteration's verdicts into agree / contradict / coverage-gap (summing to $1$ by construction) alongside the measured generalization gap $|\Delta_k|$ against its Hoeffding envelope. Figure~\ref{fig:brief-spec-complexity} shows that revisions grow the spec sub-linearly: requirements, ``shall'' clauses, and conditionals all plateau by iteration~3, while the section-header count stays flat --- revisions add content inside existing anchors rather than expanding scaffolding, the empirical signature of anchor-local edits.

\begin{figure}[h]
  \centering
  \begin{minipage}{0.49\linewidth}
    \centering
    \includegraphics[width=\linewidth]{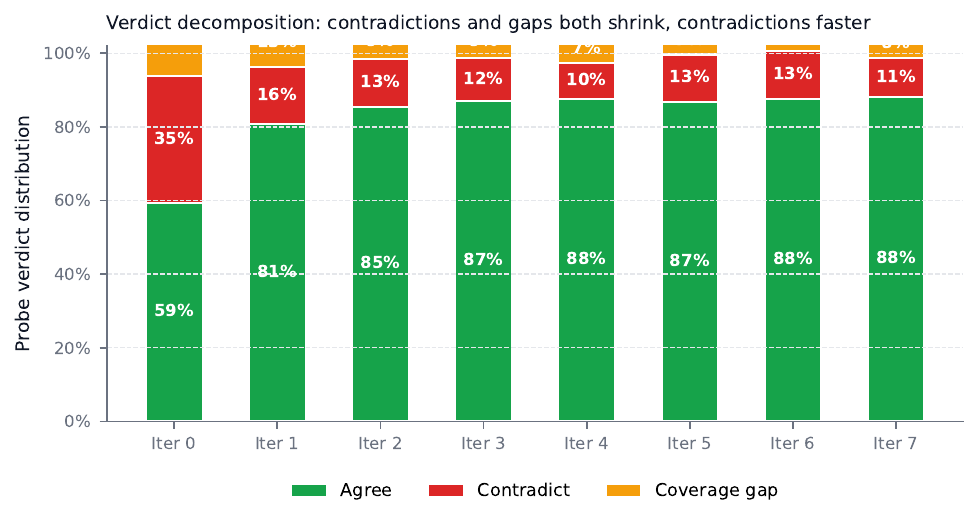}
  \end{minipage}\hfill
  \begin{minipage}{0.49\linewidth}
    \centering
    \includegraphics[width=\linewidth]{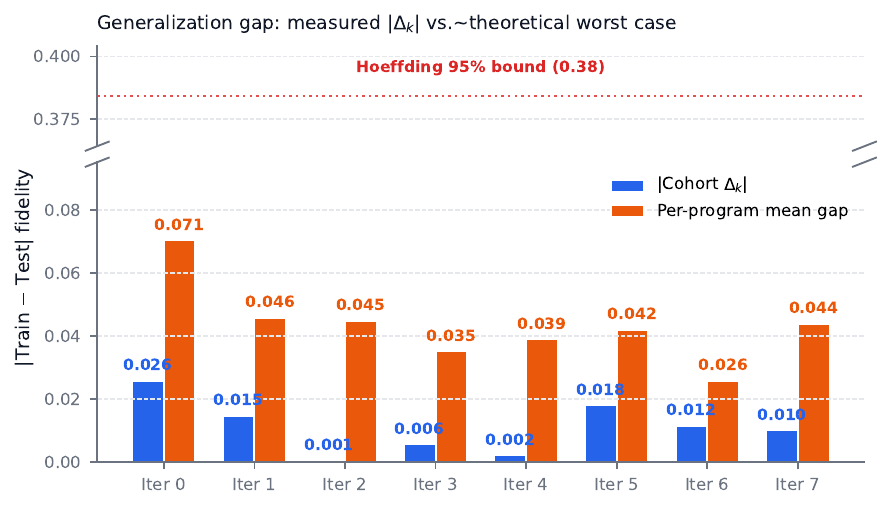}
  \end{minipage}
  \caption{\textbf{Left:} verdict decomposition (all 15 programs). Agree/contradict/coverage-gap sum to $1$ by construction. Contradictions shrink from $35\%$ to $13\%$; coverage gaps shrink from $27\%$ to $7\%$. \textbf{Right:} measured generalization gap $|\Delta_k|$ vs.\ the Hoeffding $95\%$ envelope. The measured gap is more than an order of magnitude below the theoretical worst case at every iteration, confirming the frozen-test protocol delivers its claimed guarantee.}
  \label{fig:brief-verdict}
\end{figure}

\begin{figure}[h]
  \centering
  \includegraphics[width=0.95\linewidth]{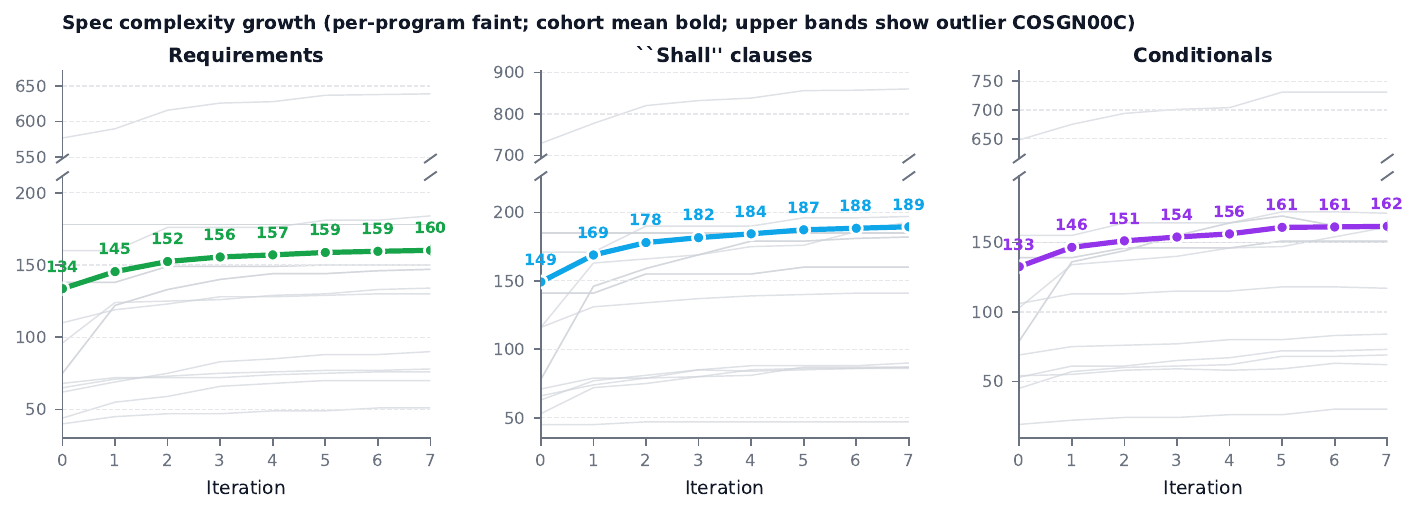}
  \caption{Spec-complexity growth across iterations. Per-program faint traces; cohort mean bold. Requirements, ``shall'' clauses, and conditionals all grow sub-linearly and plateau by iteration~3. Section-header count (not shown) is flat throughout: the revision operator adds content inside existing structural anchors rather than expanding the document scaffold.}
  \label{fig:brief-spec-complexity}
\end{figure}

\begin{figure}[h]
  \centering
  \includegraphics[width=0.95\linewidth]{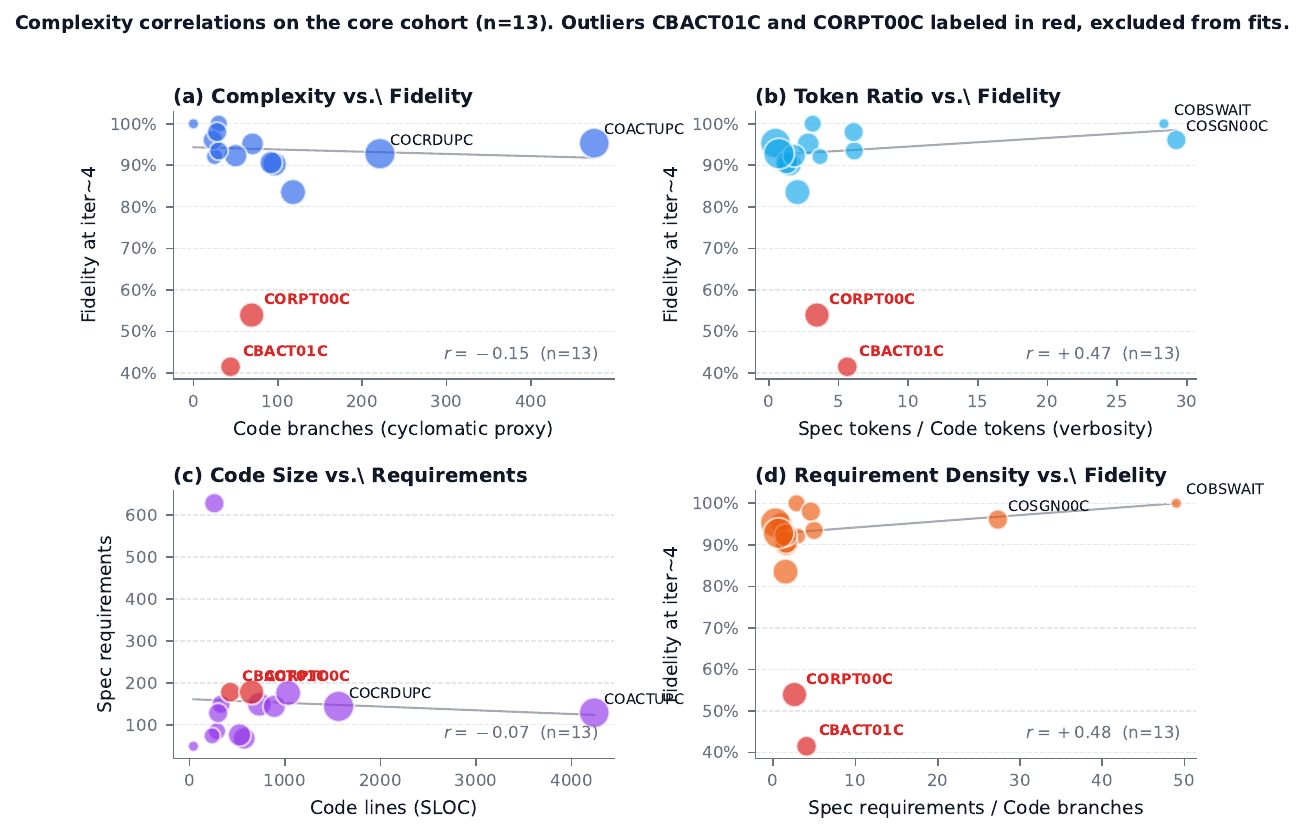}
  \caption{Structural complexity vs.\ fidelity on the 13-program core cohort (the two outliers \textsc{CBACT01C} and \textsc{CORPT00C} are labelled in red and excluded from the regression fits). Bubble area is proportional to probe count per program. \textbf{(a)} Raw code branches have no predictive power for fidelity. \textbf{(b, d)} Density ratios (spec/code verbosity; requirement coverage density) positively correlate with fidelity. \textbf{(c)} Code size does not dictate spec size: the revision operator's document-growth rate is driven by the number of disagreements per iteration, not by source-code scale.}
  \label{fig:brief-complexity-correlations}
\end{figure}

\section{Neurosymbolic-evaluation figures}
\label{app:brief-neurosymbolic}

This appendix carries the per-mixture dashboard and heatmap views for the three symbolic mixtures (Figures~\ref{fig:brief-dashboard-balanced-sym}--\ref{fig:brief-heatmap-pure-cfg}). All panels aggregate on the fixed 11-program intersection cohort; the pure-LLM reference is on the 13-program core cohort of Table~\ref{tab:brief-headline}. The all-mixture fidelity trajectory (Figure~\ref{fig:brief-neurosymbolic-trajectory}) is shown side-by-side with the generator-sweep trajectory in the body as Figure~\ref{fig:trajectories-side-by-side}; the cross-distribution transfer figure follows below.

\begin{table}[h]
  \centering
  \footnotesize
  \setlength{\tabcolsep}{4pt}
  \caption{Markov per-transition rates $\hat F^\dagger_k$ for the three symbolic mixtures on the fixed 11-program cohort (pure-LLM reference on 13-program core). $\hat r_k \le 0.13$ at every transition of every mixture, so the plateau is stable in expectation (Theorem~\ref{thm:fixedpoint} regime).}
  \label{tab:brief-neurosymbolic-pi-r}
  \begin{tabular}{lccccccc}
    \toprule
    Mixture                                                   & $\hat F^\dagger_{0\to 1}$ & $\hat F^\dagger_{1\to 2}$ & $\hat F^\dagger_{2\to 3}$ & $\hat F^\dagger_{3\to 4}$ & $\hat F^\dagger_{4\to 5}$ & $\hat F^\dagger_{5\to 6}$ & $\hat F^\dagger_{6\to 7}$ \\
    \midrule
    pure CFG ($\alpha{=}0$, $\beta_{\mathrm{cfg}}{=}1$)       & 0.774                     & 0.828                     & 0.643                     & 0.887                     & 0.969                     & 0.676                     & 0.948                     \\
    half LLM     ($\alpha{=}0.5$, $\beta_{\mathrm{cfg}}{=}1$) & 0.680                     & 0.614                     & 0.827                     & 0.824                     & 0.853                     & 0.786                     & 0.733                     \\
    balanced sym ($\alpha{=}0$, $\beta{=}(.3,.3,.3)$)         & \textbf{0.886}            & \textbf{0.852}            & \textbf{0.847}            & \textbf{0.854}            & \textbf{0.837}            & \textbf{0.829}            & \textbf{0.956}            \\
    \midrule
    pure LLM ($\alpha{=}1$, 13-prog core)                     & 0.912                     & 0.928                     & 0.937                     & 0.937                     & 0.940                     & 0.942                     & 0.940                     \\
    \bottomrule
  \end{tabular}
\end{table}

\begin{figure}[h]
  \centering
  \includegraphics[width=0.97\linewidth]{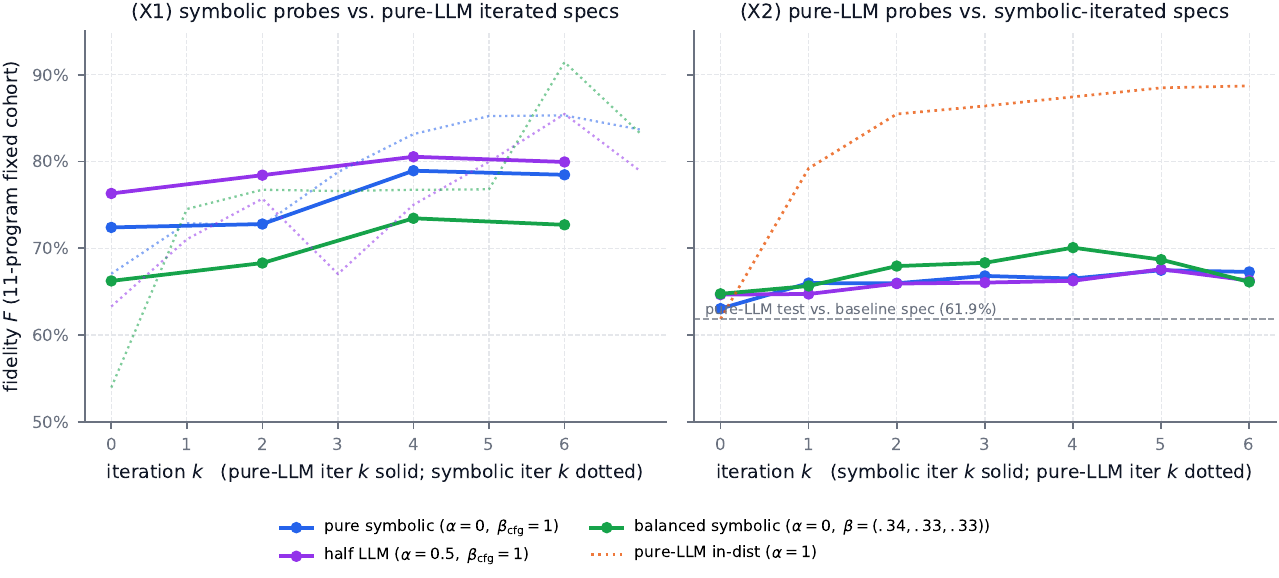}
  \caption{Cross-distribution evaluation. \textbf{Left:} symbolic test probes judged against pure-LLM--iterated specs (solid), with each mixture's in-distribution train trajectory dotted. LLM iteration \emph{incidentally} lifts symbolic-probe fidelity, but symbolic iteration reaches a higher asymptote on its own probes. \textbf{Right:} pure-LLM test probes judged against symbolic-iterated specs (solid) with the pure-LLM in-distribution trajectory dotted. Symbolic iteration lifts LLM-probe fidelity more modestly ($+2$ to $+5$~pp). The asymmetric but non-zero transfer is the empirical shape of the ``complementary channels'' claim: each catches behaviours the other partially misses.}
  \label{fig:brief-cross-distribution}
\end{figure}

\begin{figure}[h]
  \centering
  \includegraphics[width=0.97\linewidth]{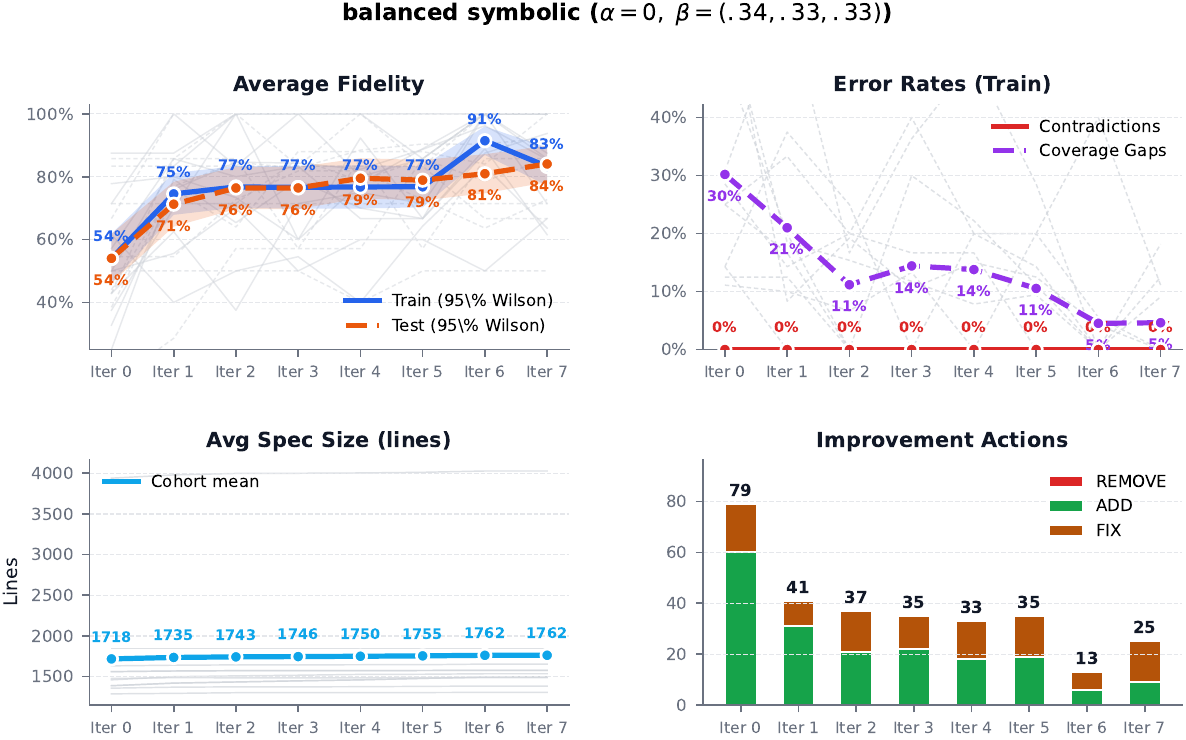}
  \caption{Balanced three-channel symbolic mixture ($\alpha{=}0$, $\beta = (0.34, 0.33, 0.33)$), same four-panel layout as Figure~\ref{fig:brief-dashboard}. Largest overall lift of the three symbolic mixtures ($+29$/$+30$~pp on the fixed 11-program cohort) because the three active channels cover DFG and SDG behaviours the CFG-only mixtures cannot probe; plateau breakout at iteration~6 reaches $(0.91, 0.84)$ before a mild iteration-7 pullback on train.}
  \label{fig:brief-dashboard-balanced-sym}
\end{figure}

\begin{figure}[h]
  \centering
  \includegraphics[width=0.97\linewidth]{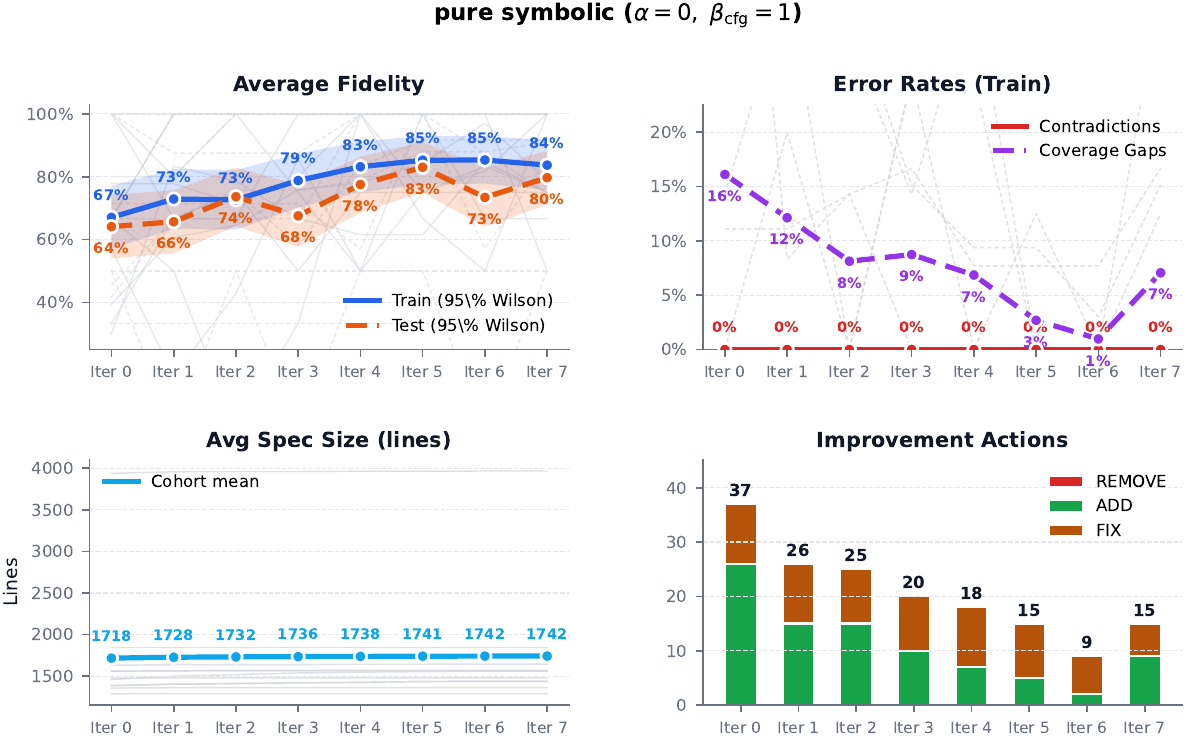}
  \caption{Pure-CFG symbolic mixture ($\alpha{=}0$, $\beta_{\mathrm{cfg}}{=}1$), same four-panel layout. Highest absolute training-sample fidelity of the three symbolic mixtures by iteration~4; the iteration-5 test-side dip ($0.83 \to 0.73$) visible in the top-left panel is a real one-step regression concentrated on three programs in the credit-card/bill-payment spec area, with iteration~6 recovering to $0.80$. The regression--recovery pair is the empirical instantiation of the Markov rate decomposition: individual transitions can have $\hat r_k > 0$, but aggregated $\hat\pi_k > \hat r_k$ keeps the plateau steady.}
  \label{fig:brief-dashboard-pure-cfg}
\end{figure}

\begin{figure}[h]
  \centering
  \includegraphics[width=0.97\linewidth]{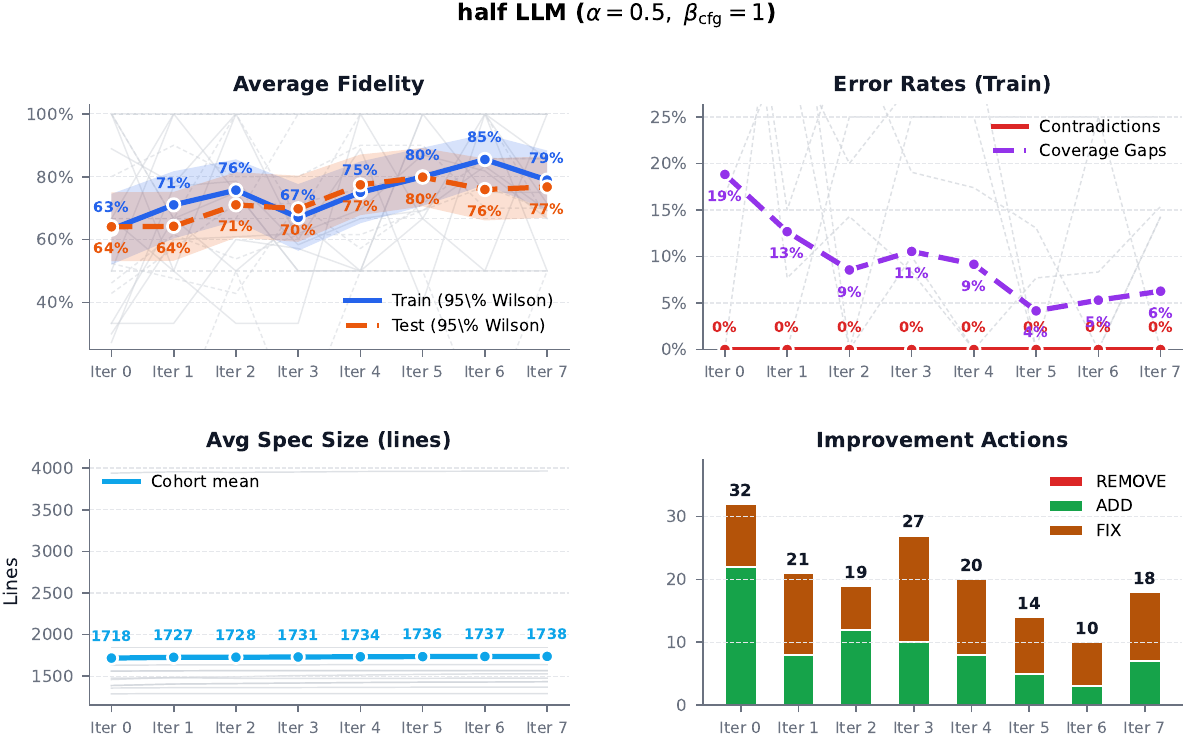}
  \caption{Half-LLM hybrid ($\alpha{=}0.5$, $\beta_{\mathrm{cfg}}{=}1$), same four-panel layout. Sits between pure-LLM and pure-CFG: $+16$/$+13$~pp total lift over eight iterations, reaching $(0.86, 0.77)$ at iteration~$6$ before a mild iteration-$7$ pullback. The hybrid confirms that mixed-$\alpha$ runs converge under the same theorems as the degenerate endpoints.}
  \label{fig:brief-dashboard-half-llm}
\end{figure}

\begin{figure}[h]
  \centering
  \includegraphics[width=0.85\linewidth]{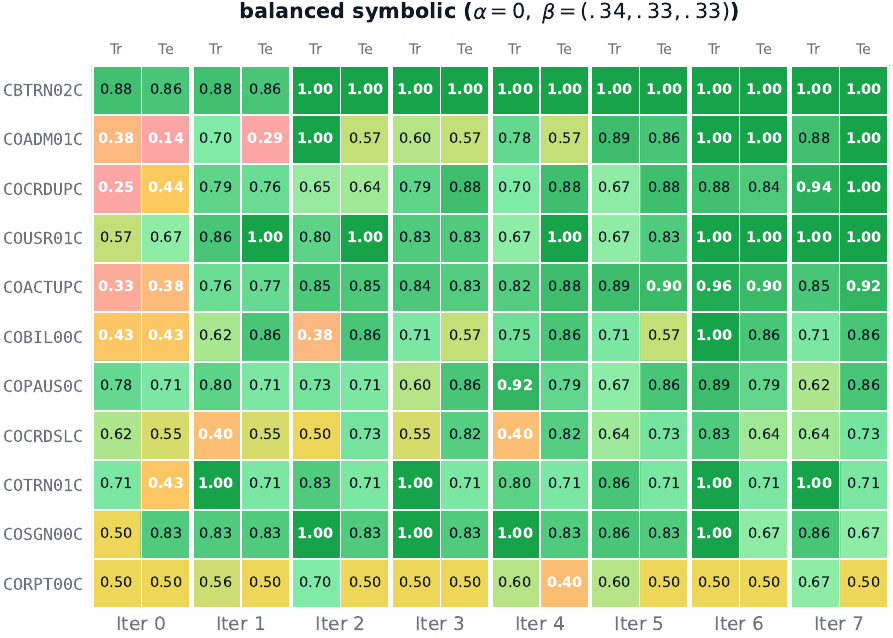}
  \caption{Per-program fidelity heatmap for the balanced three-channel symbolic mixture ($\alpha{=}0$, $\beta{=}(.34,.33,.33)$), train ($T_\text{train}^{(k)}$) and frozen test ($T_\text{test}$) side-by-side for every iteration. Programs are sorted by final test fidelity. Within each iteration the two adjacent cells for a program are near-identical, confirming the train/test symmetry claim continues to hold under graph-grounded probing. Four programs (\textsc{CBACT01C}, \textsc{COBSWAIT}, \textsc{COBTUPDT}, \textsc{CODATE01}) appear as blank rows because the Observability-Rule validator correctly rejects their candidate probes.}
  \label{fig:brief-heatmap-balanced-sym}
\end{figure}

\begin{figure}[h]
  \centering
  \includegraphics[width=0.85\linewidth]{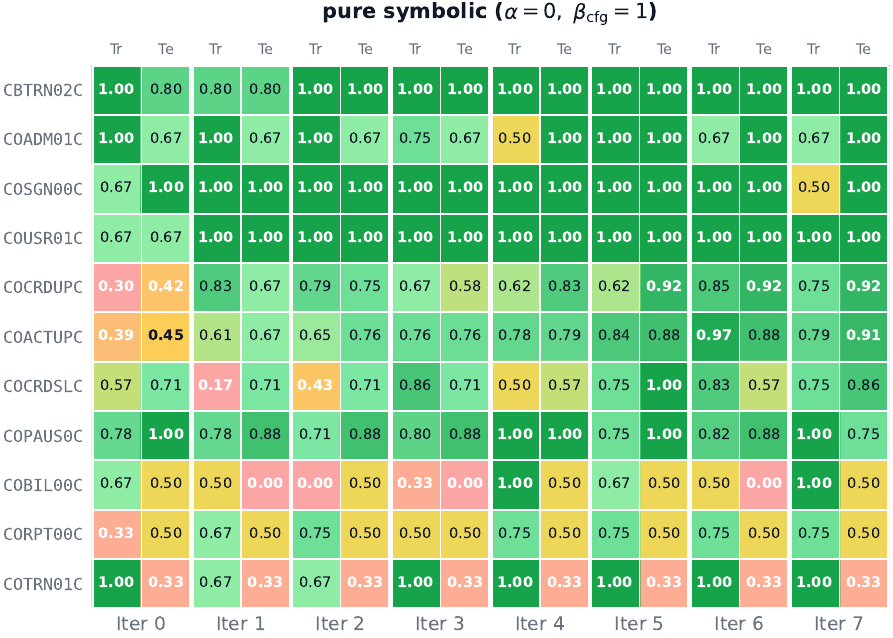}
  \caption{Per-program fidelity heatmap for the pure-CFG mixture ($\alpha{=}0$, $\beta_{\mathrm{cfg}}{=}1$). The iteration-$5$ test-side dip that surfaces in Figure~\ref{fig:brief-dashboard-pure-cfg} is visible here as three cells simultaneously cooling in the credit-card/bill-payment cluster (\textsc{COBIL00C}, \textsc{COCRDSLC}, \textsc{COPAUS0C}), with iteration-$6$ recovering \textsc{COBIL00C} and \textsc{COCRDSLC} back to the plateau.}
  \label{fig:brief-heatmap-pure-cfg}
\end{figure}

\begin{figure}[h]
  \centering
  \includegraphics[width=0.85\linewidth]{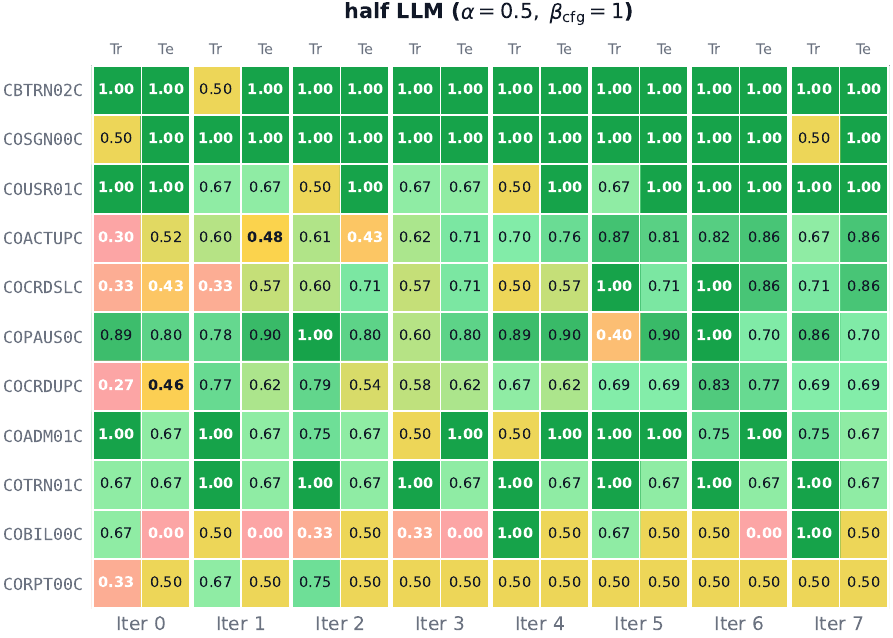}
  \caption{Per-program fidelity heatmap for the half-LLM hybrid mixture ($\alpha{=}0.5$, $\beta_{\mathrm{cfg}}{=}1$). Visually intermediate between the pure-LLM heatmap (Figure~\ref{fig:brief-heatmap}) and the pure-CFG heatmap (Figure~\ref{fig:brief-heatmap-pure-cfg}), as expected from $\alpha{=}0.5$. The train/test symmetry property holds cell-by-cell in the hybrid regime as well.}
  \label{fig:brief-heatmap-half-llm}
\end{figure}

\section{Prompt Templates}
\label{app:prompts}

We release the exact prompt templates used in all experiments. Each template is a single Markdown file; curly-brace placeholders (e.g.\ \texttt{\{source\_code\}}, \texttt{\{num\_questions\}}) are substituted at runtime. No per-program prompt tuning was performed.

\lstset{
  basicstyle=\ttfamily\footnotesize,
  breaklines=true,
  breakatwhitespace=false,
  columns=fullflexible,
  frame=single,
  framesep=3pt,
  xleftmargin=4pt,
  xrightmargin=4pt,
  keepspaces=true,
  showstringspaces=false,
  extendedchars=true,
  inputencoding=utf8,
  literate={—}{{---}}1 {→}{{$\to$}}1 {’}{{'}}1 {“}{{``}}1 {”}{{''}}1
           {é}{{\'{e}}}1 {á}{{\'{a}}}1 {í}{{\'{i}}}1 {ó}{{\'{o}}}1 {ú}{{\'{u}}}1
           {…}{{\ldots}}1 {–}{{--}}1 {•}{{\textbullet}}1 {≥}{{$\ge$}}1 {≤}{{$\le$}}1,
}

\subsection{Q\&A generation: \texttt{qa\_generation.md}}
\label{app:prompt:qagen}

Instantiates the probe generator $Q$ in the pure-LLM regime. The prompt constrains question surface form to business-domain vocabulary (blocking mainframe-platform terminology so probes remain valid against modernization specifications), enforces a category mix (precondition, computation, branching, guard, output, dependency, negative, boundary) with at least 15\% negative probes, and requests a structured JSON output.
\begin{lstlisting}[language={}]
You are a software behavior analyst. The source code below is LEGACY MAINFRAME CODE (COBOL/CICS). These questions will be used to verify whether a MODERNIZATION SPECIFICATION correctly captures the business behavior of this code. The spec is written for building a modern replacement application — it describes business rules and user workflows, not platform mechanics.

RULES:
- Questions must be about WHAT the program does from a business perspective, not HOW the legacy code is structured.
- A business analyst writing a modernization spec should be able to understand and answer the question.
- Focus on behaviors that MUST be preserved in a modernized system: validation rules, business logic, error handling visible to users, data transformations, and workflow navigation.
- Do NOT ask about line numbers, variable names, function call order, types, or syntax.
- Each answer must be derivable solely from the code's semantics.
- NEVER use implementation or platform terms in questions OR answers. This includes but is not limited to:
  Language constructs: PERFORM, MOVE, EVALUATE, INSPECT, STRING, UNSTRING, PIC, COMP-3, 88-level, WORKING-STORAGE, LINKAGE SECTION, paragraph names
  Middleware/platform: CICS, COMMAREA, EIBCALEN, DFHRESP, DFHCOMMAREA, EIBTRNID, BMS, SEND MAP, RECEIVE MAP, XCTL, CICS RETURN, SYNCPOINT, EXEC SQL, EXEC CICS
  File system: VSAM, KSDS, ESDS, QSAM, file status codes (e.g., "status 00"), READ INTO, WRITE FROM, DD names
  Terminal/UI: F1, F3, F4, F5, F7, F8, F12, PF keys, AID keys, ENTER key, 3270, BMS maps — these are legacy terminal keybindings, not business actions
  Internal names: any WS- prefix, any variable/copybook/paragraph name from the source
- Instead, use business-domain language:
  "XCTL to COMEN01C" → "navigates to the main menu"
  "SEND MAP" → "displays the screen"
  "SYNCPOINT" → "commits the changes"
  "low-values" → "cleared/empty"
  "COMMAREA" → "session data"
  "file status 23" → "record not found"
  "F3 pressed" → "the user requests to exit/go back"
  "F5 pressed" → "the user confirms the action"
  "F12 pressed" → "the user cancels the operation"
  "ENTER key" → "the user submits the form"
  "F7/F8" → "the user pages up/down"
- Focus on USER-VISIBLE behavior: what the user sees, what messages are displayed, what happens when buttons/keys are pressed, what fields are validated, what navigation occurs.
- Focus on BUSINESS RULES: validation logic, required vs optional fields, allowed values, error messages shown to users, confirmation flows.
- AVOID asking about:
  - Array/buffer sizes, record layouts, or data structure internals
  - Processing counters or internal accumulators not visible in output
  - The order in which internal data structures are populated
  - Internal variable initialization sequences
  - Program startup banners, execution-complete messages, or diagnostic log output (e.g., "END OF EXECUTION OF PROGRAM X") — these are infrastructure artifacts, not business behavior
  - Internal file open/close sequences or the order queues are initialized
  - Return codes, condition codes, or abend codes
- DO ask about:
  - For ONLINE programs: what happens when the user exits, submits, confirms, cancels, or pages through data; what validation is performed on each input field; what messages are shown for invalid input; what data is shown on screen; what navigation occurs after each action; required vs optional fields
  - For BATCH programs: what input is consumed, what output is produced, what records are processed, what happens when input is invalid or missing, what summary or report is generated
  - For ALL programs: what happens when a record is not found or already exists, what error handling is visible to the user or in output, what business rules govern processing

CATEGORIES (generate at least one question per applicable category):
- precondition: Under what conditions does behavior X occur?
- computation: How are values derived?
- branching: What varies by condition?
- guard: What prevents or forces behavior?
- output: What data is produced?
- dependency: Do values influence each other?
- negative: Things the code does NOT do (at least 15% of questions must be negative probes)
- boundary: Behavior at domain edges

DIFFICULTY LEVELS:
- easy: Single condition, direct read
- medium: Multiple conditions, requires combining rules
- hard: Cross-function data flow, nested conditions, reasoning about interactions

TARGET: Generate approximately {num_questions} Q&A pairs distributed across categories.

SOURCE CODE:
{source_code}

Respond with a JSON array. Each element:
{
  "id": "q1",
  "question": "...",
  "answer": "...",
  "category": "precondition|computation|branching|guard|output|dependency|negative|boundary",
  "difficulty": "easy|medium|hard",
  "grounding": "Brief description of which code behavior this derives from"
}
\end{lstlisting}

\subsection{Probe sizing: \texttt{qa\_sizing.md}}
\label{app:prompt:qasize}

Invoked once per program to choose $n \in [30, 150]$ based on program-complexity signals (branches, cyclomatic-like structure, business-rule surface area). The returned count is passed as \texttt{\{num\_questions\}} to the Q\&A generation prompt above.
\begin{lstlisting}[language={}]
You are a software complexity analyst. Given the source code below, estimate how many behavioral Q&A pairs should be generated to adequately test a specification's coverage of this code.

Consider:
- Number of distinct behaviors, branches, and conditions
- Code complexity (nesting depth, number of functions/paragraphs)
- Number of input/output variables
- Edge cases and boundary conditions
- Number of user-visible screens, keys, fields, and validation rules
- Number of error handling paths and error messages

Aim for thorough coverage: it is better to generate too many questions than too few.

Return ONLY a JSON object: {"num_questions": <integer>}
The number should be between 30 and 150.

SOURCE CODE:
{source_code}
\end{lstlisting}

\subsection{Probe validation: \texttt{qa\_validation.md}}
\label{app:prompt:qaval}

A lightweight consistency check on generated probes: the answer is verifiable from source alone, the question avoids implementation leakage, and the category/difficulty labels are consistent. Probes failing this check are discarded before the judge step.
\begin{lstlisting}[language={}]
You are an adversarial reviewer for Q&A pairs generated from source code. Your job is to find problems.

For each Q&A pair, check:
1. Is the question about BEHAVIOR (good) or IMPLEMENTATION STRUCTURE (bad)?
   - Bad: mentions line numbers, variable names, function call order, types, syntax
2. Does the question or answer leak IMPLEMENTATION or PLATFORM terms?
   - Bad: COBOL keywords (PERFORM, MOVE, EVALUATE, INSPECT, PIC, COMP-3, 88-level, WORKING-STORAGE, LINKAGE)
   - Bad: CICS/middleware terms (COMMAREA, EIBCALEN, DFHRESP, BMS, SEND MAP, RECEIVE MAP, XCTL, SYNCPOINT, EXEC SQL, EXEC CICS)
   - Bad: file system internals (VSAM, KSDS, file status codes, READ INTO, DD names)
   - Bad: terminal/UI keybindings (F1, F3, F4, F5, F7, F8, F12, PF keys, AID keys, ENTER key, 3270)
     Instead use: "exit/go back", "confirm", "cancel", "submit", "page up/down"
   - Bad: internal names (WS- prefixed variables, paragraph names, copybook names)
   - Mark as invalid if any of these appear. The suggested_fix should rephrase using business-domain language.
3. Is the question about INFRASTRUCTURE rather than BUSINESS behavior?
   - Bad: program startup/shutdown banners ("END OF EXECUTION OF..."), diagnostic log messages, return codes, abend codes
   - Bad: internal file open/close sequences, queue initialization order, resource cleanup
   - Bad: questions whose answer is only observable in system logs, not in business output or user-facing screens
   - These are infrastructure artifacts, not behaviors a specification would describe. Mark as invalid.
4. Is the answer CORRECT given the source code?
5. Is the question AMBIGUOUS — could it reasonably be answered differently?
6. Is the answer PRECISE — or does it omit important conditions/qualifications?

SOURCE CODE:
{source_code}

Q&A PAIRS:
{qa_pairs_json}

Respond with a JSON array. Each element:
{
  "qa_id": "q1",
  "valid": true|false,
  "issue": "null or description of the problem",
  "suggested_fix": "null or corrected question/answer"
}
\end{lstlisting}

\subsection{Judge: \texttt{judge.md}}
\label{app:prompt:judge}

Realises $J(B, q)$. Given a probe $(q, y)$ and the current specification $B$, returns $(\hat y, e, c)$ where $\hat y \in \mathcal{Y} \cup \{\bot\}$, $e$ is the cited evidence, and $c \in \{\text{confirmed}, \text{contradicted}, \text{not\_addressed}\}$.
\begin{lstlisting}[language={}]
You are evaluating a specification. You will be given a question and you must answer it using ONLY the specification text below. Do NOT use any external knowledge or common sense.

If the specification contains multiple sections or programs, find the section most relevant to the question before answering. Do not combine information from unrelated sections.

SPECIFICATION:
{specification}

QUESTION:
{question}

Answer in this exact JSON format:
{
  "answer": "Your answer based solely on the specification",
  "evidence": "Quote or reference the specific part of the spec that supports your answer",
  "confidence": "supported|uncertain"
}

Confidence meanings:
- "supported": The spec contains enough information to answer this question
- "uncertain": The spec doesn't address this question, or is too ambiguous to answer
\end{lstlisting}

\subsection{Action comparator: \texttt{comparator.md}}
\label{app:prompt:comparator}

Realises the answer-equivalence relation $\equiv$ used inside the verdict function. Returns \emph{equivalent / contradictory / unrelated} for ground-truth $y$ versus judge $\hat y$, accommodating phrasing differences without loosening semantic intent.
\begin{lstlisting}[language={}]
You compare two answers to the same question and determine if they agree on the BUSINESS BEHAVIOR being described. Answer A comes from legacy mainframe source code. Answer B comes from a modernization specification written for building a modern replacement.

They don't need to be word-for-word identical — just describe the same business outcome.

AGREE when:
- Both describe the same business behavior, even if Answer A uses legacy-specific terms (queue names, file operations, screen maps) and Answer B uses modern/abstract terms (service calls, data stores, user interface). The modernized description of the same behavior is agreement.
- One answer gives a specific value (an exact error message, a numeric threshold) and the other describes the behavior in general terms ("displays an error", "validates the limit").
- One answer includes extra business context or justification that the other omits.

DISAGREE only when:
- The answers describe DIFFERENT business outcomes (e.g., "saves the record" vs "discards the record").
- The answers give INCOMPATIBLE values (e.g., "maximum 10" vs "maximum 20").
- One answer affirms a behavior the other explicitly denies.

Respond with JSON: {"agrees": true} or {"agrees": false}
\end{lstlisting}

\subsection{Revision: \texttt{revision.md}}
\label{app:prompt:rev}

Realises $R$ (Assumption~\ref{asm:nonreg}). Conditioned on $B_k$ and the structured action set $\{(\alpha_i, \rho_i, \gamma_i, \epsilon_i)\}$, it produces $B_{k+1}$ via anchor-local edits rather than full regeneration. Anchor addressability $\rho_i$ is what makes the non-regression property empirically tractable.
\begin{lstlisting}[language={}]
You are a specification editor. You will be given:
1. An existing requirements specification
2. A list of improvement items derived from comparing the spec against the actual source code

Your job is to produce a REVISED specification that addresses the improvement items while preserving the overall structure, style, and requirement ID scheme of the original.

RULES:
- Preserve all existing requirement IDs (REQ-XXX-NNN). Do not renumber them.
- For FIX items: update the requirement text to match what the code actually does. Keep the same REQ ID.
- For ADD items: add new requirements using the next available ID in the appropriate section. Place them in the most relevant section.
- For REMOVE items: either delete the claim or add a qualifier like "[NOTE: not implemented in legacy system]" if the behavior is aspirational.
- Do NOT change requirements that are not listed in the improvement items.
- Preserve the markdown structure: headings, user stories, global preconditions, etc.
- When adding specific values (error messages, field names, thresholds), include them in the requirement text.
- Mark added or modified requirements with a trailing comment: <!-- REVISED: reason -->

EXISTING SPECIFICATION:
{spec_text}

IMPROVEMENT ITEMS:
{improvements_text}

Produce the complete revised specification. Include ALL sections from the original, not just the changed parts.
\end{lstlisting}

\end{document}